\newenvironment{proof}[1][Proof]{\begin{trivlist}
\item[\hskip \labelsep {\bfseries #1}]}{\end{trivlist}}
\newcommand{\qed}{\hfill \ensuremath{\Box}}
\title{ \bf
An Incremental Sampling-based Algorithm for Stochastic Optimal Control
}
 \author{Vu Anh Huynh \and Sertac Karaman \and  Emilio Frazzoli
 \thanks{The authors are with the Laboratory of Information and Decision Systems,  Massachusetts Institute of Technology, 77 Massachusetts Ave., Cambridge, MA 02139. {\tt\small $\lbrace\tt vuhuynh,sertac,frazzoli \rbrace$@mit.edu}}
 }
\date{ }
\newcommand{\reals}{\ensuremath{\mathbb{R}}}
\newcommand{\probmeasure}{\ensuremath{\mathbb{P}}}
\newcommand{\naturals}{\ensuremath{\mathbb{N}}}
\newcommand{\argmin}{\ensuremath{\text{argmin}}}
\newcommand{\integers}{\ensuremath{\mathbb{Z}}}
\newcommand{\PP}{\ensuremath{\mathbb{P}}}
\newcommand{\EE}{\ensuremath{\mathbb{E}}}
\newcommand{\Cov}{\ensuremath{\mathrm{Cov}}}
\newcommand{\given}{\ensuremath{\,\vert\,}}
\newcommand{\FirstExit}[1]{{\ensuremath{T_{#1}}}}
\newcommand{\CostToGo}[2]{{\ensuremath{J_{#2}(#1)}}}
\newcommand{\OptimalCostToGo}[1]{{\ensuremath{J^*(#1)}}}
\newcommand{\DummyS}{Y}
\newcommand{\dispersion}{\zeta}
\newcommand{\dummyrate}{\varsigma}
\newcommand{\dummyasyn}{\theta}
\definecolor{darkgreen}{rgb}{0,0.5,0} 
\definecolor{fullred}{rgb}{0.85,.0,.1} 
\definecolor{brown}{rgb}{0.65,0.16,0.16} \newcommand{\br}{\color{brown}}
\newcommand{\notesk}[1]{\marginpar{\tiny\noindent{\raggedright\color{blue} {[SK]}\br{ #1}} \par}}\renewcommand{\notesk}[1]{}
\newcommand{\todofootnote}[1]{{\let\thefootnote\relax\footnotetext{#1}}}
\newtheorem{theorem}{Theorem}
\newtheorem{lemma}[theorem]{Lemma}
\begin{document}

\maketitle

\begin{abstract}
In this paper, we consider a class of continuous-time, continuous-space stochastic optimal control problems. Building upon recent advances in Markov chain approximation methods and sampling-based algorithms for deterministic path planning,
we propose a novel algorithm called the incremental Markov Decision Process (iMDP) to compute incrementally control policies that approximate arbitrarily well an optimal policy in terms of the expected cost.
The main idea behind the algorithm is to generate a sequence of finite discretizations of the original problem through random sampling of the state space. At each iteration, the discretized problem is a Markov Decision Process that serves as an incrementally refined model of the original problem. 
We show that with probability one, (i) the sequence of the optimal value functions for each of the discretized problems converges uniformly to the optimal value function of the original stochastic optimal control problem, and (ii) the original optimal value function can be computed efficiently in an incremental manner using asynchronous value iterations. Thus, the proposed algorithm provides an anytime approach to the computation of optimal control policies of the continuous problem. The effectiveness of the proposed approach is demonstrated on motion planning and control problems in cluttered environments in the presence of process noise.

\end{abstract}

\section{Introduction}

Stochastic optimal control has been an active research area for several decades with many applications in diverse fields ranging from finance, management science and economics~\cite{fleming.stein.jour_bank_finance04,sethi.thompson.book06} to biology~\cite{todorov.neural_comp05} and robotics~\cite{thrun2001}. 
Unfortunately, general continuous-time, continuous-space stochastic optimal control problems do not admit closed-form or exact algorithmic solutions and are known to be computationally challenging~\cite{Vincent.Tsitsiklis.Complexity.2000}. 
Many algorithms are available to compute approximate solutions of such problems. For instance, a popular approach is based on the numerical solution of the associated Hamilton-Jacobi-Bellman PDE  (see, e.g.,~\cite{Lars.FiniteDifference.HJB.1997,Wang:2003:NSH:859448.859457,Boulbrachene.FiniteEllement.HJB.2004}). Other methods approximate a continuous problem with a discrete Markov Decision Process (MDP), for which an exact solution can be computed in finite time~\cite{chow.tsitsiklis.autocontrol.1991,munos.machinelearning.2001}. However, the complexity of these two classes of deterministic algorithms scales exponentially with the dimension of the state and control spaces, due to discretization. 
Remarkably, algorithms based on random (or quasi-random) sampling of the state space provide a possibility to alleviate the curse of dimensionality in the case in which the control inputs take values from a finite set, as noted in~\cite{Rust-97,Rust:97b, Vincent.Tsitsiklis.Complexity.2000}. 

Algorithms based on random sampling of the state space have recently been shown to be very effective, both in theory and in practice, for computing solutions to deterministic path planning problems in robotics and other disciplines. For example, the Probabilistic RoadMap (PRM) algorithm first proposed by Kavraki et al.~\cite{Kavraki96probabilisticroadmaps} was the first practical planning algorithm that could handle high-dimensional path planning problems. Their incremental counterparts, such as RRT~\cite{Lavalle.98}, later emerged as sampling-based algorithms suited for online applications and systems with differential constraints on the solution (e.g., dynamical systems). The RRT algorithm has been used in many applications and demonstrated on various robotic platforms~\cite{DBLP:conf/icra/KimO03,Kuwata.Teo.ea:TCST09}.
Recently, optimality properties of such algorithms were analyzed in~\cite{Karaman.Frazzoli:IJRR10}. In particular, it was shown that the RRT algorithm fails to converge to optimal solutions with probability one. The authors have proposed the RRT$^*$ algorithm which guarantees almost-sure convergence to globally optimal solutions without any substantial computational overhead when compared to the RRT. 

Although the RRT$^*$ algorithm is asymptotically optimal and computationally efficient (with respect to RRT), it can not handle problems involving systems with uncertain dynamics. In this work, building upon the Markov chain approximation method~\cite{Kushner2000} and the rapidly-exploring sampling technique~\cite{Lavalle.98}, we introduce a novel algorithm called the incremental Markov Decision Process (iMDP) to approximately solve a wide class of stochastic optimal control problems. More precisely, we consider a continuous-time optimal control problem with continuous state and control spaces, full state information, and stochastic process noise. 
In iMDP, we iteratively construct a sequence of discrete Markov Decision Processes (MDPs) as discrete approximations to the original continuous  problem,  as follows. 
Initially, an empty MDP model is created.
At each iteration, the discrete MDP is refined by adding new states sampled from the boundary as well as from the interior of the state space. Subsequently, new stochastic transitions are constructed to connect the new states to those already in the model. For the sake of efficiency, stochastic transitions are computed only when needed. Then, an anytime policy for the refined model is computed using an incremental value iteration algorithm, based on the value function of the previous model. The policy for the discrete system is finally converted to a policy for the original continuous problem. This process is iterated until convergence.

Our work is mostly related to the Stochastic Motion Roadmap (SMR) algorithm~\cite{alterovitz.simeon.ea.robotics.2008} and Markov chain approximation methods~\cite{Kushner2000}. The SMR algorithm constructs an MDP over a sampling-based roadmap representation to maximize the probability of reaching a given goal region. However, in SMR, actions are discretized, and the algorithm does not offer any formal optimality guarantees. On the other hand, while available Markov chain approximation methods~\cite{Kushner2000} provide formal optimality guarantees under very general conditions, a sequence of \textit{a priori} discretizations of state and control spaces still impose expensive computation. The iMDP algorithm addresses this issue by sampling in the state space and sampling or discovering necessary controls.

The main contribution of this paper is  a method to incrementally refine a discrete model of the original continuous problem in a way that ensures convergence to optimality while maintaining low time and space complexity. We show that with probability one, the sequence of optimal value functions induced by optimal control policies for each of the discretized problems converges uniformly to the optimal value function of the original stochastic control problem. In addition, the optimal value function of the original problem can be computed efficiently in an incremental manner using asynchronous value iterations. Thus, the proposed algorithm provides an anytime approach to the computation of optimal control policies of the continuous problem. Distributions of approximating trajectories and control processes returned from the iMDP algorithm approximate arbitrarily well distributions of optimal trajectories and optimal control processes of the original problem. Each iteration of the iMDP algorithm can be implemented with the time complexity $O(k^{\theta}\log{k})$ where $0< \theta \leq 1$ while the space complexity is $O(k)$, where $k$ is the number of states in an MDP model in the algorithm which increases linearly due to the sampling strategy. Thus, the entire processing time until the algorithm stops can be implemented in $O(k^{1+\theta}\log{k})$. Hence, the above space and time complexities make iMDP a practical incremental algorithm. The effectiveness of the proposed approach is demonstrated on motion planning and control problems in cluttered environments in the presence of process noise.

This paper is organized as follows. In Section \ref{section:problem}, a formal problem definition is given. The Markov chain approximation methods and the iMDP algorithm are described in Sections~\ref{section:algorithm:cmc} and ~\ref{section:algorithm}. The analysis of the iMDP algorithm is presented in Section~\ref{section:analysis}. Section~\ref{section:experiments} is devoted to simulation examples and experimental results. The paper is concluded with remarks in Section~\ref{secConclusions}. We provide additional notations and preliminary results as well as proofs for theorems and lemmas in Appendix.

\section{Problem Definition} \label{section:problem}

In this section, we present a generic stochastic optimal control problem. Subsequently, we discuss how the formulation extends the standard motion planning problem of reaching a goal region while avoiding collision with obstacles.

\paragraph*{Stochastic Dynamics}

Let $d_x$, $d_u$, and $d_w$ be positive integers. The $d_x$-dimensional and $d_u$-dimensional Euclidean spaces are $\reals^{d_x}$ and $\reals^{d_u}$ respectively. Let $S$ be a compact subset  of $\reals^{d_x}$, which is the closure of its interior $S^o$ and has a smooth boundary $\partial S$. The state of the system at time $t$ is $x(t) \in S$, which is fully observable at all times. We also define a compact subset ${U}$ of $\reals^{d_u}$ as a control set.

Suppose that a stochastic process $\{w(t); t \ge 0\}$ is a $d_w$-dimensional Brownian motion, also called a Wiener process, on some probability space $(\Omega,\mathcal{F},\mathcal{P})$. Let a control process $\{u(t) ; t \ge 0\}$ be a $U$-valued, measurable process also defined on the same probability space. We say that the control process $u(\cdot)$ is nonanticipative with respect to the Wiener process $w(\cdot)$ if there exists a filtration $\{\mathcal{F}_t ; t \ge 0\}$ defined on $(\Omega,\mathcal{F},\mathcal{P})$ such that $u(\cdot)$ is $\mathcal{F}_t$-adapted, and $w(\cdot)$ is an $\mathcal{F}_t$-Wiener process. In this case, we say that $u(\cdot)$ is an admissible control inputs with respect to $w(\cdot)$, or the pair $(u(\cdot),w(\cdot))$ is admissible. Let $\reals^{d_x \times d_w}$ denote the set of all $d_x$ by $d_w$ real matrices. We consider stochastic dynamical systems, also called controlled diffusions, of the form
\begin{align}
dx(t) = f(x(t),u(t)) \, dt + F(x(t),u(t)) \, dw(t), \ \ \forall t \ge 0 \label{eqn:system}
\end{align}
where $f: S \times {U} \to \reals^{d_x}$ and $F : S \times U \to \reals^{d_x \times d_w}$ are bounded measurable and continuous functions as long as $x(t) \in S^o$. The matrix $F(\cdot,\cdot)$ is assumed to have full rank. More precisely, a solution to the differential form given in Eq.~\eqref{eqn:system} is a stochastic process  $\{x(t); t \ge 0\}$ such that $x(t)$ equals the following stochastic integral in all sample paths:
\begin{eqnarray}
x(t)=x(0) + \int_0^t f(x(\tau),u(\tau))\, d\tau + \int_0^t F(x(\tau),u(\tau)) dw(\tau), \label{eqn:systemIto} 
\end{eqnarray}
until $x(\cdot)$ exits $S^o$, where the last term on the right hand side is the usual It\^o integral (see, e.g.,~\cite{Oksendal:1992}). When the process $x(\cdot)$ hits $\partial S$, the process $x(\cdot)$ is stopped.

\paragraph*{Weak Existence and Weak Uniqueness of Solutions}
Let $\Gamma$ be the sample path space of admissible pairs $(u(\cdot),w(\cdot))$. Suppose we are given probability measures $\Lambda$ and $P_0$ on $\Gamma$ and on $S$ respectively. We say that solutions of \eqref{eqn:systemIto} exist in the weak sense if there exists a probability space $(\Omega,\mathcal{F},\mathcal{P})$, a filtration $\{\mathcal{F}_t ; t \ge 0\}$, an $\mathcal{F}_t$-Wiener process $w(\cdot)$, an $\mathcal{F}_t$-adapted control process $u(\cdot)$, and an $\mathcal{F}_t$-adapted process $x(\cdot)$ satisfying Eq.~\eqref{eqn:systemIto}, such that  $\Lambda$ and $P_0$ are the distributions of $(u(\cdot),w(\cdot))$ and $x(0)$ under $\mathcal{P}$. We call such tuple $\{ (\Omega,\mathcal{F},\mathcal{P}),\mathcal{F}_t,w(\cdot),u(\cdot),x(\cdot) \}$ a weak sense solution of Eq.~\eqref{eqn:system}~\cite{Karatzas1991, Kushner2000}.

Assume that we are given weak sense solutions
$
\{ (\Omega_i,\mathcal{F}_i,\mathcal{P}_i),\mathcal{F}_{t,i},w_i(\cdot),u_i(\cdot),x_i(\cdot) \}, i=1,2,
$
to Eq.~\eqref{eqn:system}. We say solutions are weakly unique if equality of the joint distributions of $(w_i(\cdot),u_i(\cdot),x_i(0))$ under $\mathcal{P}_i$, $i=1,2$, implies the equality of the distributions $(x_i(\cdot),w_i(\cdot),u_i(\cdot),x_i(0))$ under $\mathcal{P}_i$, $i=1,2$~\cite{Karatzas1991, Kushner2000}.

In this paper, given the boundedness of the set $S$, and the definition of the functions $f$ and $F$ in Eq.~\eqref{eqn:system}, we have a weak solution to Eq.~\eqref{eqn:system} that is unique in the weak sense~\cite{Karatzas1991}. The boundedness requirement is naturally satisfied in many applications and is also needed for the implementation of the proposed numerical method. We will also handle the case in which $f$ and $F$ are discontinuous with extra mild technical assumptions to ensure asymptotic optimality in Section \ref{section:algorithm:cmc}.

\paragraph*{Policy and Cost-to-go Function}
A particular class of admissible controls, called {\em Markov controls}, depends only on the current state, i.e., $u(t)$ is a function only of $x(t)$, for all $t \ge 0$. It is well known that in control problems with full state information, the best Markov control performs as well as the best admissible control (see, e.g.,~\cite{Oksendal:1992,Karatzas1991}).
A Markov control defined on $S$ is also called a {\em policy}, and is represented by the function $\mu : {S} \to {U}$. The set of all policies is denoted by $\Pi$. Define the {\em first exit time} $\FirstExit{\mu}: \Pi \rightarrow [0,+\infty]$ under policy $\mu$ as 
$$
\FirstExit{\mu} = \inf\big\{ t : x(t) \notin {S^o} \mbox{ and Eq.~\eqref{eqn:system} and } u(t)=\mu(x(t)) \big\}.
$$
Intuitively, $\FirstExit{\mu}$ is the first time that the trajectory of the dynamical system given by Eq.~\eqref{eqn:system} with $u(t) = \mu(x(t))$ hits the boundary $\partial S$ of $S$. By definition, $\FirstExit{\mu}=+\infty$ if $x(\cdot)$ never exits $S^o$. Clearly, $\FirstExit{\mu}$ is a random variable.
Then, the expected cost-to-go function under policy $\mu$ is a mapping from $S$ to $\reals$ defined as
$$
\CostToGo{z}{\mu} = {\EE} \left[\int_0^\FirstExit{\mu} \alpha^t \, g\big(x(t), \mu(x(t))\big)\,dt + h(x(\FirstExit{\mu})) \ \vert \ x(0)=z \right],
$$
where $g : S \times U \to \reals$ and $h : S \to \reals$ are bounded measurable and continuous functions, called the {\em cost rate function} and the {\em terminal cost function}, respectively, and $\alpha \in [0,1)$ is the {\em discount rate}. We further assume that $g(x,u)$ is uniformly H\"{o}lder continuous in $x$ with exponent $2\rho \in (0,1]$ for all $u \in U$. That is, there exists some constant $\mathcal{C} >0$ such that
$$
|g(x,u) - g(x',u)| \leq \mathcal{C}||x-x'||^{2\rho}_{2}, \ \ \forall x,x' \in S.
$$
We will address the discontinuity of $g$ and $h$ in Section \ref{section:algorithm:cmc}. 

The {\em optimal cost-to-go function} $J^*: S \to \reals$ is defined as $\OptimalCostToGo{z} = \inf_{\mu \in \Pi} \CostToGo{z}{\mu}$ for all $z \in S$. A policy $\mu^*$ is called optimal if $J_{\mu^*}=J^*$. For any $\epsilon >0$, a policy $\mu$ is called an $\epsilon$-optimal policy if $||J_{\mu}-J^*||_{\infty} \leq \epsilon$.

In this paper, we consider the problem of computing the optimal cost-to-go function $J^*$ and an optimal policy $\mu^*$ if obtainable. Our approach, outlined in Section~\ref{section:algorithm}, approximates the optimal cost-to-go function and an optimal policy in an anytime fashion using incremental sampling-based algorithms. This sequence of approximations is guaranteed to converge uniformly to the optimal cost-to-go function and to find an $\epsilon$-optimal policy for an arbitrarily small non-negative $\epsilon$, almost surely, as the number of samples approaches infinity.

\paragraph*{Relationship with Standard Motion Planning}
The standard motion planning problem of finding a collision-free trajectory that reaches a goal region for a deterministic dynamical system can be defined as follows (see, e.g.,~\cite{Karaman.Frazzoli:IJRR10}). Let ${\cal X} \subset \reals^{d_x}$ be a compact set. Let the open sets ${\cal X}_\mathrm{obs}$ and ${\cal X}_\mathrm{goal}$ denote the obstacle region and the goal region, respectively. Define the obstacle-free space as ${\cal X}_\mathrm{free} := {\cal X} \setminus {\cal X}_\mathrm{obs}$. Let $x_\mathrm{init} \in {\cal X}_\mathrm{free}$. Consider the deterministic dynamical system $\dot{x} = f(x(t), u(t)) \,dt$, where $f : {\cal X} \times U \to \reals^{d_x}$. The {\em feasible motion planning problem} is to find a measurable control input $u : [0,T] \to {U}$ 
such that the resulting trajectory $x(t)$ is collision free
, i.e., $x(t) \in {\cal X}_\mathrm{free}$ and reaches the goal region, i.e., $x(T) \in {\cal X}_\mathrm{goal}$. The {\em optimal motion planning} problem is to find a measurable control input $u$ such that the resulting trajectory $x$ solves the feasible motion planning problem with minium trajectory cost.

The problem considered in this paper extends the classical motion planning problem with stochastic dynamics as described by Eq.~\eqref{eqn:system}. 
Given a goal set ${\cal X}_\mathrm{goal}$ and an obstacle set ${\cal X}_\mathrm{obs}$, define  $S:= {\cal X} \setminus ({\cal X}_\mathrm{goal} \cup {\cal X}_\mathrm{obs})$ and thus $\partial {\cal X}_\mathrm{goal} \cup \partial {\cal X}_\mathrm{obs} \cup \partial {\cal X} = \partial{S}$.
Due to the nature of Brownian motion, under most policies, there is some non-zero probability that collision with an obstacle set will occur. 
However, to penalize collision with obstacles in the control design process, the cost of terminating by hitting the obstacle set, i.e., $h(z)$ for $z \in \partial {\cal X}_\mathrm{obs}$, can be made arbitrarily high. Clearly, the higher this number is, the more conservative the resulting policy will be. Similarly, the terminal cost function on the goal set, i.e., $h(z)$ for $z \in \partial {\cal X}_\mathrm{goal}$, can be set to a small value to encourage terminating by hitting the goal region.

\section{Markov Chain Approximation} \label{section:algorithm:cmc}

A discrete-state Markov decision process (MDP) is a tuple ${\cal M} = (X,A,P,G,H)$ where $X$ is a finite set of states, $A$ is a set of actions that is possibly a continuous space, $P(\cdot \given \cdot,\cdot): X \times X \times A \to \reals_{\ge 0}$ is a function that denotes the transition probabilities satisfying $\sum_{\xi' \in X} P(\xi'\given \xi, v) = 1$ for all $\xi \in X$ and all $v \in A$, $G(\cdot,\cdot) : X \times A \to \reals$ is an immediate cost function, and $H: X \to \reals$ is a terminal cost funtion. If we start at time $0$ with a state $\xi_0 \in X$, and at time $i \ge 0$, we apply an action $v_i \in A$ at a state $\xi_i$ to arrive at a next state $\xi_{i+1}$ according to the transition probability function $P$, we have a controlled Markov chain $\{\xi_i; i \in \naturals \}$.
The chain $\{\xi_i; i \in \naturals\}$ due to the control sequence $\{v_i; i \in \naturals\}$ and an initial state $\xi_0$ will also be called the {\em trajectory} of ${\cal M}$ under the said sequence of controls and initial state.

Given a continuous-time dynamical system as described in Eq.~\eqref{eqn:system}, the Markov chain approximation method approximates the continuous stochastic dynamics using a sequence of MDPs $\{ {\cal M}_n\}_{n=0}^{\infty}$ in which ${\cal M}_n = (S_n,U,P_n,G_n,H_n)$ where  $S_n$ is a discrete subset of $S$, and $U$ is the original control set. We define $\partial S_n= \partial S \cap S_n$. For each $n \in \naturals$, let $\{\xi^n_i; i \in \naturals\}$ be a controlled Markov chain on ${\cal M}_n$ until it hits $\partial S_n$. We associate with each state $z$ in $S$ a non-negative interpolation interval $\Delta t_n(z)$, known as {\em a holding time}. 
We define $t^n_i= \sum_{0}^{i-1}\Delta t_n(\xi^n_i)$ for $i \ge 1$ and $t^n_0=0$. Let $\Delta \xi_i^n=\xi_{i+1}^n -\xi_i^n $. Let $u^n_i$ denote the control used at step $i$ for the controlled Markov chain. In addition, we define $G_n(z,v)=g(z,v)\Delta t_n(z)$ and $H_n(z)=h(z)$ for each $z \in S_n$ and $v \in U$. Let $\Omega_n$ be the sample space of ${\cal M}_n$. Holding times $\Delta t_n$ and transition probabilities $P_n$ are chosen to satisfy {\em the local consistency property} given by the following conditions:
\begin{enumerate}
\item For all $z \in S$, 
\begin{align}
\lim_{n \to \infty} \Delta t_n(z) = 0, \label{eqn:lc1}
\end{align}
\item For all $z \in S$ and all $v \in {U}$:
\end{enumerate}
\begin{eqnarray}
\lim_{n \to \infty} \frac{\EE_{P_n}[\Delta \xi_{i}^n \given \xi_i^n = z, u_i^n = v]}{\Delta t_n (z)} &=& f(z,v), \label{eqn:lc2}\\
\lim_{n \to \infty} \frac{\Cov_{P_n}[\Delta \xi_{i}^n \given \xi_i^n = z, u_i^n = v]}{\Delta t_n (z)}&=& F(z,v)F(z,v)^T, \label{eqn:lc3}\\
\lim_{n \to \infty} \sup_{i \in \naturals, \omega \in \Omega_n} ||\Delta \xi_{i}^n||_2 &=&0. \label{eqn:lc4}
\end{eqnarray}

The chain $\{\xi^n_i; i \in \naturals\}$ is a discrete-time process. In order to approximate the continuous-time process $x(\cdot)$ in Eq.~\eqref{eqn:systemIto}, we use an {\em approximate continuous-time interpolation}. We define the (random) continuous-time interpolation $\xi^n(\cdot)$ of the chain $\{\xi^n_i; i \in \naturals\}$ and the continuous-time interpolation $u^n(\cdot)$ of the control sequence $\{u^n_i; i \in \naturals\}$ under the holding times function $\Delta t_n$ as follows:
$\xi^n(\tau) = \xi^n_i, \text{ and } u^n(\tau) = u^n_i$ for all  $\tau \in [t^n_i, t^n_{i+1})$.
Let $D^{d_x}[0, +\infty)$ denote the set of all $\reals^{d_x}$-valued functions that are continuous from the left and has limits from the right. The process $\xi^n$ can be thought of as a random mapping from $\Omega_n$ to the function space $D^{d_x}[0, +\infty)$.

A control problem for the MDP ${\cal M}_n$ is analogous to that defined in Section~\ref{section:problem}.
Similar to previous section, a policy $\mu_n$ is a function that maps each state $z \in S_n $ to a control $\mu_n(z) \in U$. The set of all such policies is $\Pi_n$. 
Given a policy $\mu_n$, the (discounted) cost-to-go due to $\mu_n$ is:
$$
J_{n,\mu_n} (z) = \EE_{P_n}\left[ \sum_{i = 0}^{I_n-1} \alpha^{t_i^n} G_n(\xi^n_i, \mu_n(\xi^n_i)) + \alpha^{t^n_{I_n}}H_n(\xi^n_{I_n}) \ \Big\vert \ \xi^n_0 = z \right],
$$
where $\EE_{P_n}$ denotes the conditional expectation 
under $P_n$, the sequence $\{\xi^n_i ; i \in \naturals\}$ is the controlled Markov chain under the policy $\mu_n$, and $I_n$ is termination time defined as $I_n = \min\{i : \xi^n_i \in \partial S_n\}$.

The {\em optimal cost function}, denoted by $J_n^*$ satisfies 
\begin{align}
J_n^*(z) = \inf_{\mu_n \in \Pi_n} J_{n,\mu_n}(z), \ \ \forall z \in S_n. \label{eqn:discreteoptimization}
\end{align}
An {\em optimal policy}, denoted by $\mu_n^*$, satisfies 
$
J_{n,\mu_n^*}(z) =  J_n^*(z)
$
for all $z \in S_n$. For any $\epsilon >0$, $\mu_n$ is an $\epsilon$-optimal policy if $||J_{n,\mu_n}-J^*_n||_{\infty} \leq \epsilon$.

As stated in the following theorem, under mild technical assumptions, local consistency implies the convergence of continuous-time interpolations of the trajectories of the controlled Markov chain to the trajectories of the stochastic dynamical system described by Eq.~\eqref{eqn:system}.

\begin{theorem}[see Theorem 10.4.1 in~\cite{Kushner2000}] \label{theorem:convergence_traj}
Let us assume that $f(\cdot, \cdot)$ and $F(\cdot,\cdot)$ are measurable, bounded and continuous. Thus, Eq.~\eqref{eqn:system} has a weakly unique solution. Let $\{{\cal M}_n\}_{n=0}^{\infty}$ be a sequence of MDPs, and $\{\Delta t_n\}_{n=0}^{\infty}$ be a sequence of holding times that are locally consistent with the stochastic dynamical system described by Eq.~\eqref{eqn:system}. Let $\{u_i^n; i \in \naturals\}$ be a sequence of controls defined for each $n \in \naturals$. For all $n \in \naturals$, let $\{\xi^n(t); t \in \reals_{\ge 0}\}$ denote the continuous-time interpolation to the chain $\{\xi^n_i; i \in \naturals \}$ under the control sequence $\{u_i^n; i \in \naturals\}$ starting from an initial state $z_\mathrm{init}$, and $\{u^n (t); t \in \reals_{\ge 0}\}$ denote the continuous-time interpolation of $\{u_i^n; i \in \naturals\}$, according to the holding time $\Delta t_n$. 
Then, any subsequence of $\{ (\xi^n(\cdot), u^n(\cdot)) \}_{n=0}^{\infty}$ has a further subsequence that converges in distribution to $(x(\cdot), u(\cdot))$
satisfying
$$
x(t) = z_\mathrm{init} + \int_0^t f(x(\tau),u(\tau)) d\tau + \int_0^t F(x(\tau),u(\tau)) dw(\tau).
$$
Under the weak  uniqueness condition for solutions of Eq.~\eqref{eqn:system}, the sequence $\{ (\xi^n(\cdot), u^n(\cdot)) \}_{n=0}^{\infty}$ also converges to $(x(\cdot), u(\cdot))$.
\end{theorem}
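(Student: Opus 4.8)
The plan is to follow the classical weak-convergence argument for Markov chain approximations, resting on three pillars: tightness of the interpolated processes, identification of every subsequential limit through the controlled martingale problem, and a final appeal to weak uniqueness. First I would decompose each interpolated chain into a predictable drift plus a martingale. Writing $\xi^n(t) = z_\mathrm{init} + \sum_{i : t^n_i < t} \Delta \xi^n_i$, I would split each increment into its conditional mean and a martingale difference, obtaining $\xi^n(t) = z_\mathrm{init} + B^n(t) + M^n(t)$ with $B^n(t) = \sum_{i : t^n_i < t} \EE_{P_n}[\Delta \xi^n_i \mid \xi^n_i, u^n_i]$ and $M^n$ a martingale for the natural filtration. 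Local consistency in the form of Eq.~\eqref{eqn:lc2} makes the conditional mean of $\Delta \xi^n_i$ equal to $f(\xi^n_i, u^n_i)\,\Delta t_n(\xi^n_i)$ up to a vanishing correction, so $B^n(t)$ is a Riemann-sum approximation of $\int_0^t f(\xi^n(s), u^n(s))\,ds$; likewise Eq.~\eqref{eqn:lc3} controls the quadratic variation of $M^n$ so that it approximates $\int_0^t F F^T\,ds$.

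Second, I would establish tightness of $\{(\xi^n(\cdot), u^n(\cdot))\}$ in the Skorokhod space $D^{d_x}[0,+\infty)$ for the states, together with a relaxed-control topology for the controls. Boundedness of $f$ and $F$ on $S \times U$ bounds the drift and diffusion uniformly in $n$, which, combined with Eq.~\eqref{eqn:lc4} forcing $\sup_i \|\Delta\xi^n_i\|_2 \to 0$, lets me verify a standard tightness criterion controlling the conditional modulus of continuity of the increments. By Prokhorov's theorem, tightness guarantees that every subsequence of $\{(\xi^n(\cdot), u^n(\cdot))\}$ has a further subsequence converging in distribution to some limit $(x(\cdot), u(\cdot))$, and the vanishing-jump condition additionally forces the limiting state paths to be continuous almost surely.

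Third, I would identify any such subsequential limit as a weak solution of Eq.~\eqref{eqn:system} via the martingale problem. For a test function $\phi \in C^2_c(\reals^{d_x})$, define the generator $\mathcal{L}^v \phi(x) = f(x,v)^T \nabla \phi(x) + \tfrac{1}{2}\,\mathrm{tr}\!\big(F(x,v)F(x,v)^T \nabla^2 \phi(x)\big)$. Applying a discrete It\^o expansion to $\phi(\xi^n_{i+1}) - \phi(\xi^n_i)$ and summing, the conditional-mean and conditional-covariance terms produce exactly $\sum_i \mathcal{L}^{u^n_i}\phi(\xi^n_i)\,\Delta t_n(\xi^n_i)$ up to errors controlled by Eqs.~\eqref{eqn:lc2}--\eqref{eqn:lc4} and the smoothness of $\phi$. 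Passing to the weak limit along the subsequence, the continuous-mapping theorem shows $\phi(x(t)) - \phi(x(0)) - \int_0^t \mathcal{L}^{u(s)}\phi(x(s))\,ds$ is a martingale; as this holds for all such $\phi$, the pair $(x(\cdot),u(\cdot))$ solves the controlled martingale problem, and I would construct the driving Wiener process $w(\cdot)$ from the martingale part, yielding the stated integral equation.

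Finally, the weak uniqueness of solutions to Eq.~\eqref{eqn:system}, guaranteed here by the boundedness and regularity of $f$ and $F$, implies every subsequential limit has the same law. A standard subsequence argument then upgrades subsequential convergence to convergence of the whole sequence, establishing the second assertion. I expect the main obstacle to be the combined tightness-and-identification step: the local consistency conditions hold only asymptotically, so the drift- and diffusion-matching estimates must be made uniform enough to survive the passage to the limit, and the control process must be handled in a topology compatible with the weak convergence of the states while still producing a genuine admissible limit control.
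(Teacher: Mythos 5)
This theorem is not proved in the paper at all: it is imported verbatim from Kushner and Dupuis (Theorem 10.4.1 of \cite{Kushner2000}), so there is no in-paper proof to compare against. Your outline --- drift/martingale decomposition of the interpolated chain, tightness in $D^{d_x}[0,+\infty)$ from boundedness of $f,F$ and the vanishing-jump condition, identification of subsequential limits via the controlled martingale problem, and the weak-uniqueness upgrade to full-sequence convergence --- is a faithful reconstruction of exactly the argument given in that reference, and you correctly flag the one genuinely delicate point, namely that compactness for the controls is only available in the relaxed-control topology, so the subsequential limit is a priori a pair $(x(\cdot),m(\cdot))$ with $m$ a relaxed control (the paper itself acknowledges this in the footnote to Theorem~\ref{theoremControlQuality}); the statement as quoted glosses over this, and a complete write-up would either keep the relaxed-control formulation or add a hypothesis under which the limit can be realized as an ordinary control.
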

Furthermore, a sequence of minimizing controls guarantees pointwise convergence of the cost function to the original optimal cost function in the following sense. 

\begin{theorem}[see Theorem 10.5.2 in~\cite{Kushner2000}] \label{theorem:convergence_cost}
Assume that $f(\cdot, \cdot)$, $F(\cdot,\cdot)$, $g(\cdot, \cdot)$ and $h(\cdot)$ are measurable, bounded and continuous. 
For any trajectory $x(\cdot)$ of the system described by Eq.~\eqref{eqn:system}, define $\hat\tau(x) := \inf\{t : x(t) \notin {S^o}\}$. Let $\{{\cal M}_n = (S_n, U, P_n,G_n,H_n)\}_{n=0}^{\infty}$ and $\{\Delta t_n\}_{n=0}^{\infty}$ be locally consistent with the system described by Eq.~\eqref{eqn:system}. 

We suppose that the function $\hat{\tau}(\cdot)$ is continuous (as a mapping from $D^{d_x}[0,+\infty)$ to the compactified interval $[0,+\infty]$) with probability one relative to the measure induced by any solution to Eq.~\eqref{eqn:system} for an initial state $z$, which is satisfied when the matrix $F(\cdot,\cdot)F(\cdot,\cdot)^T$ is nondegenerate.
Then, for any $z \in S_n$, the following equation holds:
$$
\lim_{n \rightarrow \infty} |J_n^*(z) -J^*(z)|=0.
$$
In particular, for any $z \in S_n$, for any sequence $\{\epsilon_n >0 \}_{n=0}^{\infty}$ such that $\lim_{n \rightarrow \infty} \epsilon_n=0$, and for any sequence of policies $\{\mu_n\}_{n=0}^{\infty}$ such that $\mu_n$ is an $\epsilon_n$-optimal policy of ${\cal M}_n$, we have:
$$
\lim_{n \to \infty} |J_{n,\mu_n}(z) - J^*(z)| = 0.
$$
Moreover, the sequence $\{t^n_{I_n}; n \in \naturals\}$ converges in distribution to the termination time of the optimal control problem for the system in Eq.~\eqref{eqn:system} when the system is under optimal control processes. 
\end{theorem}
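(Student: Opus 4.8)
The plan is to establish the two inequalities $\limsup_{n} J_n^*(z) \le J^*(z)$ and $\liminf_{n} J_n^*(z) \ge J^*(z)$, each by exploiting the weak convergence of the interpolated trajectories granted by Theorem~\ref{theorem:convergence_traj}. The common engine is that whenever a sequence of control–trajectory pairs $(\xi^n(\cdot),u^n(\cdot))$ converges in distribution to an admissible pair $(x(\cdot),u(\cdot))$ solving Eq.~\eqref{eqn:systemIto}, the associated discrete costs converge to the continuous cost incurred by the limit; the crux throughout is showing that the cost functional is continuous (almost surely) with respect to this mode of convergence.

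For the lower bound, I would take for each $n$ an optimal (or $\epsilon_n$-optimal) policy $\mu_n$ of ${\cal M}_n$, so that $J_{n,\mu_n}(z)=J_n^*(z)$, and consider the interpolations $(\xi^n(\cdot),u^n(\cdot))$ they induce from $z$. By local consistency and boundedness of $f,F$, this family is tight, and by Theorem~\ref{theorem:convergence_traj} every subsequence has a further subsequence converging in distribution to some admissible $(x(\cdot),u(\cdot))$ satisfying Eq.~\eqref{eqn:systemIto}. Passing the cost through this limit gives, along the subsequence, $J_n^*(z)$ converging to the expected cost incurred by the admissible limit $(x(\cdot),u(\cdot))$, which is at least $J^*(z)$ because the optimal cost over admissible controls coincides with $J^*(z)$ (best Markov control equals best admissible control, as recalled in Section~\ref{section:problem}). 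Since every subsequence admits such a convergent sub-subsequence, $\liminf_n J_n^*(z) \ge J^*(z)$.

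For the upper bound, I would fix $\epsilon>0$ and choose an admissible control $\mu$ for the continuous problem with cost at most $J^*(z)+\epsilon$. The idea is to \emph{lift} $\mu$ to a sequence of controls for the chains so that the interpolations converge in distribution to the solution driven by $\mu$; then the discrete costs under these controls converge to the cost of $\mu$, and since $J_n^*(z)$ is no larger than the cost of any policy on ${\cal M}_n$, we obtain $\limsup_n J_n^*(z) \le J^*(z)+\epsilon$. Letting $\epsilon \downarrow 0$ and combining with the lower bound yields $\lim_n J_n^*(z)=J^*(z)$. The ``in particular'' claim then follows from the triangle inequality: $|J_{n,\mu_n}(z)-J^*(z)| \le \epsilon_n + |J_n^*(z)-J^*(z)| \to 0$ whenever $\mu_n$ is $\epsilon_n$-optimal and $\epsilon_n \to 0$.

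The main obstacle is the continuity of the cost functional under weak convergence, and in particular the handling of the exit time. Writing the cost as a running part $\int_0^{t^n_{I_n}} \alpha^{t}\,g(\xi^n(t),u^n(t))\,dt$ plus a terminal part $\alpha^{t^n_{I_n}} h(\xi^n(t^n_{I_n}))$, boundedness and continuity of $g$ together with the discount factor $\alpha^t$ make the running integral continuous in the path; the delicate term is the terminal one, because the map sending a path to its first exit time $\hat\tau$ is generally discontinuous. This is exactly where the hypothesis enters: since $F(\cdot,\cdot)F(\cdot,\cdot)^T$ is nondegenerate, $\hat\tau$ is continuous with probability one relative to the law of the limiting diffusion, so by the continuous mapping theorem $t^n_{I_n} \to \hat\tau(x)$ in distribution and, after passing to a Skorokhod representation to upgrade weak convergence to almost-sure convergence, $\alpha^{t^n_{I_n}} h(\xi^n(t^n_{I_n})) \to \alpha^{\hat\tau} h(x(\hat\tau))$ by dominated convergence. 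This same continuity of $\hat\tau$, applied along the optimal controls, yields the final ``moreover'' statement: the termination times $t^n_{I_n}$ converge in distribution to $\hat\tau(x)$, the termination time of the optimally controlled diffusion.
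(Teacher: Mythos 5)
The paper does not actually prove this theorem: it is imported verbatim from the Markov chain approximation literature (Theorem 10.5.2 of the cited Kushner--Dupuis reference), so there is no in-paper proof to compare against. Your sketch is, however, a faithful outline of the standard argument in that reference: the two-sided bound $\limsup_n J_n^*(z) \le J^*(z) \le \liminf_n J_n^*(z)$, weak convergence of the interpolated pairs via Theorem~\ref{theorem:convergence_traj}, continuity of the cost functional on paths with the exit time handled by the almost-sure continuity of $\hat\tau$ under nondegeneracy of $FF^T$, a Skorokhod representation to upgrade to almost-sure convergence, and bounded convergence (the discount $\alpha<1$ and boundedness of $g,h$ make the cost functional uniformly bounded, so convergence in distribution does pass to expectations). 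The ``in particular'' and ``moreover'' clauses follow exactly as you say.

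Two points deserve flagging as the places where your outline is thinnest relative to the full argument. First, in the lower bound the weak limit of the control interpolations is in general only a \emph{relaxed} control (a measure-valued process), not an ordinary admissible control; identifying the infimum over relaxed controls with $J^*$ requires the chattering/approximation theorem, which your appeal to ``best Markov control equals best admissible control'' does not quite cover. Second, the upper bound's ``lifting'' of a near-optimal continuous control to the chains is the technically hardest step: one must first replace the $\epsilon$-optimal control by a piecewise-constant, finitely-valued control adapted to a coarse time grid (without losing more than $\epsilon$ in cost), and only such controls can be applied consistently to the approximating chains. Your sketch names the right destination but omits this construction, which is where most of the work in the cited proof lives.
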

Under the assumption that the cost rate $g$ is H\"{o}lder continuous \cite{EvansPDE} with exponent $2\rho$, the sequence of optimal value functions for approximating chains $J^*_n$ indeed converges uniformly to $J^*$ with a proven rate. Let us denote $||b||_{S_n}=\sup_{z \in S_n}b(x)$ as the sup-norm over $S_n$ of a function $b$ with domain containing $S_n$. Let 
\begin{align}
\dispersion_n=\max_{z \in S_n}\min_{z' \in S_n}||z'-z||_2 \label{eqn:dispersion}
\end{align}
be the dispersion of $S_n$.
\begin{theorem}[see Theorem 2.3 in~\cite{Men89} and Theorem 2.1 in~\cite{Dupuis:1998}] \label{theorem:uniform_convergence}
Consider an MDP sequence $\{{\cal M}_n = (S_n, U, P_n,G_n,H_n)\}_{n=0}^{\infty}$ and holding times $\{\Delta t_n\}_{n=0}^{\infty}$ that are locally consistent with the system described by Eq.~\eqref{eqn:system}. Let $J^*_n$ be the optimal cost of ${\cal M}_n$. Given the assumptions on the dynamics and cost rate functions in Section \ref{section:problem}, as $n$ approaches $\infty$, we have
$$
||J_n^* - J^*||_{S_n} = O(\dispersion_n^{\rho}).
$$

\end{theorem}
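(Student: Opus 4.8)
The plan is to obtain the uniform error bound $\|J_n^* - J^*\|_{S_n} = O(\zeta_n^\rho)$ by combining a regularity estimate for the continuous optimal value function $J^*$ with a discrete consistency estimate for the MDP value function $J_n^*$, and then controlling the gap between the two on the sampled set $S_n$ via the dispersion $\zeta_n$. The cited results of Menaldi and of Dupuis--Kushner provide precisely the two ingredients: a H\"older-type modulus of continuity for $J^*$ inherited from the H\"older continuity of the cost rate $g$ (with exponent $2\rho$), and a rate of convergence for locally consistent Markov chain approximations. First I would invoke the regularity theory for the associated Hamilton-Jacobi-Bellman equation to establish that $J^*$ is itself H\"older continuous on $S$ with exponent $\rho$; this is where the assumption that $g$ is uniformly H\"older with exponent $2\rho$ and that $FF^T$ is nondegenerate (uniform ellipticity) enters, since nondegeneracy is what upgrades the H\"older exponent $2\rho$ of the data to a $\rho$-type modulus for the value function through the parabolic/elliptic smoothing of the diffusion.

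Second, I would establish the one-step local consistency error: because $(S_n, U, P_n, G_n, H_n)$ is locally consistent with Eq.~\eqref{eqn:system} in the sense of \eqref{eqn:lc1}--\eqref{eqn:lc4}, the discrete dynamic programming operator approximates the continuous HJB operator with an error that is controlled by the dispersion $\zeta_n$ and the holding times $\Delta t_n$. The key mechanism here is that the mean and covariance of the increments $\Delta \xi_i^n$ match $f$ and $FF^T$ up to $o(\Delta t_n)$, so a Taylor expansion of $J^*$ against the transition kernel $P_n$, together with the $\zeta_n^\rho$-modulus of continuity of $J^*$ established in the first step, yields that $J^*$ restricted to $S_n$ is an approximate fixed point of the discrete Bellman operator, with residual of order $\zeta_n^\rho$. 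Here $\zeta_n = \max_{z \in S_n} \min_{z' \in S_n} \|z'-z\|_2$ measures how finely $S_n$ fills $S$, so that a value at any point of $S$ is within $O(\zeta_n^\rho)$ of a value at a nearby sample point.

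The third and concluding step is a contraction/stability argument: the discrete Bellman operator associated with $\mathcal{M}_n$ is a contraction (the discount factor $\alpha \in [0,1)$ supplies the modulus), so $J_n^*$ is its unique fixed point; since $J^*|_{S_n}$ is an approximate fixed point with residual $O(\zeta_n^\rho)$, the contraction estimate propagates this into the bound $\|J_n^* - J^*\|_{S_n} = O(\zeta_n^\rho)$. I expect the main obstacle to be the first step, namely pinning down the correct H\"older regularity of $J^*$ and verifying that the nondegeneracy of $FF^T$ combined with the $2\rho$-H\"older data yields exactly the exponent $\rho$ (rather than some other power) for the value function; this is the delicate PDE-regularity content of Theorem~2.3 in~\cite{Men89}. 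The passage from the local (one-step) consistency error to a global uniform bound is comparatively routine once the contraction property is in hand, and matching the exponent bookkeeping---so that the accumulated discretization error over the trajectory does not degrade the rate below $\zeta_n^\rho$---is the secondary technical point, handled by the rate-of-convergence estimate of Theorem~2.1 in~\cite{Dupuis:1998}.
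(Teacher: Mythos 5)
The paper does not actually prove Theorem~\ref{theorem:uniform_convergence}: it is imported verbatim from Theorem~2.3 of~\cite{Men89} and Theorem~2.1 of~\cite{Dupuis:1998}, and no argument for it appears in the Appendix (which only proves Lemmas~\ref{lemma:dispersion}--\ref{lemma:local_consistency} and Theorems~\ref{theoremAlmostSureConvergence}--\ref{theoremControlQuality}). So there is no in-paper proof to compare against; your proposal has to be judged as a reconstruction of the cited literature. Its three-part architecture --- regularity of $J^*$, one-step consistency of the discrete Bellman operator against $J^*$, and a fixed-point stability estimate --- is indeed the standard skeleton of such rate results, so the overall plan is sound.

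Two points in your sketch are, however, not right as stated, and they are precisely where the real content of the cited theorems lives. First, the exponent bookkeeping: you attribute the exponent $\rho$ to $J^*$ being only $\rho$-H\"older (calling this an ``upgrade'' from the $2\rho$-H\"older data, which is actually a downgrade, and ellipticity would improve regularity, not worsen it). In the standard arguments $J^*$ inherits the full $2\rho$ modulus of the data; the halving from $2\rho$ to $\rho$ comes from the coupling of the spatial and temporal scales forced by local consistency --- condition~\eqref{eqn:lc3} makes $\|\Delta\xi_i^n\|^2\sim\Delta t_n$, i.e., $\dispersion_n^2\sim\Delta t_n$, so a spatial error of size $\dispersion_n$ is a temporal error of size $\dispersion_n^2$, and a $2\rho$-H\"older function evaluated across that mismatch loses half its exponent. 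Second, and more seriously, your contraction step does not close: the discrete Bellman operator contracts with modulus $\alpha^{\Delta t_n}=1-\Theta(\Delta t_n)$, so an approximate fixed point with residual $O(\dispersion_n^{\rho})$ only yields $\|J_n^*-J^*\|_{S_n}=O(\dispersion_n^{\rho}/\Delta t_n)$, which diverges. For the argument to deliver the claimed rate, the one-step consistency residual must be shown to be $O(\Delta t_n\,\dispersion_n^{\rho})$ --- that is, the Taylor expansion of $J^*$ against $P_n$ must produce an error carrying an explicit factor of the holding time. This is exactly the ``accumulated error over the trajectory'' issue you dismiss as comparatively routine; it is in fact the crux of Theorem~2.1 in~\cite{Dupuis:1998}, and without it the sketch does not yield the stated bound.
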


\subsection*{Discontinuity of dynamics and objective functions}
We note that the above theorems continue to hold even when the functions $f,F,g,$ and $h$ are discontinuous. In this case, the following conditions are sufficient to use the theorems: (i) For $r$ to be $f,F,g$, or $h$, $r(x,u)$ takes either the form $r_0(x)+r_1(u)$ or $r_0(x)r_1(u)$ where the control dependent terms are continuous and the $x$-dependent terms are measurable, and (ii) $f(x,\cdot),F(x,\cdot),g(x,\cdot)$, and $h(x)$ are nondegenerate for each $x$, and the set of discontinuity in $x$ of each function is a uniformly smooth surface of lower dimension. Furthermore, instead of uniform H\"{o}lder continuity, the cost rate $g$ can be relaxed to be locally H\"{o}lder continuous with exponent $2\rho$ on $S$ (see, e.g., page 275 in~\cite{Kushner2000}). 

Let us remark that the controlled Markov chain differs from the stochastic dynamical systems described in Section~\ref{section:problem} in that the former possesses a discrete state structure and evolves in a discrete time manner while the latter is a continuous model both in terms of its state space and the evolution of time. Yet, both models possess a continuous control space. 
It will be clear in the following discussion that the control space does not have to be discretized if a certain optimization problem can be solved numerically or via sampling. 

The above theorems assert the asymptotic optimality given a sequence of \textit{a priori} discretizations of the state space and the availability of $\epsilon$-optimal policies. In what follows, we describe an algorithm that incrementally computes the optimal cost-to-go function and an optimal control policy of the continuous problem.

\section{The iMDP Algorithm}  \label{section:algorithm}
Based on Markov chain approximation results, the iMDP algorithm incrementally builds a sequence of discrete MDPs with probability transitions and cost-to-go functions that consistently approximate the original continuous counterparts. The  algorithm refines the discrete models by using a number of primitive procedures to add new states into the current approximate model. Finally, the  algorithm improves the quality of discrete-model policies in an iterative manner by effectively using the computations inherited from the previous iterations. Before presenting the algorithm, some primitive procedures which the algorithm relies on are presented in this section.

\subsection{Primitive Procedures} \label{section:algorithm:primitiveprocedures}

\setcounter{paragraph}{0}
\subsubsection{Sampling}
The ${\tt Sample()}$ and ${\tt SampleBoundary()}$ procedures sample states independently and uniformly from the interior ${S^o}$ and the boundary $\partial {S}$, respectively.

\subsubsection{Nearest Neighbors}
Given $z \in {S}$ and a set $\DummyS \subseteq {S}$ of states. For any $k \in \naturals$, the procedure ${\tt Nearest}(z,\DummyS,k)$ returns the $k$ nearest states $z' \in \DummyS$ that are closest to $z$ in terms of the Euclidean norm. 

\subsubsection{Time Intervals}
Given a state $z \in {S}$ and a number $k \in \naturals$, the procedure ${\tt ComputeHoldingTime}(z,k)$ returns a holding time computed as follows:
$$
{\tt ComputeHoldingTime}(z,k) = \gamma_t \left(\frac{\log k}{k}\right)^{\dummyasyn \dummyrate \rho/d_x},
$$
where $\gamma_t >0 $ is a constant, and $\dummyrate, \dummyasyn$ are constants in $(0,1)$ and $(0,1]$ respectively\footnote{Typical values of $\dummyrate$ is [0.999,1).}. The parameter $\rho \in (0,0.5]$ defines the H\"{o}lder continuity of the cost rate function $g(\cdot,\cdot)$ as in Section~\ref{section:problem}.

\subsubsection{Transition Probabilities} \label{para:tranprob}
Given a state $z \in S$, a subset $Y \in S$, a control $v \in U$, and a positive number $\tau$ describing a holding time, the procedure ${\tt ComputeTranProb}(z,v,\tau,Y)$ returns (i) a finite set $Z_\mathrm{near} \subset S$ of states
such that the state $z + f(z,v)\tau$ belongs to the convex hull of $Z_\mathrm{near}$ and $||z'-z||_2= O( \tau )$ for all $z' \neq z \in Z_\mathrm{near}$, 
and (ii) a function $p$ that maps $Z_\mathrm{near}$ to a non-negative real numbers such that $p(\cdot)$ is a probability distribution over the support $Z_\mathrm{near}$. It is crucial to ensure that these transition probabilities result in a sequence of locally consistent chains in the algorithm.  

There are several ways to construct such transition probabilities. One possible construction by solving a system of linear equations can be found in \cite{Kushner2000}. 
In particular, we choose $Z_\mathrm{near}= {\tt Nearest}(z + f(z,v)\tau,Y,s)$ where $s \in \naturals$ is some constant. We define the transition probabilities $p : {Z_\mathrm{near}} \to \reals_{\ge 0}$ that satisfies:
\begin{itemize}
\item[(i)] $\sum_{z' \in Z_\mathrm{near}} p(z') (z' - z) = f(z,v) \tau + o(\tau)$, 
\item[(ii)] $\sum_{z' \in Z_\mathrm{near}} p(z') (z' - z)(z' - z)^T = F(z,v) F(z,v)^T \, \tau + f(z,v) f(z,v)^T {\tau}^2+ o(\tau)$.
\item[(iii)] $\sum_{z' \in Z_\mathrm{near}} p(z') = 1$.
\end{itemize}

An alternate way to compute the transition probabilities is to approximate using local Gaussian distributions. We choose $Z_\mathrm{near}= {\tt Nearest}(z + f(z,v)\tau,Y,s)$ where $s=\Theta(\log(|Y|))$. Let ${\cal N}_{\overline{m}, \sigma} (\cdot)$ denote the density of the (possibly multivariate) Gaussian distribution with mean $\overline{m}$ and variance $\sigma$. Define the transition probabilities as follows:
$$
p(z') = \frac{{\cal N}_{\overline{m},\sigma} (z')}{\sum_{y \in Z_\mathrm{near}} {\cal N}_{\overline{m},\sigma} (y)},
$$
where $\overline{m} = z+f(z,v) \tau$ and $\sigma = F(z,v) F(z,v)^T \tau$. This expression can be evaluated easily for any fixed $v \in {U}$. As $|Z_{near}|$ approaches infinity, the above construction satisfies the local consistency almost surely. 

As we will discuss in Section~\ref{section:algorithm:algorithm}, the size of the support $Z_\mathrm{near}$ affects the complexity of the iMDP algorithm. We note that solving a system of linear equations requires computing and handling a matrix of size $(d_x^2+d_x+1) \times |Z_\mathrm{near}|$ where $|Z_\mathrm{near}|$ is constant. When $d_x$ and $|Z_\mathrm{near}|$ are large, the constant factor of the complexity is large. In contrast, computing local Gaussian approximation requires only $|Z_\mathrm{near}|$ evaluations. Thus, although local Gaussian approximation yields higher time complexity, this approximation is more convenient to compute.

\subsubsection{Backward Extension}
Given $T >0$ and two states $z,z' \in {S}$, the procedure ${\tt ExtendBackwards}(z,z',T)$ returns a triple $(x,v,\tau)$ such that (i) $\dot{x}(t) = f(x(t), u(t)) dt$ and $u(t)=v \in U$ for all $t \in [0,\tau]$, (ii) $\tau \leq T$, (iii) $x(t) \in {S}$ for all $t \in [0,\tau]$, (iv) $x(\tau) = z$, and (v) $x(0)$ is close to $z'$. 
If no such trajectory exists, then the procedure returns failure\footnote{This procedure is used in the algorithm solely for the purpose of inheriting the ``rapid exploration'' property of the RRT algorithm~\cite{Lavalle.98,Karaman.Frazzoli:IJRR10}.}. 
We can solve for the triple $(x,v,\tau)$ by sampling several controls $v$ and choose the control resulting in $x(0)$ that is closest to $z'$.

\subsubsection{Sampling and Discovering Controls} 
The procedure ${\tt ConstructControls}(k,z,Y,T)$ returns a set of $k$ controls in $U$. We can uniformly sample $k$ controls in $U$. Alternatively, for each state $z' \in$ ${\tt Nearest}(z,Y,k)$, we solve for a control $v \in U$ such that (i) $\dot{x}(t) = f(x(t), u(t)) dt$ and $u(t)=v \in U$ for all $t \in [0,T]$, (ii) $x(t) \in {S}$ for all $t \in [0,T]$, (iii) $x(0)=z$ and $x(T) = z'$. 

\subsection{Algorithm Description} \label{section:algorithm:algorithm}
The iMDP algorithm is given in Algorithm~\ref{algorithm:main}. The algorithm incrementally refines a sequence of (finite-state) MDPs ${\cal M}_n=(S_n,U,P_n,G_n,H_n)$ and the associated holding time function $\Delta t_n$ that consistently approximates the sytem in Eq.~\eqref{eqn:system}. In particular, given a state $z\in S_n$ and a holding time $\Delta t_n(z)$, we can implicitly define the stage cost function $G_n(z,v)= \Delta t_n(z)g(z,v)$ for all $v \in U$ and terminal cost function $H_n(z)=h(z)$. We also associate with $z \in S_n$ a cost value $J_n(z)$, and a control $\mu_n(z)$. We refer to $J_n$ as a cost value function over $S_n$.
In the following discussion, we describe how to construct $S_n, P_n, J_n, \mu_n$ over iterations. We note that, in most cases, we only need to construct and access $P_n$ on demand.
 
In every iteration of the main loop (Lines~\ref{line:mainloop:start}-\ref{line:mainloop:end}), we sample an additional state from the boundary of the state space $S$. We set $J_n,\mu_n,\Delta t_n$ for those states at Line~\ref{line:main:add_boundary_state_add_values}. Subsequently, we also sample a state from the interior of $S$ (Line~\ref{line:main:sample_interior}) denoted as $z_\mathrm{s}$. We compute the nearest state $z_\mathrm{nearest}$, which is already in the current MDP, to the sampled state (Line~\ref{line:main:compute_nearest}). The algorithm computes a trajectory that reaches $z_\mathrm{nearest}$ starting at some state near $z_\mathrm{s}$ (Line~\ref{line:main:steer_back}) using a control signal $u_\mathrm{new}(0..\tau)$. The new trajectory is denoted by $x_\mathrm{new} : [0,\tau] \to S$ and the starting state of the trajectory, i.e., $x_\mathrm{new}(0)$, is denoted by $z_\mathrm{new}$. The new state $z_\mathrm{new}$ is added to the state set, and the cost value $J_n(z_\mathrm{new})$, control $\mu_n(z_\mathrm{new})$, and holding time $\Delta t_n(z_\mathrm{new})$ are initialized at Line~\ref{line:main:add_interior_state_value}. 

\begin{algorithm}[t]
$(n,S_0,J_0,\mu_0,\Delta t_0) \leftarrow (1,\emptyset,\emptyset,\emptyset,\emptyset$)\; 
\While{$n < N$}
{
$\left(S_{n}, J_{n}, \mu_{n}, \Delta t_{n} \right) \leftarrow \left(S_{n-1}, J_{n-1}, \mu_{n-1}, \Delta t_{n-1} \right)$\; \label{line:copiedton}
\vspace{0.08in}
\tcp{Add a new state to the boundary}
$z_\mathrm{s} \leftarrow {\tt SampleBoundary}()$\; \label{line:mainloop:start}
\label{line:main:add_boundary_state_start} \label{line:main:sample_boundary}
$ \left(S_{n}, J_{n}(z_\mathrm{s}), \mu_{n}(z_\mathrm{s}),\Delta t_{n} (z_\mathrm{s}) \right)  \leftarrow \left(S_{n} \cup\{z_\mathrm{s}\}, h(z_\mathrm{s}), null,0 \right)$ \;  \label{line:main:add_boundary}
\label{line:main:add_boundary_state_add_values}
	\label{line:main:add_boundary_state_end} \label{line:main:compute_terminal_cost}
	\vspace{0.08in}
	\tcp{Add a new state to the interior}
	$z_\mathrm{s} \leftarrow {\tt Sample}()$\;
	\label{line:main:add_interior_state_start} \label{line:main:sample_interior}
	$z_\mathrm{nearest} \leftarrow {\tt Nearest}(z_\mathrm{s}, S_{n},1)$\;
	\label{line:main:compute_nearest}
	\If{$(x_\mathrm{new}, u_\mathrm{new}, \tau) \leftarrow {\tt ExtendBackwards}(z_\mathrm{nearest}, z_\mathrm{s},T_0)$}{ \label{line:main:steer_back}
		$z_\mathrm{new} \leftarrow x_{new}(0)$\;
		$cost= \tau g(z_\mathrm{new},u_\mathrm{new}) + \alpha^{\tau}J_{n}(z_\mathrm{nearest})$\;
		$ (S_{n}, J_{n}(z_\mathrm{new}), \mu_{n}(z_\mathrm{new}),\Delta t_{n} (z_\mathrm{new}))  \leftarrow (S_{n} \cup \{ z_\mathrm{new}\}, cost, u_{new}, \tau)$ \; \label{line:main:add_interior_state}
		\label{line:main:add_interior_state_value}
		\vspace{0.08in}
		\tcp{Perform $L_n \ge 1$ (asynchronous) value iterations}
		\For{$i= 1 \to L_n$}
		{
			\vspace{0.08in}
			\tcp{Update $z_\mathrm{new}$ and $K_n=\Theta \big(|S_n|^{\dummyasyn} \big )$ states $\big (0 < \dummyasyn \leq 1, \ K_n < |S_n|\big )$}
			$Z_\mathrm{update} \leftarrow {\tt Nearest}(z_\mathrm{new}, S_{n}\backslash \partial S_n, K_n) \cup \{z_\mathrm{new}\}$\;
			\label{line:main:compute_update}
			\For{$z \in Z_\mathrm{update}$}
			{
				${\tt Update} (z,S_{n},J_{n},\mu_{n},\Delta t_{n})$\;
				\label{line:main:add_interior_state_end} \label{line:main:update}
			}
		}
	}
		$n \leftarrow n + 1$\;
		\label{line:mainloop:end}
}
\caption{iMDP()} \label{algorithm:main}
\end{algorithm}

\subsection*{Update of cost value and control}
The algorithm updates the cost values and controls of the finer MDP in Lines~\ref{line:main:compute_update}-\ref{line:main:update}. We perform $L_n \ge 1$ value iterations in which we update the new state $z_\mathrm{new}$ and other  $K_n=\Theta \big(|S_n|^{\dummyasyn} \big)$ states in the state set where $K_n < |S_n|$. When all states in the MDP are updated, i.e. $K_n+1=|S_n|$, $L_n$ value iterations are implemented in a synchronous manner. Otherwise, $L_n$ value iterations are implemented in an asynchronous manner. 

The set of states to be updated is denoted as $Z_\mathrm{update}$ (Line~\ref{line:main:compute_update}). To update a state $z \in Z_\mathrm{update}$ that is not on the boundary, in the call to the procedure ${\tt Update}$ (Line~\ref{line:main:update}), we solve the following Bellman equation:\footnote{Although the argument of ${\tt Update}$ at Line~\ref{line:main:update} is $J_n$, we actually process the previous cost values $J_{n-1}$ due to Line~\ref{line:copiedton}. We can implement Line~\ref{line:copiedton} by simply sharing memory for $(S_n,J_n,\mu_n,\Delta t_n)$ and $(S_{n-1},J_{n-1},\mu_{n-1},\Delta t_{n-1})$.}
\begin{eqnarray}
J_n(z)= \min_{v \in U}\{ G_n(z,v) + \alpha^{\Delta t_n(z)} \EE_{P_n}\left[ J_{n-1}(y) |z,v\right]\},\label{eqn:Bellman}
\end{eqnarray}
and set $\mu_n(z) = v^*(z)$, where $v^*(z)$ is the minimizing control of the above optimization problem. There are several ways to solve Eq.~\eqref{eqn:Bellman} over the the continuous control space $U$ efficiently. 
If $P_n(\cdot \given z,v)$ and $g(z,v)$ are affine functions of $v$, and ${U}$ is convex, the above optimization has a linear objective function and a convex set of constraints. Such problems are widely studied in the literature~\cite{Boyld.ConvexOpt}. More generally, we can uniformly sample the set of controls, called $U_n$, in the control space $U$. Hence, we can evaluate the right hand side (RHS) of Eq.~\eqref{eqn:Bellman} for each $v \in U_n$ to find the best $v^*$ in $U_n$ with the smallest RHS value and thus to update $J_n(z)$ and $\mu_n(z)$. When $\lim_{n \to \infty}|U_n|=\infty$, we can solve Eq.~\eqref{eqn:Bellman} arbitrarily well (see Theorem \ref{theoremAlmostSureConvergenceSampleControl}).

Thus, it is sufficient to construct the set $U_n$ with $\Theta(\log(|S_n|))$ controls using the procedure ${\tt ConstructControls}$ as described in Algorithm~\ref{algorithm:update} (Line \ref{line:Controls}). The set $Z_\mathrm{near}$ and  the transition probability $P_n(\cdot \given z,v)$ constructed consistently over the set $Z_\mathrm{near}$ are returned from the procedure ${\tt ComputeTranProb}$ for each $v \in U_n$ (Line \ref{line:pn}). Depending on a particular method to build $P_n$ (i.e. solving a system of linear equations or evaluating a local Gaussian distribution), the cardinality of $Z_\mathrm{near}$ is set to a constant or increases as $\Theta(\log(|S_n|))$. Subsequently, the procedure chooses the best control among the constructed controls to update $J_n(z)$ and $\mu_n(z)$ (Line~\ref{line:improve:end}). We note that in Algorithm~\ref{algorithm:update}, before making improvement for the cost value at $z$ by comparing new controls, we can re-evaluate the cost value with the current control $\mu_n(z)$ over the holding time $\Delta t_n(z)$ by adding the current control $\mu_n(z)$ to $U_n$. The reason is that the current control may be still the best control compared to other controls in $U_n$.

\begin{algorithm}[t]
$\tau \leftarrow {\tt ComputeHoldingTime}(z,|S_{n}|)$\; \label{line:holdingtime}
\vspace{0.08in}
\tcp{Sample or discover $C_n=\Theta(\log(|S_n|))$ controls}
$U_n \leftarrow {\tt ConstructControls}(C_n,z,S_n,\tau)$\;
\label{line:Controls}
\label{line:improve:begin}
\For{$v \in U_n$}
{
        \label{line:discover}
	$(Z_\mathrm{near},p_n) \leftarrow {\tt ComputeTranProb}(z,v,\tau,S_n)$\; \label{line:pn}
	$J \leftarrow \tau g(z,v) + \alpha^{\tau}\sum_{y \in Z_{\mathrm{near}}}p_n(y)J_n(y)$\;
	\If{$J < J_n(z)$}
	{ 	
	  $(J_n(z), \mu_n(z),\Delta t_n(z),\kappa_n(z)) \leftarrow (J,v,\tau,|S_n|)$\;
	  \label{line:improve:end}
 	}
}
\caption{${\tt Update}(z \in S_n, S_n, J_n, \mu_n, \Delta t_n)$}
\label{algorithm:update}
\end{algorithm}

\begin{algorithm}[t]
$z_\mathrm{nearest} \leftarrow {\tt Nearest}(z, S_n,1)$\;
\Return{$\mu(z)=( \mu_n(z_\mathrm{nearest}), \Delta t_n(z_\mathrm{nearest}) )$}
\caption{${\tt Policy}(z \in S,n)$}
\label{algorithm:control}
\end{algorithm}

\subsection*{Complexity of iMDP}
The time complexity per iteration of the implementation in Algorithms \ref{algorithm:main}-\ref{algorithm:update} is either $O\big ( |S_n|^{\dummyasyn}\log{|S_n|} \big )$ or $O \big ( |S_n|^{\dummyasyn}(\log{|S_n|})^2 \big)$. In particular, if the procedure {\tt ComputeTranProb} solves a set of linear equations to construct $P_n$ such that the cardinality of $Z_\mathrm{near}$ can remain constant, the time complexity per iteration is $O\big ( |S_n|^{\dummyasyn}\log{|S_n|} \big )$ where $\log{|S_n|}$ accounts for the number of processed controls, and $|S_n|^{\dummyasyn}$ accounts for the number of updated states in one iteration. Otherwise, if the procedure {\tt ComputeTranProb} uses a local Gaussian distribution to construct $P_n$ such that the cardinality of $Z_\mathrm{near}$ increases as $\Theta(\log{|S_n|})$, the time complexity per iteration is $O \big ( |S_n|^{\dummyasyn}(\log{|S_n|})^2 \big)$. The processing time from the beginning until the iMDP algorithm stops after $n$ iterations is thus either $O\big ( |S_n|^{1+\dummyasyn}\log{|S_n|} \big )$ or $O \big ( |S_n|^{1+\dummyasyn}(\log{|S_n|})^2 \big)$. Since we only need to access locally consistent transition probability on demand, the space complexity of the iMDP algorithm is $O(|S_n|)$. Finally, the size of state space $S_n$ is $|S_n|=\Theta(n)$ due to our sampling strategy.

\subsection{Feedback Control} \label{section:algorithm:control}
As we will see in Theorems~\ref{theoremAlmostSureConvergence}-\ref{theoremAlmostSureConvergenceSampleControl}, the sequence of cost value functions $J_n$ arbitrarily approximates the original optimal cost-to-go $J^*$. Therefore, we can perform a Bellman update based on the approximated cost-to-go $J_n$ (using the stochastic continuous-time dynamics) to obtain a policy control for any $n$. However, we will discuss in Theorem~\ref{theoremControlQuality} that the sequence of $\mu_n$ also approximates arbitrarily well an optimal control policy. In other words, in the iMDP algorithm, we also incrementally construct an optimal control policy. In the following paragraph, we present an algorithm that converts a policy for a discrete system to a policy for the original continuous problem.

Given a level of approximation $n \in \naturals$, the control policy $\mu_n$ generated by the iMDP algorithm is used for controlling the original system described by Eq.~\eqref{eqn:system} using the procedure given in Algorithm~\ref{algorithm:control}. This procedure computes the state in ${\cal M}_n$ that is closest to the current state of the original system and applies the control attached to this closest state over the associated holding time. 

\section{Analysis}  \label{section:analysis}
In this section, let $({\cal M}_n = (S_n, U,P_n,G_n,H_n),\Delta t_n, J_n,\mu_n)$ denote the MDP, holding times, cost value function, and policy returned by Algorithm~\ref{algorithm:main} at the end $n$ iterations. The proofs of lemmas and theorems in this section can be found in Appendix. 

For large $n$, states in $S_n$ are sampled uniformly in the state space $S$~\cite{Karaman.Frazzoli:IJRR10}. Moreover, the dispersion of $S_n$ shrinks with the rate  $O((\log{|S_n|}/|S_n|)^{1/d_x})$ as described in the next lemma.
\begin{lemma} \label{lemma:dispersion}
Recall that $\dispersion_n$ measures of the dispersion of $S_n$ (Eq. \ref{eqn:dispersion}). We have the following event happens with probability one:
$$
\dispersion_n=O((\log{|S_n|}/|S_n|)^{1/d_x}).
$$
\end{lemma}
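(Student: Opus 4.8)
The plan is to reduce the claim to the classical covering-radius bound for independent, uniformly distributed samples in a bounded region of $\reals^{d_x}$, invoking the fact from \cite{Karaman.Frazzoli:IJRR10} that for large $n$ the states of $S_n$ are distributed uniformly over $S$ and that $|S_n| = \Theta(n)$. The heart of the argument is a deterministic covering of $S$ at a carefully chosen scale, combined with a Borel--Cantelli argument. First I would fix the scale $r_n = \gamma\,(\log|S_n|/|S_n|)^{1/d_x}$ for a constant $\gamma > 0$ to be selected later. Using the compactness of $S$ together with the assumption that $S$ is the closure of $S^o$ with smooth boundary, I would choose a maximal $(r_n/2)$-packing $\{z_1,\dots,z_{N_n}\}$ of $S$, which is simultaneously an $(r_n/2)$-net. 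Standard volume comparison gives $N_n = \Theta(r_n^{-d_x}) = \Theta(|S_n|/\log|S_n|)$, and the smooth-boundary (interior cone) condition guarantees a uniform lower bound $\mathrm{vol}\big(B(z_j,r_n/2)\cap S\big) \ge \beta\, r_n^{d_x}$ for some constant $\beta>0$, hence a probability mass at least $\beta'\gamma^{d_x}\,\log|S_n|/|S_n|$ per ball under the (asymptotically) uniform sampling measure, with $\beta' = \beta/\mathrm{vol}(S)$.

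Next I would estimate the probability that some net ball is missed by all samples. For a fixed $j$, the probability that $B(z_j,r_n/2)\cap S$ contains none of the $|S_n|$ samples is at most $(1-\beta'\gamma^{d_x}\log|S_n|/|S_n|)^{|S_n|} \le \exp(-\beta'\gamma^{d_x}\log|S_n|) = |S_n|^{-\beta'\gamma^{d_x}}$. A union bound over the $N_n$ balls then bounds the probability that at least one ball is empty by $N_n\,|S_n|^{-\beta'\gamma^{d_x}} = O\big(|S_n|^{\,1-\beta'\gamma^{d_x}}/\log|S_n|\big)$. Since $|S_n|=\Theta(n)$, choosing $\gamma$ large enough that $\beta'\gamma^{d_x} > 2$ makes these probabilities summable over $n$, so by the Borel--Cantelli lemma, with probability one only finitely many iterations have an empty net ball.

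On the resulting almost-sure event, for all sufficiently large $n$ every ball $B(z_j,r_n/2)$ contains a sample of $S_n$. Given any $x \in S$, the net property yields some $z_j$ with $\|x-z_j\|_2 \le r_n/2$, and that ball contains a sample $z'\in S_n$ with $\|z_j - z'\|_2 \le r_n/2$, so $\|x-z'\|_2 \le r_n$. Consequently $\dispersion_n \le r_n = O((\log|S_n|/|S_n|)^{1/d_x})$, which is the claimed bound (Eq.~\eqref{eqn:dispersion}).

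The main obstacle is that the states in $S_n$ are not literally i.i.d. uniform: each added state is produced by {\tt ExtendBackwards} from a uniform interior sample $z_\mathrm{s}$, which introduces dependence among the points and a bounded steering displacement of order $T_0$. I would resolve this by leaning on the uniformity result of \cite{Karaman.Frazzoli:IJRR10} to justify the per-ball probability lower bound, and by observing that the $O(T_0)$ displacement contributes only a constant and does not affect the asymptotic rate; alternatively, one can first establish the dispersion bound for the independent uniform samples $\{z_\mathrm{s}\}$ and then transfer it to $S_n$. Verifying rigorously that the dependence between successive iterations does not degrade the coverage probability, and that the boundary cells retain the measure lower bound $\beta r_n^{d_x}$, is the most delicate part of the argument.
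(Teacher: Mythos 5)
Your proposal is correct and follows essentially the same route as the paper's proof: a covering of $S$ at the critical scale $(\log|S_n|/|S_n|)^{1/d_x}$ (the paper uses a grid of cubes where you use a maximal packing/net of balls), a per-region emptiness probability bound of the form $|S_n|^{-c\gamma^{d_x}}$, a union bound over the $\Theta(|S_n|/\log|S_n|)$ regions, summability for a sufficiently large covering constant, and Borel--Cantelli. The paper likewise leans on the asymptotic uniformity of the samples and handles the boundary by a separate, parallel argument for cells meeting $\partial S$, so your flagged ``delicate points'' are treated no more rigorously there than in your sketch.
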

The proof is based on the fact that, if we partition $\reals^{d_x}$ into  cells of volume $O\left({\log(|S_n|)}/{|S_n|}\right)$, then, almost surely, every cell contains at least an element of $S_n$, as $|S_n|$ approaches infinity. The above lemma leads to the following results.
\begin{lemma} \label{lemma:local_consistency}
The MDP sequence $\{ {\cal M}_n \}_{n=0}^{\infty}$ and holding times $\{ \Delta t_n \}_{n=0}^{\infty}$ returned by Algorithm~\ref{algorithm:main} are locally consistent with the system described by Eq.~\eqref{eqn:system} for large $n$ with probability one.
\end{lemma}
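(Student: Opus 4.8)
The plan is to verify the four local-consistency conditions \eqref{eqn:lc1}--\eqref{eqn:lc4} directly for the MDPs produced by Algorithm~\ref{algorithm:main}, invoking Lemma~\ref{lemma:dispersion} to control the relevant length scales. The key observation is that the holding time $\Delta t_n(z) = {\tt ComputeHoldingTime}(z,|S_n|) = \gamma_t (\log|S_n|/|S_n|)^{\dummyasyn \dummyrate \rho/d_x}$ is a deterministic positive quantity that tends to $0$ as $|S_n| \to \infty$, which, together with the fact that $|S_n| = \Theta(n) \to \infty$ almost surely under the sampling strategy, immediately gives \eqref{eqn:lc1}. This is the easy part and sets the common scale $\tau$ against which the moment conditions must be checked.

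Next I would turn to the two moment conditions \eqref{eqn:lc2}--\eqref{eqn:lc3}. Here the essential input is the construction of the transition probabilities in the procedure ${\tt ComputeTranProb}$ (Section~\ref{para:tranprob}). For the linear-system construction, properties (i) and (ii) state exactly that $\sum_{z' \in Z_\mathrm{near}} p_n(z')(z'-z) = f(z,v)\tau + o(\tau)$ and $\sum_{z' \in Z_\mathrm{near}} p_n(z')(z'-z)(z'-z)^T = F(z,v)F(z,v)^T\tau + f(z,v)f(z,v)^T\tau^2 + o(\tau)$. Dividing by $\tau = \Delta t_n(z)$ and using that the first moment contributes a term of order $\tau$ to the second moment (so $f f^T \tau^2/\tau = f f^T \tau \to 0$), the conditional mean increment divided by $\Delta t_n$ converges to $f(z,v)$ and the conditional covariance divided by $\Delta t_n$ converges to $F(z,v)F(z,v)^T$, which is precisely \eqref{eqn:lc2}--\eqref{eqn:lc3}. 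The point requiring care is that such a $p_n$ actually \emph{exists} on the sampled support: one must ensure $z + f(z,v)\tau$ lies in the convex hull of $Z_\mathrm{near} = {\tt Nearest}(z+f(z,v)\tau, S_n, s)$ and that the nearest-neighbor distances satisfy $\|z'-z\|_2 = O(\tau)$, so that the $o(\tau)$ error terms are genuinely negligible relative to $\tau$. This is where Lemma~\ref{lemma:dispersion} enters: because $\dispersion_n = O((\log|S_n|/|S_n|)^{1/d_x})$ almost surely, and because the holding-time exponent $\dummyasyn\dummyrate\rho/d_x$ is chosen so that $\dispersion_n = o(\tau)$ (the parameters $\dummyasyn, \dummyrate \in (0,1), (0,1]$ and $\rho \in (0,0.5]$ make $\dispersion_n$ shrink faster than $\tau$), the sampled states near $z+f(z,v)\tau$ are dense enough at scale $\tau$ for the convex-hull and $O(\tau)$-spacing requirements to hold almost surely for large $n$. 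For the Gaussian-approximation construction one argues instead that as $|Z_\mathrm{near}| = \Theta(\log|S_n|) \to \infty$ the normalized local Gaussian weights match the first two moments up to $o(\tau)$, as asserted in Section~\ref{para:tranprob}.

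Finally, condition \eqref{eqn:lc4} requires $\sup_{i,\omega} \|\Delta\xi_i^n\|_2 \to 0$: the maximum one-step jump vanishes. Since every successor state $z' \in Z_\mathrm{near}$ satisfies $\|z'-z\|_2 = O(\tau) = O\big((\log|S_n|/|S_n|)^{\dummyasyn\dummyrate\rho/d_x}\big)$ by construction, and this bound is uniform over states and sample paths, the supremum of the jump sizes is $O(\tau) \to 0$ almost surely, giving \eqref{eqn:lc4}. I expect the main obstacle to be the rigorous verification that the nearest-neighbor support $Z_\mathrm{near}$ is both well-placed (contains $z+f(z,v)\tau$ in its convex hull) and tightly spaced (all within $O(\tau)$) \emph{simultaneously and almost surely for all large $n$}; this is exactly the step that couples the deterministic holding-time schedule to the random dispersion bound of Lemma~\ref{lemma:dispersion}, and it must be shown that the chosen exponent makes $\dispersion_n/\tau \to 0$ so that the discretization is fine enough at the holding-time scale. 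Once this separation of scales is established, conditions \eqref{eqn:lc1}--\eqref{eqn:lc4} follow from the moment-matching properties of ${\tt ComputeTranProb}$, and local consistency holds almost surely for large $n$. \qed
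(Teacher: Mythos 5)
Your verification of conditions \eqref{eqn:lc2}--\eqref{eqn:lc4} is sound and in fact more detailed than the paper's, which dismisses them in one sentence as immediate from the construction of ${\tt ComputeTranProb}$; your separation-of-scales observation that $\dispersion_n = O((\log|S_n|/|S_n|)^{1/d_x}) = o(\tau)$ because $\dummyasyn\dummyrate\rho < 1$ is correct and worth making explicit. However, there is a genuine gap in your treatment of condition \eqref{eqn:lc1}, and it is precisely the point the paper's proof is devoted to. You write that $\Delta t_n(z) = {\tt ComputeHoldingTime}(z,|S_n|)$ is a quantity that tends to $0$ because $|S_n|\to\infty$. That identity does not hold for every $z$ and every $n$: in Algorithm~\ref{algorithm:main}, a state's holding time is assigned when the state is created and is only \emph{recomputed} inside ${\tt Update}$ (Line~\ref{line:holdingtime} of Algorithm~\ref{algorithm:update}), which is invoked on $z$ only when $z$ happens to fall into the set $Z_\mathrm{update}$ at Line~\ref{line:main:compute_update}. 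Between such updates, $\Delta t_n(z)$ is simply carried over from the previous iteration via Line~\ref{line:copiedton}. If some state were selected for update only finitely often, its holding time would freeze at a positive value and \eqref{eqn:lc1} would fail at that state.

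The missing idea is therefore to show that, with probability one, every state is updated infinitely often. The paper does this by appealing to Lemma~\ref{lemma:dispersion}: setting $\dispersion_n(z)=\min_{z'\in S_n}\|z'-z\|_2$, almost surely $\dispersion_n(z)\to 0$, so there are infinitely many iterations $n$ at which $\dispersion_n(z)<\dispersion_{n-1}(z)$, i.e., the newly added state $z_\mathrm{new}$ is closer to $z$ than any previously existing state; at each such iteration $z$ is placed in $Z_\mathrm{update}$ and its holding time is refreshed to $\gamma_t(\log|S_n|/|S_n|)^{\dummyasyn\dummyrate\rho/d_x}$, which tends to $0$. Your proposal uses Lemma~\ref{lemma:dispersion} only to control the geometry of $Z_\mathrm{near}$ for the moment conditions; you also need it for this infinitely-often-updated argument before you may conclude \eqref{eqn:lc1}. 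The same caveat applies implicitly to \eqref{eqn:lc2}--\eqref{eqn:lc4}, since the transition probabilities attached to $z$ are likewise only rebuilt when $z$ is updated. With that step added, your argument matches the paper's and the rest goes through.
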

Theorem~\ref{theorem:convergence_traj} and Lemma~\ref{lemma:local_consistency} together imply that the trajectories of the controlled Markov chains approximate those of the original stochastic dynamical system in Eq.~\eqref{eqn:system} arbitrarily well as $n$ approaches to infinity. Moreover, recall that  $||\cdot||_{S_n}$ is the sup-norm over $S_n$, the following theorem shows that $J_n^*$ converges uniformly, with probability one, to the original optimal value function $J^*$.
\begin{theorem} \label{theorem:JoptnJoptas}
Given $n \in \naturals$, for all $z \in S_n$, $J_n^*(z)$ denotes the optimal value function evaluated at state $z$ for the finite-state MDP ${\cal M}_n$ returned by Algorithm~\ref{algorithm:main}. Then, the following event holds with probability one:
$$
\lim_{n \rightarrow \infty} ||J_n^* - J^*||_{S_n} = 0.
$$
In other words, $J^*_n$ converges to $J^*$ uniformly. In particular, 
$$
||J_n^* - J^*||_{S_n}=O\Large((\log{|S_n|}/|S_n|)^{\rho/d_x}\Large).
$$
\end{theorem}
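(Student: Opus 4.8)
The plan is to assemble the statement from three earlier results: the local consistency of the algorithm's MDP sequence (Lemma~\ref{lemma:local_consistency}), the uniform-convergence rate for locally consistent chains expressed in terms of the dispersion (Theorem~\ref{theorem:uniform_convergence}), and the almost-sure dispersion bound (Lemma~\ref{lemma:dispersion}). The key observation is that Theorem~\ref{theorem:uniform_convergence} is a \emph{deterministic} implication: once we know that $\{{\cal M}_n\}$ and $\{\Delta t_n\}$ are locally consistent, it yields $||J_n^* - J^*||_{S_n} = O(\dispersion_n^{\rho})$ with no further randomness. All probabilistic content therefore enters only through the two lemmas, and the remaining task is to glue the corresponding probability-one events together.

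First I would fix the probability-one event $E_1$ on which the MDP sequence and holding times returned by Algorithm~\ref{algorithm:main} are locally consistent with Eq.~\eqref{eqn:system} for all large $n$ (Lemma~\ref{lemma:local_consistency}), and the probability-one event $E_2$ on which $\dispersion_n = O((\log{|S_n|}/|S_n|)^{1/d_x})$ (Lemma~\ref{lemma:dispersion}). Since each has probability one, their intersection $E = E_1 \cap E_2$ also has probability one, and I would carry out the rest of the argument pathwise on the realizations in $E$.

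On $E$, local consistency holds, and the standing assumptions of Section~\ref{section:problem} --- boundedness and continuity of $f, F, g, h$, the full rank (hence nondegeneracy) of $FF^T$, and H\"{o}lder continuity of $g$ with exponent $2\rho$ --- are exactly the hypotheses required by Theorem~\ref{theorem:uniform_convergence}. Applying it gives
$$
||J_n^* - J^*||_{S_n} = O(\dispersion_n^{\rho}).
$$
Substituting the dispersion bound available on $E$ and raising it to the power $\rho$ yields
$$
||J_n^* - J^*||_{S_n} = O\left( (\log{|S_n|}/|S_n|)^{\rho/d_x} \right),
$$
which is the claimed rate. Finally, because the sampling strategy guarantees $|S_n| = \Theta(n) \to \infty$, the right-hand side tends to $0$, so $\lim_{n\to\infty}||J_n^* - J^*||_{S_n} = 0$ on $E$, i.e.\ with probability one.

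The main obstacle is not in this gluing step, which is routine, but is entirely pushed into Lemma~\ref{lemma:local_consistency}: one must verify that the transition probabilities built on demand by {\tt ComputeTranProb}, together with the holding times $\tau = \Theta\big((\log k / k)^{\dummyasyn\dummyrate\rho/d_x}\big)$ from {\tt ComputeHoldingTime}, genuinely satisfy conditions~\eqref{eqn:lc1}--\eqref{eqn:lc4} almost surely --- in particular that the moment-matching identities (i)--(iii) for $P_n$ hold up to $o(\tau)$ and that $||\Delta\xi^n_i||_2 \to 0$ uniformly. This hinges on the dispersion of $S_n$ shrinking fast enough relative to $\tau$, which is precisely what Lemma~\ref{lemma:dispersion} is arranged to provide. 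Granting those two lemmas, the present theorem reduces to the short deduction above.
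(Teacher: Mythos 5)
Your proposal is correct and follows essentially the same route as the paper, which proves this theorem in one line by combining Lemmas~\ref{lemma:dispersion}--\ref{lemma:local_consistency} with Theorems~\ref{theorem:convergence_cost}--\ref{theorem:uniform_convergence}; your write-up simply makes explicit the intersection of the two probability-one events and the pathwise application of the deterministic rate bound. The only cosmetic difference is that you rely solely on Theorem~\ref{theorem:uniform_convergence} for both the rate and the limit, whereas the paper also cites Theorem~\ref{theorem:convergence_cost}, which is redundant here since the $O(\dispersion_n^{\rho})$ bound already implies convergence to zero.
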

The proof follows immediately from Lemmas~\ref{lemma:dispersion}-\ref{lemma:local_consistency} and  Theorems~\ref{theorem:convergence_cost}-\ref{theorem:uniform_convergence}. The theorem suggests that we can compute $J_n^*$ for each discrete MDP ${\cal M}_n$ before sampling more states to construct ${\cal M}_{n+1}$. Indeed, in Algorithm \ref{algorithm:main}, when updated states are chosen randomly as subsets of $S_n$, and $L_n$ is large enough, we compute $J^*_n$ using asynchronous value iterations~\cite{Bertsekas:2001:DPO:516163,Bertsekas:1989:PDC:59912}. Subsequent theorems present stronger results.

We will prove the asymptotic optimality of the cost value $J_n$ returned by the iMDP algorithm when $n$ approaches infinity without directly approximating $J^*_n$ for each $n$. We first consider the case when we can solve the Bellman update (Eq.~\ref{eqn:Bellman}) exactly and $1 \leq L_n, \ K_n = \Theta(|S_n|^{\dummyasyn}) < |S_n|$.
\begin{theorem}
\label{theoremAlmostSureConvergence}
For all $z \in S_n$, $J_n(z)$ is the cost value of the state $z$ computed by Algorithm \ref{algorithm:main} and Algorithm \ref{algorithm:update} after $n$ iterations with $1 \leq L_n$, and $ K_n = \Theta(|S_n|^{\dummyasyn}) < |S_n| $. Let $J_{n,\mu_n}$ be the cost-to-go function of the returned policy $\mu_n$ on the discrete MDP ${\cal M}_n$. If the Bellman update at Eq.~\ref{eqn:Bellman} is solved exactly, then, the following events hold with probability one:
\begin{itemize}
\item[i.] $\lim_{n \rightarrow \infty}||J_n - J^*_n||_{S_n} = 0$, and $\lim_{n \rightarrow \infty}||J_n - J^*||_{S_n} = 0$,
\item[ii.] $\lim_{n \rightarrow \infty}|J_{n,\mu_n}(z) - J^*(z)| =0, \ \ \forall z \in S_n$.
\end{itemize}
\end{theorem}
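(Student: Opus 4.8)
The plan is to concentrate on the first assertion in (i), namely $\lim_{n\to\infty}\|J_n-J_n^*\|_{S_n}=0$, since everything else follows from it. The second assertion in (i) is then immediate from the triangle inequality $\|J_n-J^*\|_{S_n}\le \|J_n-J_n^*\|_{S_n}+\|J_n^*-J^*\|_{S_n}$ together with Theorem~\ref{theorem:JoptnJoptas}, which already gives $\|J_n^*-J^*\|_{S_n}\to 0$ almost surely. For (ii), I would show that the greedy policy $\mu_n$ attached to the computed values is $\epsilon_n$-optimal for ${\cal M}_n$ with $\epsilon_n\to 0$, and then invoke the $\epsilon_n$-optimal-policy part of Theorem~\ref{theorem:convergence_cost} (or, equivalently, bound $|J_{n,\mu_n}(z)-J^*(z)|\le \|J_{n,\mu_n}-J_n^*\|_{S_n}+\|J_n^*-J^*\|_{S_n}$).

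\textbf{Proving $\|J_n-J_n^*\|_{S_n}\to 0$.} First I would record that $\{J_n\}$ is uniformly bounded: since $g,h$ are bounded and $\tfrac{\Delta t_n}{1-\alpha^{\Delta t_n}}\to \tfrac{1}{|\ln\alpha|}$, a Bellman backup maps bounded functions into a fixed bounded range, and the initializations in Algorithm~\ref{algorithm:main} are bounded. Next I would establish that, with probability one, every state of $\bigcup_m S_m$ is updated infinitely often. Because $Z_\mathrm{update}$ consists of the $K_n=\Theta(|S_n|^{\theta})$ interior states nearest the freshly added $z_\mathrm{new}$, and (Lemma~\ref{lemma:dispersion}) samples are asymptotically uniform, a fixed state lies in $Z_\mathrm{update}$ at iteration $n$ with probability $\Theta(K_n/|S_n|)=\Theta(n^{\theta-1})$; since $\sum_n n^{\theta-1}=\infty$ for $\theta\in(0,1]$, a conditional Borel--Cantelli argument yields the claim. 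Finally, writing $T_n$ for the Bellman operator of ${\cal M}_n$ (whose unique fixed point is $J_n^*$), each backup at a state $z$ replaces $J_n(z)$ by $(T_nJ_n)(z)$ and hence contracts the pointwise error by the factor $\alpha^{\Delta t_n(z)}$ toward the successor error, while the boundary states carry the exact value $h=J_n^*$. I would then track the error $e_n:=\|J_n-J_n^*\|_{S_n}$ through two competing effects: a perturbation $\delta_n:=\|J_{n-1}^*-J_n^*\|_{S_{n-1}}\to 0$ (from the moving fixed point, controlled by Theorem~\ref{theorem:JoptnJoptas}) that reinitialization injects each iteration, and the contraction supplied by the infinitely many backups each state receives before exiting at the boundary.

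\textbf{Main obstacle and part (ii).} The hard part is that the per-backup contraction factor $\alpha^{\Delta t_n}$ tends to $1$ as $\Delta t_n\to 0$, so no single sweep contracts appreciably; convergence must instead be extracted from the cumulative continuous-time discount $\alpha^{\sum\Delta t_n}$ accrued along the (a.s. finite) trajectory to the boundary, balanced against the non-stationarity of the MDP. Carrying this out rigorously---showing the backups outrun the perturbations $\delta_n$---is the technical core, and I expect it to force a quantitative conclusion $\|J_n-J_n^*\|_{S_n}=o(\Delta t_n)$ rather than mere convergence. This rate is exactly what part (ii) needs: $\mu_n$ is greedy with respect to a value within $e_n$ of $J_n^*$, so the standard bound $\|J_{n,\mu_n}-J_n^*\|_{S_n}\le \tfrac{2\alpha^{\Delta t_n}}{1-\alpha^{\Delta t_n}}\,e_n=O(e_n/\Delta t_n)$ applies, and the $O(1/\Delta t_n)$ blow-up is absorbed precisely because the holding-time exponent $\theta\varsigma\rho/d_x$ is chosen smaller than $\rho/d_x$, making $\Delta t_n$ decay strictly slower than the value-error rate $\zeta_n^{\rho}=O((\log|S_n|/|S_n|)^{\rho/d_x})$ of Theorem~\ref{theorem:uniform_convergence}. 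Hence $\epsilon_n:=\|J_{n,\mu_n}-J_n^*\|_{S_n}\to 0$, and combining this with Theorem~\ref{theorem:convergence_cost} (or directly with Theorem~\ref{theorem:JoptnJoptas}) gives $|J_{n,\mu_n}(z)-J^*(z)|\to 0$ for all $z\in S_n$, completing (ii).
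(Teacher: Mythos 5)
Your high-level plan coincides with the paper's: reduce everything to $\lim_{n}\|J_n-J_n^*\|_{S_n}=0$, obtain the second half of (i) from the triangle inequality and Theorem~\ref{theorem:JoptnJoptas}, and obtain (ii) from $\epsilon_n$-optimality of $\mu_n$ together with Theorem~\ref{theorem:convergence_cost}. You also correctly locate the crux: the one-step contraction factor $\alpha^{\Delta t_n}\to 1$, so convergence must be extracted from the divergence of $\sum_n\Delta t_n$ against the shrinking perturbations $\delta_n:=\|J_n^*-J_{n-1}^*\|_\infty$, and the decisive comparison is between the holding-time exponent $\dummyasyn\dummyrate\rho/d_x$ and the value-error exponent $\rho/d_x$. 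But the proposal stops exactly where the proof has to start. The inequality $e_n\le\alpha^{\Delta t_n}(\delta_n+e_{n-1})$ is the easy part; the content of the paper's argument is the control of the accumulated perturbation $A_n$ defined by $A_n=\alpha^{\Delta t_n}(\delta_n+A_{n-1})$, which is done by splitting $\delta_n=\delta_n^{1-\dummyrate}\cdot\delta_n^{\dummyrate}$, extracting a factor $\epsilon$, and proving that the auxiliary sequence $B_n$ is bounded via the monotonicity of $r(x)=\beta^x x/(1-\beta^x)$ --- this is precisely where $\dummyrate<1$ and the rate in Theorem~\ref{theorem:JoptnJoptas} enter, and you only announce that such an argument should exist. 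Likewise, ``every state is updated infinitely often'' is necessary but far from sufficient in the asynchronous case $K_n=\Theta(|S_n|^{\dummyasyn})<|S_n|$: the paper needs the quantitative sweep condition $k\le|S_{m_k}|\le k^{1/\dummyasyn}$ on the times $m_k$ by which every state has been refreshed, an induction over admissible update subsequences, and a separate Lemma~\ref{lemma:sumtozero}; none of this appears in your sketch.

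Second, the quantitative conclusion you anticipate, $\|J_n-J_n^*\|_{S_n}=o(\Delta t_n)$, is not what the recursion can deliver, and so your route to (ii) would not close. The upper bound obtained from $e_n\le\alpha^{\Delta t_n}(\delta_n+e_{n-1})$ equilibrates at $e_n\approx\delta_n/(|\ln\alpha|\,\Delta t_n)=O\bigl((\log n/n)^{(1-\dummyasyn\dummyrate)\rho/d_x}\bigr)$, so $e_n/\Delta t_n\to 0$ would require $\dummyasyn\dummyrate<1/2$, whereas the algorithm permits (and the paper recommends) $\dummyrate\in[0.999,1)$ with $\dummyasyn$ up to $1$. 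Consequently the standard greedy-policy bound $\|J_{n,\mu_n}-J_n^*\|_{S_n}=O(e_n/\Delta t_n)$ that you invoke need not tend to zero for typical parameters. The paper argues (ii) by a softer route: from $\|J_n-J_n^*\|_\infty\to 0$ it infers that the minimizers of the Bellman update \eqref{eqn:Bellman} computed from $J_n$ converge to those computed from $J_n^*$, hence $\mu_n$ is $\epsilon_n$-optimal with $\epsilon_n\to 0$, and then applies Theorem~\ref{theorem:convergence_cost}. To salvage your quantitative route you would have to either restrict the parameters so that $\dummyasyn\dummyrate<1/2$ or replace the $O(e_n/\Delta t_n)$ performance bound by an argument at the level of the minimizing controls.
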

Theorem \ref{theoremAlmostSureConvergence} enables an incremental computation of the optimal cost $J^*$ without the need to compute $J^*_n$ exactly before sampling more samples. Moreover, cost-to-go functions $J_{n,\mu_n}$ induced by approximating policies $\mu_n$ also converges pointwise to the optimal cost-to-go $J^*$ with probability one.

When we solve the Bellman update at Eq.~\ref{eqn:Bellman} via sampling, the following result holds.
\begin{theorem}
\label{theoremAlmostSureConvergenceSampleControl}
For all $z \in S_n$, $J_n(z)$ is the cost value of the state $z$ computed by Algorithm \ref{algorithm:main} and Algorithm \ref{algorithm:update} after $n$ iterations with $1 \leq L_n$, and $K_n = \Theta(|S_n|^{\theta})< |S_n|$. Let $J_{n,\mu_n}$ be the cost-to-go function of the returned policy $\mu_n$ on the discrete MDP ${\cal M}_n$. If the Bellman update at Eq.~\ref{eqn:Bellman} is solved via sampling such that $\lim_{n \rightarrow \infty} |U_n| = \infty$, then 
\begin{itemize}
\item[i.] $||J_n-J^*_n||_{S_n}$ converges to $0$ in probability. Thus, $J_n$ converges uniformly to $J^*$ in probability,
\item[ii.] $\lim_{n \rightarrow \infty}|J_{n,\mu_n}(z) - J^*(z)| =0$  for all $z \in S_n$ with probability one.
\end{itemize}
\end{theorem}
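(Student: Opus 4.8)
The plan is to treat the sampling-based value iteration as a vanishing perturbation of the exact value iteration analyzed in Theorem~\ref{theoremAlmostSureConvergence}, and to bootstrap from that result. For a bounded function $W$ on $S_n$, write the exact Bellman operator
$$
(T_n W)(z) = \min_{v \in U}\Big\{ \tau_n(z)\, g(z,v) + \alpha^{\tau_n(z)}\, \EE_{P_n}[\, W(y) \given z,v\,] \Big\},
$$
where $\tau_n(z) = \Delta t_n(z)$, and let $\widehat{T}_n$ be the sampled operator obtained by replacing $\min_{v \in U}$ with $\min_{v \in U_n}$. Since $U_n \subseteq U$ we have $\widehat{T}_n W \ge T_n W$ pointwise, and the one-step discrepancy
$$
e_n := \big\| \widehat{T}_n W - T_n W \big\|_{S_n}
$$
is governed by how densely the randomly constructed controls $U_n$ populate the compact set $U$. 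First I would show $e_n \to 0$: the map $v \mapsto \tau_n(z)\,g(z,v) + \alpha^{\tau_n(z)}\,\EE_{P_n}[W(y)\given z,v]$ is uniformly continuous on $U$ (compactness of $U$, continuity of $g(z,\cdot)$, and continuity of $f,F$ entering $P_n$), so the gap between the minimum over $U_n$ and over $U$ is controlled by the modulus of continuity evaluated at the dispersion of $U_n$ in $U$. Because $\lim_{n\to\infty}|U_n| = \infty$ forces that dispersion to shrink, $e_n$ vanishes; crucially, the uniform-over-$S_n$ version of this statement holds only \emph{in probability}, since controlling the sampling error simultaneously at all $\Theta(n)$ states (each drawing its own set of $\Theta(\log|S_n|)$ controls) requires a union bound whose aggregate failure probability can be driven to zero but need not be summable.

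For part (i), I would view the iterates $J_n$ produced by Lines~\ref{line:main:compute_update}--\ref{line:main:update} as asynchronous value iterations driven by $\widehat{T}_n$ rather than $T_n$, with additive per-update error $e_n$. Local consistency (Lemma~\ref{lemma:local_consistency}) supplies the same machinery used in Theorem~\ref{theoremAlmostSureConvergence}, so that $\|J_n - J_n^*\|_{S_n}$ is bounded by the exact-scheme error plus the accumulated contribution of the perturbations $\{e_m\}$. The exact-scheme term tends to zero almost surely by Theorem~\ref{theoremAlmostSureConvergence}, and the perturbation term tends to zero in probability by the previous paragraph, giving $\|J_n - J_n^*\|_{S_n} \to 0$ in probability. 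Combining this with $\|J_n^* - J^*\|_{S_n} = O\big((\log|S_n|/|S_n|)^{\rho/d_x}\big) \to 0$ almost surely (Theorem~\ref{theorem:JoptnJoptas}) through the triangle inequality yields uniform convergence of $J_n$ to $J^*$ in probability.

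For part (ii), I would argue pointwise and invoke the Kushner cost-convergence result (Theorem~\ref{theorem:convergence_cost}) rather than the uniform cost-value bound. The returned control $\mu_n(\cdot)$ is greedy with respect to $J_n$ over $U_n$, so a Bellman-residual argument inside $\mathcal{M}_n$ shows $\mu_n$ is an $\epsilon_n$-optimal policy of $\mathcal{M}_n$ with $\epsilon_n$ controlled by $\|J_n - J_n^*\|_{S_n}$ and $e_n$. Because part (ii) is a single-state statement, I can pass to the almost-sure mode: for fixed $z$, the controls drawn during the updates of $z$ become dense in $U$ and $J_n$ stabilizes near $J_n^*$ along the relevant updates, so the residual at $z$ vanishes almost surely, whence $\epsilon_n \to 0$ almost surely. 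Theorem~\ref{theorem:convergence_cost} then delivers $\lim_{n\to\infty}|J_{n,\mu_n}(z) - J^*(z)| = 0$ almost surely.

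The hard part will be the first two paragraphs taken together: rigorously quantifying the control-sampling dispersion and transferring it to a uniform-in-$S_n$ bound on $e_n$, while simultaneously tracking how the per-step discount $\alpha^{\tau_n(z)} \to 1$ degrades the naive contraction constant, so that the accumulated perturbation must be aggregated along trajectories up to the exit time (where the total discount remains bounded) rather than through a crude one-step factor. The asymmetry between the two conclusions — convergence in probability for the uniform statement (i) and almost-sure convergence for the pointwise statement (ii) — is exactly what this step dictates, since the union bound over all $\Theta(n)$ states is affordable only in probability, whereas a single fixed state escapes the union bound and retains almost-sure control.
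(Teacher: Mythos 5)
Your part (i) is sound and is essentially the paper's own route in different clothing: the paper's Lemma~\ref{lemma:bestcontrolinprob} shows that at each state the best sampled control is near-optimal with probability tending to one (the paper assumes the set of $\epsilon$-optimal controls has positive Lebesgue measure and bounds the miss probability by $(1-m(A^n_\epsilon)/m(U))^{|U_n|}$, where you instead use uniform continuity and the dispersion of $U_n$), and both routes deliver only convergence in probability for the sup-norm statement over $\Theta(n)$ states.

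The gap is in part (ii). You propose to show that $\mu_n$ is an $\epsilon_n$-optimal policy of ${\cal M}_n$ with $\epsilon_n \to 0$ almost surely and then invoke Theorem~\ref{theorem:convergence_cost}. But $\epsilon_n$-optimality is by definition the sup-norm statement $||J_{n,\mu_n}-J^*_n||_{\infty} \leq \epsilon_n$, and, more to the point, $J_{n,\mu_n}(z)$ is an expectation over the entire controlled chain started at $z$: it depends on the sampled controls chosen at every state the chain visits, not only at $z$. So the quantity you must control is the Bellman residual along all reachable states, which is exactly the union-bound-over-all-states obstruction that you yourself identified as forcing mere convergence in probability in part (i); the claim that ``a single fixed state escapes the union bound'' does not apply here. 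Moreover, even the single-state residual need not vanish almost surely: with $|U_n|=\Theta(\log|S_n|)$ the per-iteration failure probability for a fixed tolerance $\epsilon$ is of order $|S_n|^{-c_\epsilon}$ with $c_\epsilon$ possibly less than one, so Borel--Cantelli does not apply, and the paper's Lemma~\ref{lemma:bestcontrolinprob} accordingly claims only $||\overline{v}_n - v^*_n||_2 \overset{p}{\rightarrow} 0$.

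The paper closes part (ii) by a mechanism you would need to replicate or replace: it couples, step by step along the chain started at the fixed $z$, the trajectory $\overline{\xi}^n$ generated by the sampled-greedy controls with the trajectory $\xi^n$ generated by the exact minimizing controls of Theorem~\ref{theoremAlmostSureConvergence} (Lemma~\ref{lemma:controlconverge2} plus induction over the step index), obtaining $(\overline{\xi}^n(\cdot)-\xi^n(\cdot),\,\overline{u}^n(\cdot)-u^n(\cdot)) \overset{p}{\rightarrow} (0,0)$. Since $(\xi^n(\cdot),u^n(\cdot))$ converges in distribution to the optimal pair $(x^*(\cdot),u^*(\cdot))$ w.p.1 by Theorem~\ref{theorem:convergence_traj}, and an in-probability perturbation preserves convergence in distribution, the costs converge. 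The essential point your proposal misses is that the trajectory-level coupling tolerates perturbations that vanish only in probability while still yielding the stated conclusion, whereas the $\epsilon_n$-optimality route through Theorem~\ref{theorem:convergence_cost} demands an almost-sure sup-norm bound that the control-sampling scheme cannot supply.
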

We emphasize that while the convergence of $J_n$ to $J^*$ is weaker than the convergence in Theorem \ref{theoremAlmostSureConvergence}, the convergence of $J_{n,\mu_n}$ to $J^*$ remains intact. Importantly, Theorem~\ref{theorem:convergence_traj} and Theorems~\ref{theoremAlmostSureConvergence}-\ref{theoremAlmostSureConvergenceSampleControl} together assert that starting from any initial state, trajectories and control processes provided by the iMDP algorithm approximate arbitrarily well optimal trajectories and optimal control processes of the original continuous problem. More precisely, with probability one, the induced random probability measures of approximating trajectories and approximating control processes converge weakly to the probability measures of optimal trajectories and optimal control processes of the continuous problem.

Finally, the next theorem evaluates the quality of any-time control policies returned by Algorithm~\ref{algorithm:control}.
\begin{theorem} \label{theoremControlQuality}
Let $\overline{\mu}_n:S \rightarrow U$ be the interpolated policy on $S$ of $\mu_n:S_n \rightarrow U$ as described in Algorithm~\ref{algorithm:control}:
$$
\forall z \in S: \ \ \overline{\mu}_n(z)= \mu_n(y_n) \text{ where } y_n=\argmin_{z' \in S_n} ||z'-z||_2.
$$
Then there exists an optimal control policy $\mu^*$ of the original problem\footnote{Otherwise, an optimal relaxed control policy $m^*$ exists~\cite{Kushner2000}, and $\overline{\mu}_n$ approximates $m^*$ arbitrarily well.} so that for all $z \in S$:
$$
\lim_{n \rightarrow \infty} \overline{\mu}_n(z) = \mu^*(z) \text{ w.p.1},
$$
if $\mu^*$ is continuous at $z$.
\end{theorem}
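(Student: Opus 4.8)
The plan is to reduce the pointwise convergence of the feedback $\overline{\mu}_n$ to a convergence-of-argmin statement for a suitable state-action value function, and then to invoke compactness of $U$ together with uniform convergence of that value function. I would work throughout on the intersection of the probability-one events on which Lemma~\ref{lemma:dispersion} holds (so $\dispersion_n\to 0$, hence $y_n\to z$), Lemma~\ref{lemma:local_consistency} holds (local consistency), and Theorems~\ref{theoremAlmostSureConvergence}--\ref{theoremAlmostSureConvergenceSampleControl} hold ($J_n\to J^*$, with the rate $O(\dispersion_n^{\rho})$ inherited from Theorem~\ref{theorem:JoptnJoptas}); this intersection still has probability one. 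Under the nondegeneracy of $FF^T$ the discounted HJB equation
\[
-\ln\alpha\, J^*(z) = \min_{v\in U}\Big\{ g(z,v) + f(z,v)^T\nabla J^*(z) + \tfrac12 \mathrm{tr}\big(F(z,v)F(z,v)^T \nabla^2 J^*(z)\big)\Big\}
\]
admits a classical $C^2$ solution by standard elliptic regularity (the H\"older exponent $2\rho$ of $g$ feeds the Schauder estimates), and a measurable optimal Markov selection $\mu^*$ exists with $\mu^*(z)$ a minimizer of the right-hand side. Writing $\mathcal{L}^v$ for the controlled generator and $\mathcal{Q}(z,v) := g(z,v) + \mathcal{L}^v J^*(z) + \ln\alpha\, J^*(z)$, the set of optimal controls at $z$ is $\mathcal{A}(z) := \argmin_{v\in U}\mathcal{Q}(z,v)$, and $\mu^*(z)\in\mathcal{A}(z)$.

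\textbf{Step 2 (uniform convergence of the normalized value function).} By construction $\overline\mu_n(z)=\mu_n(y_n)$ minimizes (exactly, or up to a sampling error that vanishes since $|U_n|\to\infty$) the map $v\mapsto Q_n(y_n,v):=\tau_n\, g(y_n,v)+\alpha^{\tau_n}\sum_{y}p_n(y)J_n(y)$, where $\tau_n=\Delta t_n(y_n)$ and $p_n(\cdot)=P_n(\cdot\,|\,y_n,v)$. I would Taylor-expand $J^*$ about $y_n$ and use the local-consistency moment identities (Eqs.~\eqref{eqn:lc2}--\eqref{eqn:lc3}, equivalently conditions (i)--(ii) of {\tt ComputeTranProb}) together with $\alpha^{\tau_n}=1+\tau_n\ln\alpha+O(\tau_n^2)$ to obtain
\[
Q_n(y_n,v)=J^*(y_n)+\tau_n\Big(g(y_n,v)+\mathcal{L}^v J^*(y_n)+\ln\alpha\,J^*(y_n)\Big)+o(\tau_n)+O\big(\|J_n-J^*\|_{S_n}\big),
\]
uniformly in $v\in U$ (continuity of $f,F,g$ in $v$ and continuity of $\nabla J^*,\nabla^2 J^*$ make the remainder uniform). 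The decisive point is that the value error is negligible after normalization: since $\|J_n-J^*\|_{S_n}=O(\dispersion_n^{\rho})=O\big((\log|S_n|/|S_n|)^{\rho/d_x}\big)$ while $\tau_n=\Theta\big((\log|S_n|/|S_n|)^{\theta\varsigma\rho/d_x}\big)$ with $\theta\varsigma<1$, we have $\|J_n-J^*\|_{S_n}=o(\tau_n)$. Dividing by $\tau_n$ and using $y_n\to z$ then gives, uniformly in $v$, that $\Phi_n(v):=\big(Q_n(y_n,v)-J^*(y_n)\big)/\tau_n\longrightarrow \mathcal{Q}(z,v)$.

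\textbf{Step 3 (convergence of minimizers).} Since $U$ is compact, $\overline\mu_n(z)$ has convergent subsequences. If $v_\infty$ is any subsequential limit, the uniform convergence $\Phi_n\to\mathcal{Q}(z,\cdot)$ and near-minimality of $\overline\mu_n(z)$ yield $\mathcal{Q}(z,v_\infty)=\lim\Phi_n(\overline\mu_n(z))\le\lim\Phi_n(v)=\mathcal{Q}(z,v)$ for every $v\in U$, so $v_\infty\in\mathcal{A}(z)$; that is, every limit point of $\overline\mu_n(z)$ is an optimal control at $z$. I would then fix the optimal selection $\mu^*$ so that $\mu^*(z)$ equals this limit wherever it is unique. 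At a point $z$ where $\mu^*$ is continuous, the gap $\min\{\mathcal{Q}(z,v):\|v-\mu^*(z)\|\ge\epsilon\}-\mathcal{Q}(z,\mu^*(z))$ is strictly positive for each $\epsilon>0$, so uniform convergence forces $\overline\mu_n(z)$ into every $\epsilon$-ball about $\mu^*(z)$ for large $n$, giving $\overline\mu_n(z)\to\mu^*(z)$. When no classical optimal policy exists, the same argument applied to the relaxed-control formulation behind Theorem~\ref{theorem:convergence_cost} (cf.\ the footnote) yields weak convergence of $\overline\mu_n$ to an optimal relaxed control.

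\textbf{Main obstacle.} The delicate point is Step~3: when $\mathcal{A}(z)$ is not a singleton, the minimizers of $\Phi_n$ could in principle oscillate among distinct optimal controls, so the unconditional conclusion is only that limit points lie in $\mathcal{A}(z)$. The continuity hypothesis on $\mu^*$ is precisely what excludes a second isolated optimizer from absorbing a subsequence and pins the limit to $\mu^*(z)$; relating continuity of the selection to this effective strict separation is the crux of the argument. A secondary technical obstacle is verifying that the algorithm's iterate $J_n$ — and not merely the MDP-optimal $J_n^*$ of Theorem~\ref{theorem:JoptnJoptas} — obeys the rate $O(\dispersion_n^{\rho})$ needed for the $o(\tau_n)$ domination in Step~2, since it is this domination that licenses normalizing the Bellman update by the holding time.
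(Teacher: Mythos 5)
Your route is genuinely different from the paper's. The paper's proof is short and direct: it invokes Theorems~\ref{theoremAlmostSureConvergence}--\ref{theoremAlmostSureConvergenceSampleControl} to get $|J_{n,\mu_n}(y_n)-J^*(y_n)|\to 0$ w.p.1, asserts that this forces $\mu_n(y_n)\to\mu^*(y_n)$ for some optimal selection $\mu^*$, and then concludes via the triangle inequality $\|\mu_n(y_n)-\mu^*(z)\|_2\le\|\mu_n(y_n)-\mu^*(y_n)\|_2+\|\mu^*(y_n)-\mu^*(z)\|_2$, using $y_n\to z$ (from Lemma~\ref{lemma:dispersion}) and continuity of $\mu^*$ at $z$ for the second term. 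Your proposal instead normalizes the discrete Bellman residual by the holding time to recover an HJB state-action function $\mathcal{Q}(z,\cdot)$ and argues convergence of near-minimizers; this is a more ambitious attempt to justify the ``value convergence implies policy convergence'' step that the paper leaves implicit.

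However, two of the obstacles you flag are genuine gaps rather than technical loose ends. First, the $o(\tau_n)$ domination in your Step~2 needs a rate on $\|J_n-J^*\|_{S_n}$ for the algorithm's iterate $J_n$, but the paper only provides the rate $O(\zeta_n^{\rho})$ for the MDP-optimal $J_n^*$ (Theorem~\ref{theorem:JoptnJoptas}); Theorem~\ref{theoremAlmostSureConvergence} shows $\|J_n-J_n^*\|_{S_n}\to 0$ with no rate (its proof is an $\epsilon$-argument with no quantitative decay), so $\|J_n-J^*\|_{S_n}=o(\tau_n)$ is unestablished and the normalized limit $\Phi_n\to\mathcal{Q}(z,\cdot)$ does not follow. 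Second, in Step~3 the claim that $\min\{\mathcal{Q}(z,v):\|v-\mu^*(z)\|\ge\epsilon\}-\mathcal{Q}(z,\mu^*(z))>0$ requires the argmin set $\mathcal{A}(z)$ to be a singleton; continuity of the selection $\mu^*$ at $z$ is a property of the map $z\mapsto\mu^*(z)$ and does not exclude other minimizers of $\mathcal{Q}(z,\cdot)$ at that same $z$, so the separation can vanish and subsequences of $\overline{\mu}_n(z)$ can converge to a different element of $\mathcal{A}(z)$. (A further soft spot is the casual appeal to elliptic regularity for a classical $C^2$ solution of the fully nonlinear HJB equation, which needs more structure than nondegeneracy of $FF^T$ plus H\"older continuity of $g$.) To be fair, the paper's own inference from $|J_{n,\mu_n}(y_n)-J^*(y_n)|\to 0$ to $\mu_n(y_n)\to\mu^*(y_n)$ is likewise asserted rather than proved and faces the same non-uniqueness issue; but your argument both requires rates the paper does not supply and leaves that same crux unresolved.
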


\section{Experiments}  \label{section:experiments}

\begin{figure*}[h]
\begin{center}
  \subfigure[Optimal and approximated cost. ]{
  \label{figsubLQR1}
  \includegraphics[width=51mm,bb= 108 240 485 530]{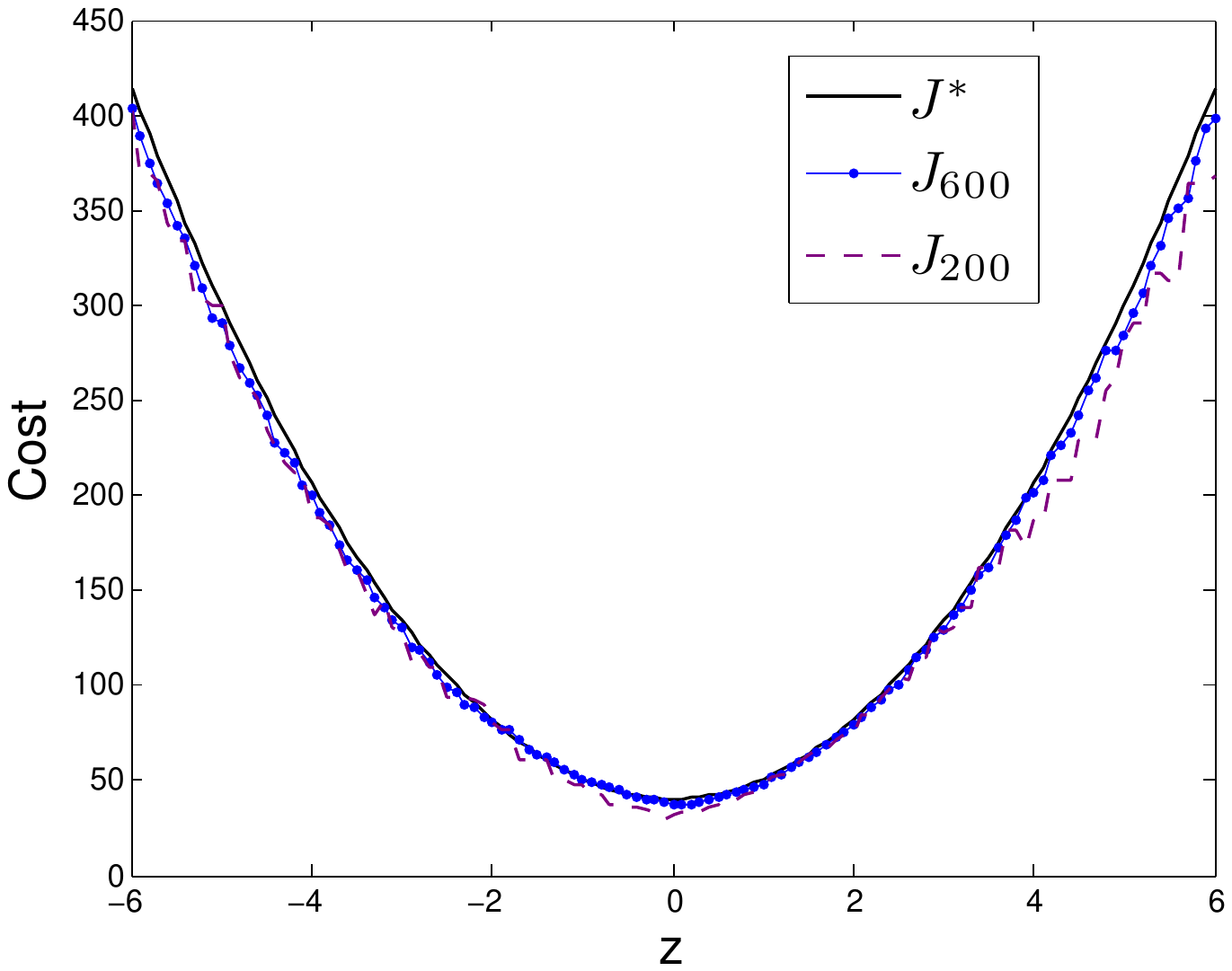}
  }
  \hfill
  \subfigure[After 200 iterations (0.39s).]{
  \label{figsubLQR2}
  \includegraphics[width=51mm,bb= 108 240 485 530]{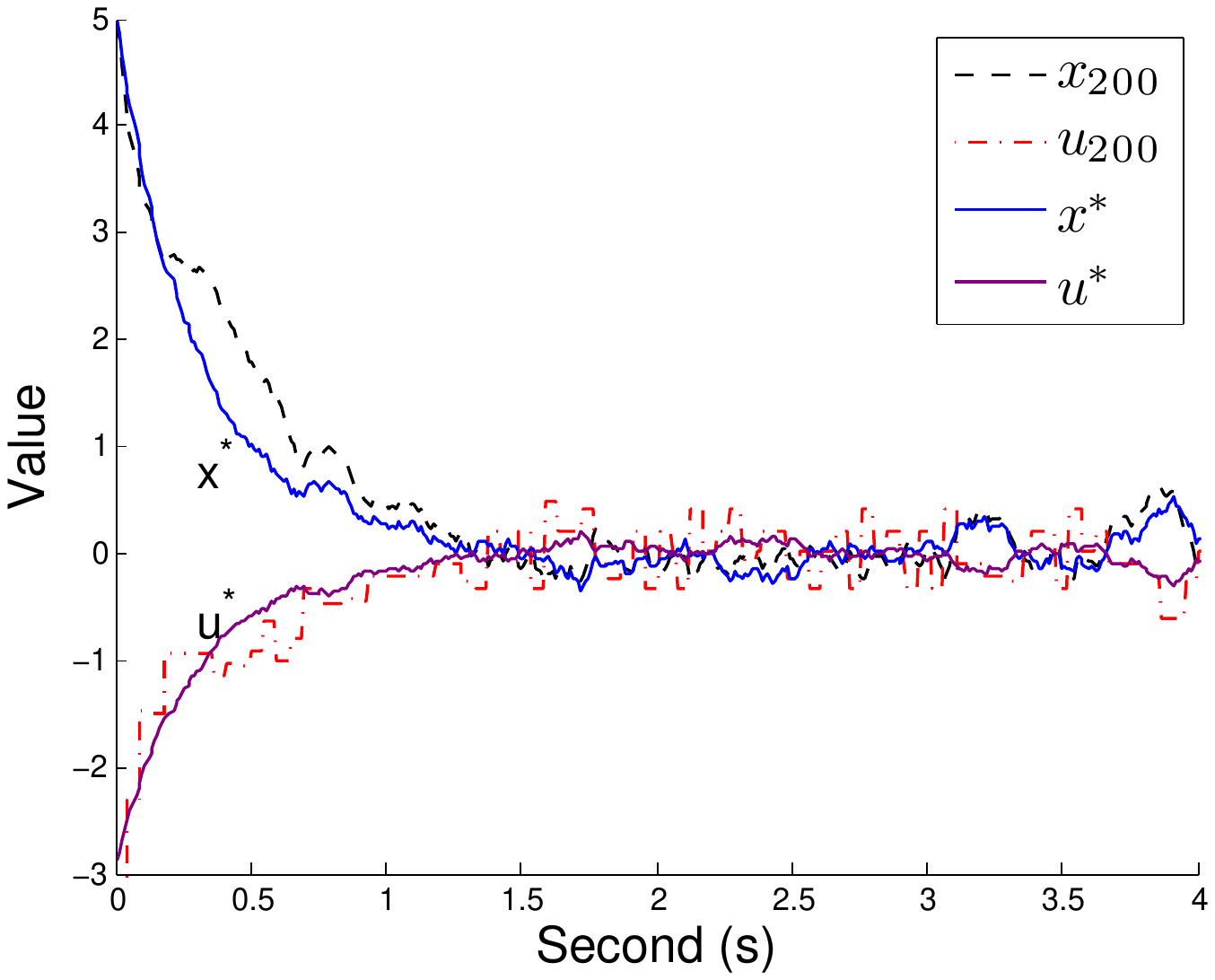}
  }
  \hfill
  \subfigure[After 600 iterations (2.16s).]{
   \label{figsubLQR3}
  \includegraphics[width=51mm,bb= 108 240 485 530]{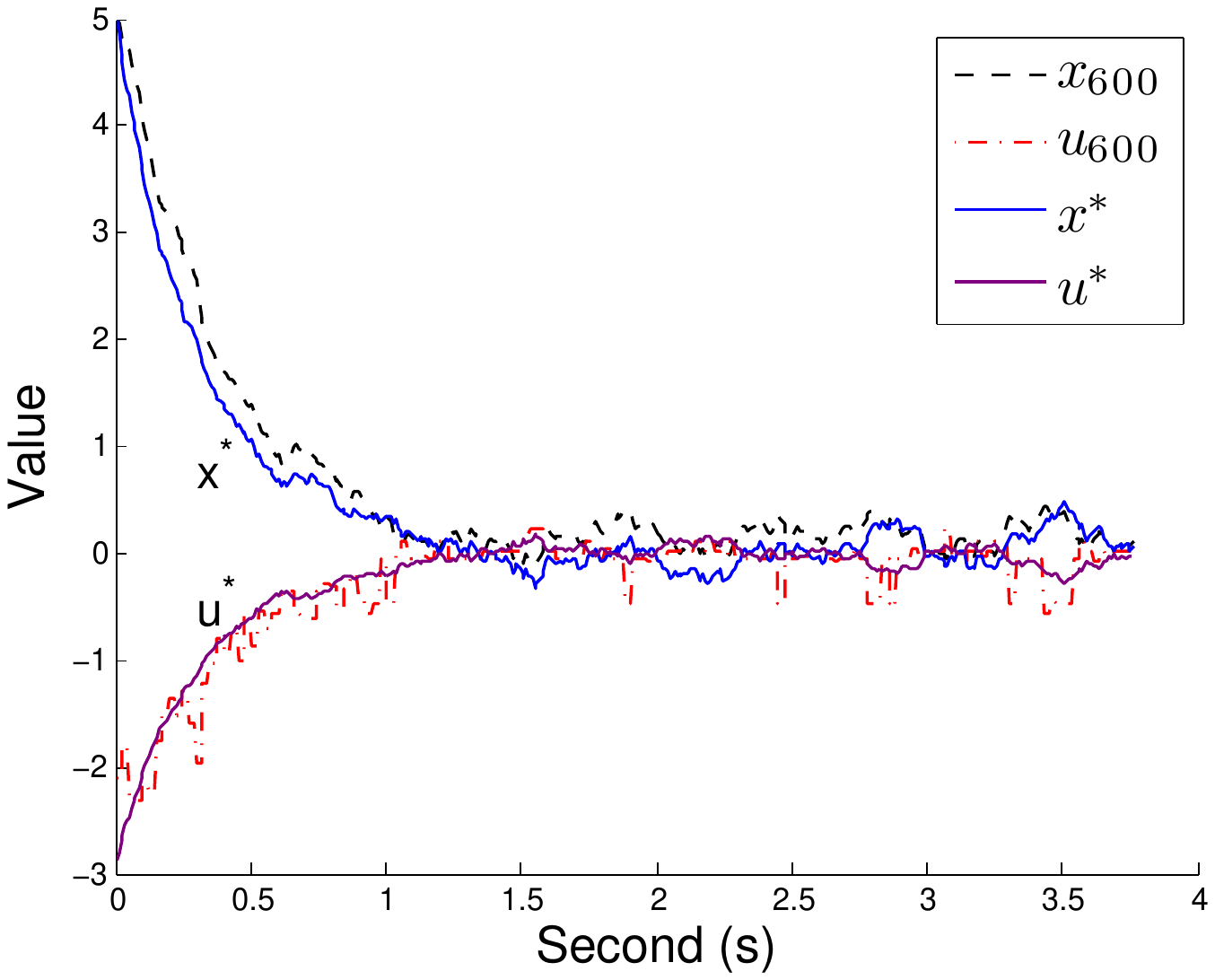}
   }
  \subfigure[Mean and 1-$\sigma$ interval of $||J_n-J^*||_{S_n}$.]{
  \label{figMeasureVariance}
  \includegraphics[scale=0.49,bb=75 320 605 470]{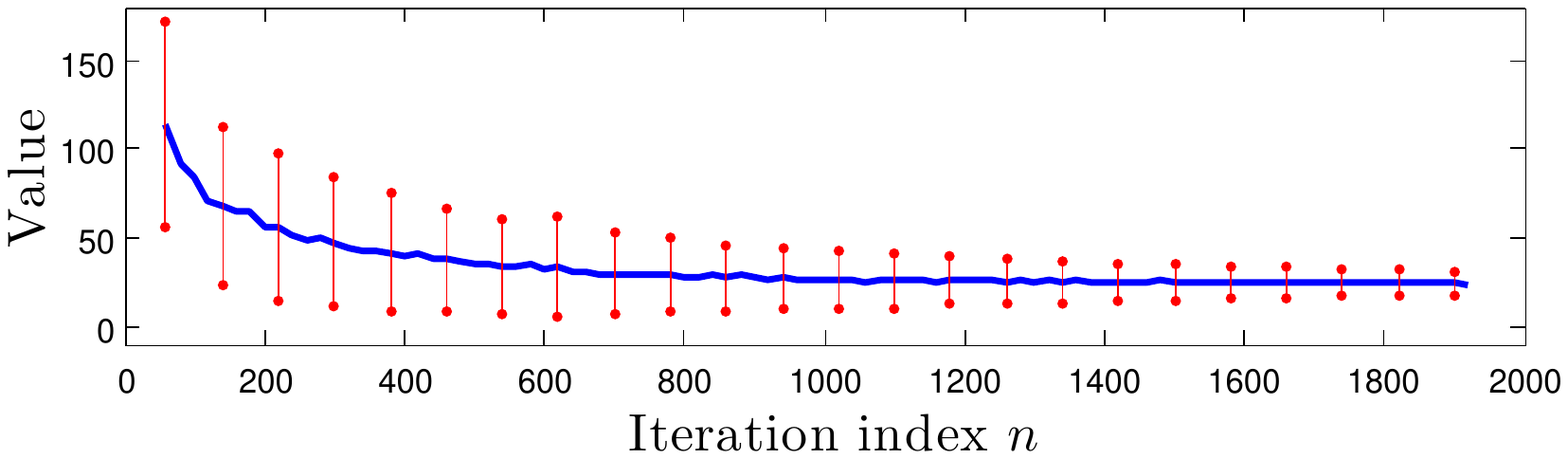}  }
  \subfigure[Log-log plot of Fig.~\ref{figMeasureVariance} .]{
  \label{figLoglogplot}
  \includegraphics[scale=0.49,bb= 142 320 525 470]{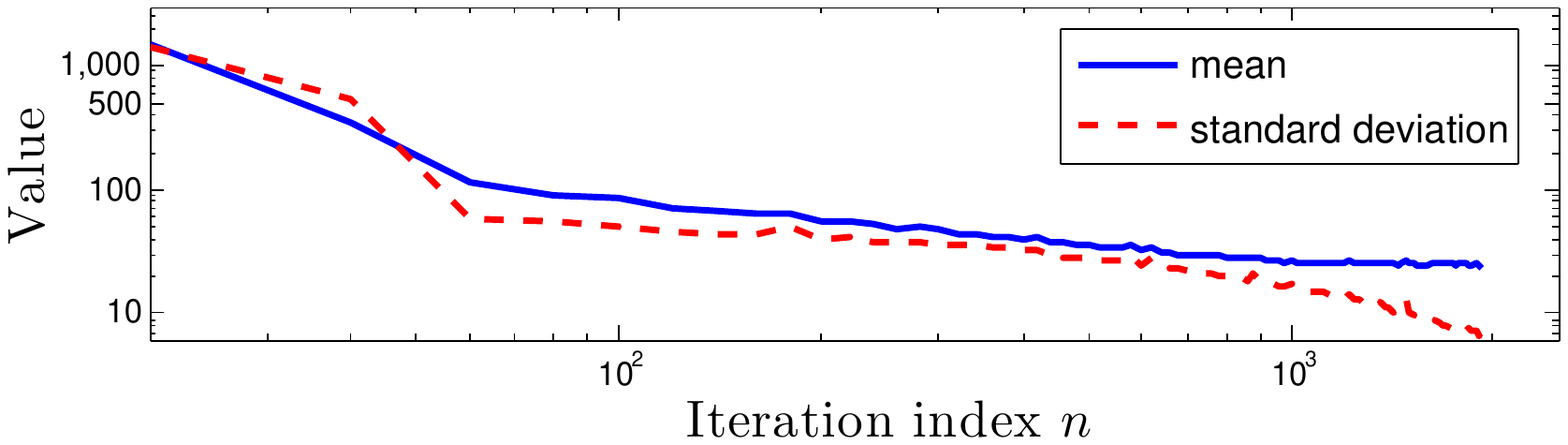}   }
  \subfigure[Plot of ratio $||J_n -J^*||_{S_n}/\big (\log(|S_n|)/|S_n| \big)^{0.5}$]{
  \label{figRatioConvergence}
  \includegraphics[scale=0.49,bb=75 320 605 470]{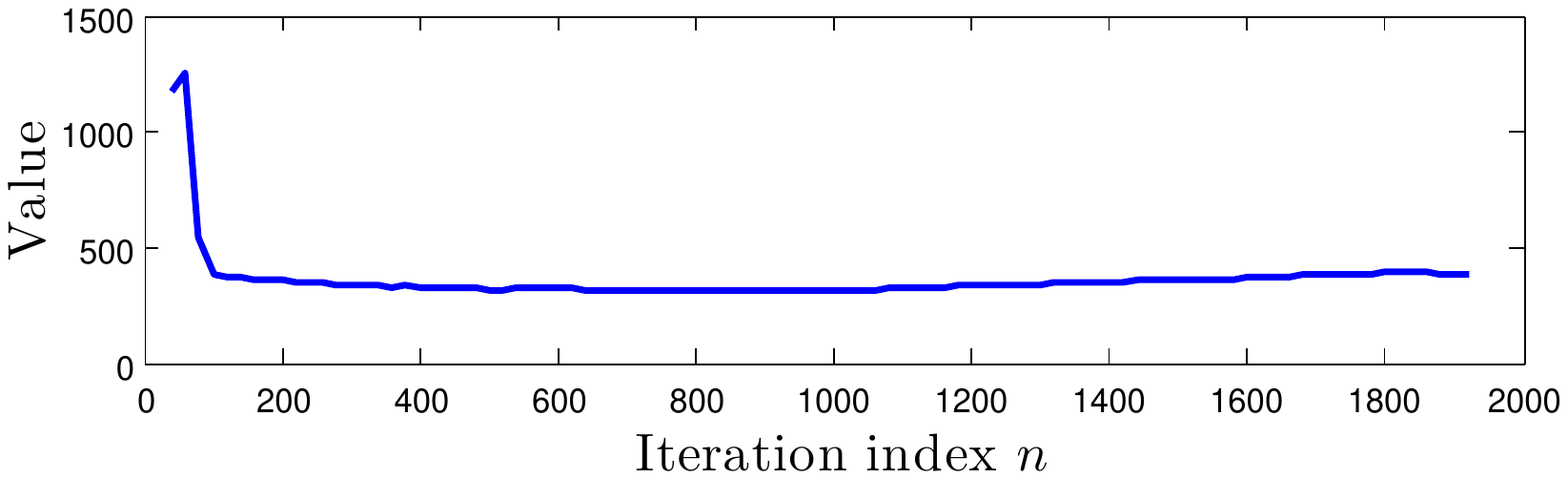}  }
  \subfigure[Plot of ratio $T_n/\big(|S_n|^{0.5}\log(|S_n|) \big )$.]{
  \label{figMeasureTime}
  \includegraphics[scale=0.49,bb= 142 320 525 470]{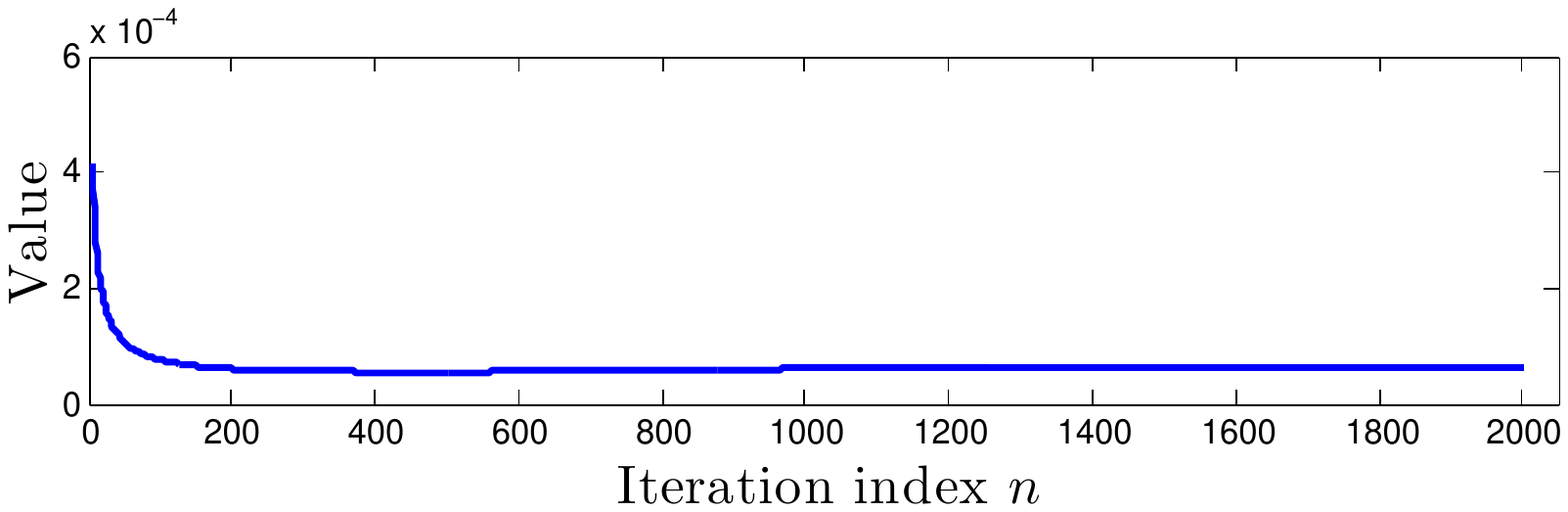}   }

\end{center}
  \label{figLQR}
  \vspace{-0.18in}
  \caption{Results of iMDP on a stochastic LQR problem. Figure \ref{figsubLQR1} shows the convergence of approximated cost-to-go to the optimal analytical cost-to-go over iterations. Anytime solutions are compared to the analytical optimal solution after 200 and 600 iterations in Figs. \ref{figsubLQR2}-\ref{figsubLQR3}. Mean and 1-$\sigma$ interval of the error $||J_n - J^*||_{S_n}$ are shown in \ref{figMeasureVariance} using 50 trials. The corresponding mean and standard deviation of the error $||J_n - J^*||_{S_n}$ are depicted on a log-log plot in Fig.~\ref{figLoglogplot}. In Fig.~\ref{figRatioConvergence}, we plot the ratio of $||J_n -J^*||_{S_n}$ to $(\log(|S_n|)/|S_n|)^{0.5}$ to show the convergence rate of $J_n$ to $J^*$. Figure \ref{figMeasureTime} shows the ratio of running time per iteration $T_n$ to $|S_n|^{0.5}\log(|S_n|)$. Ratios in Figs.~\ref{figRatioConvergence}-\ref{figMeasureTime} are averaged over 50 trials.}
\end{figure*}

\begin{figure*}
\begin{center}
  \vspace{-0.08in}
  \subfigure[Policy after 500 iterations (0.5s).]{
  \label{figsubControl1}
  \includegraphics[width=51mm,bb= 125 240 485 530]{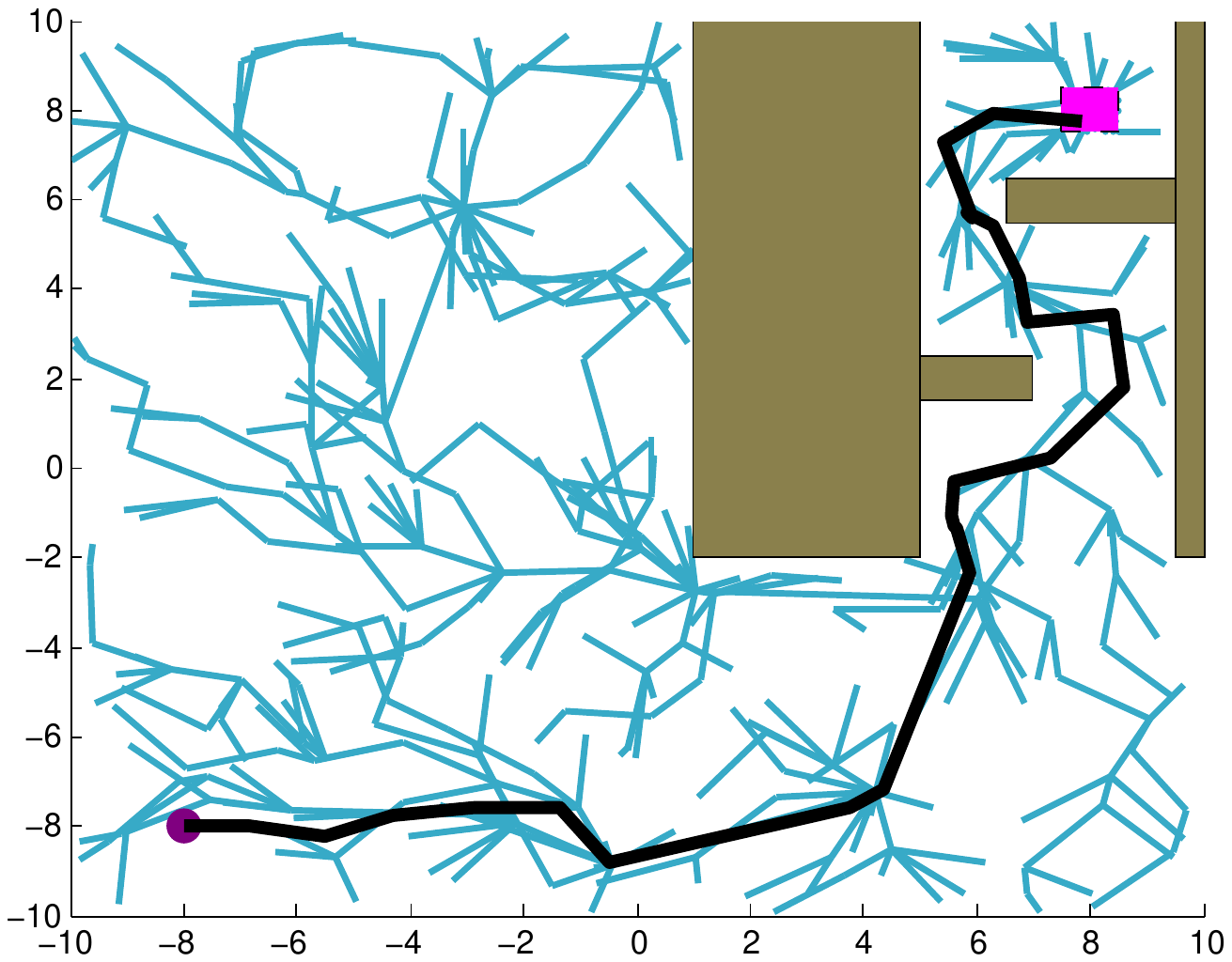}
  }
  \hfill
  \subfigure[Policy after 1,000 iterations (1.2s).]{
  \label{figsubControl2}
  \includegraphics[width=51mm,bb= 125 240 485 530]{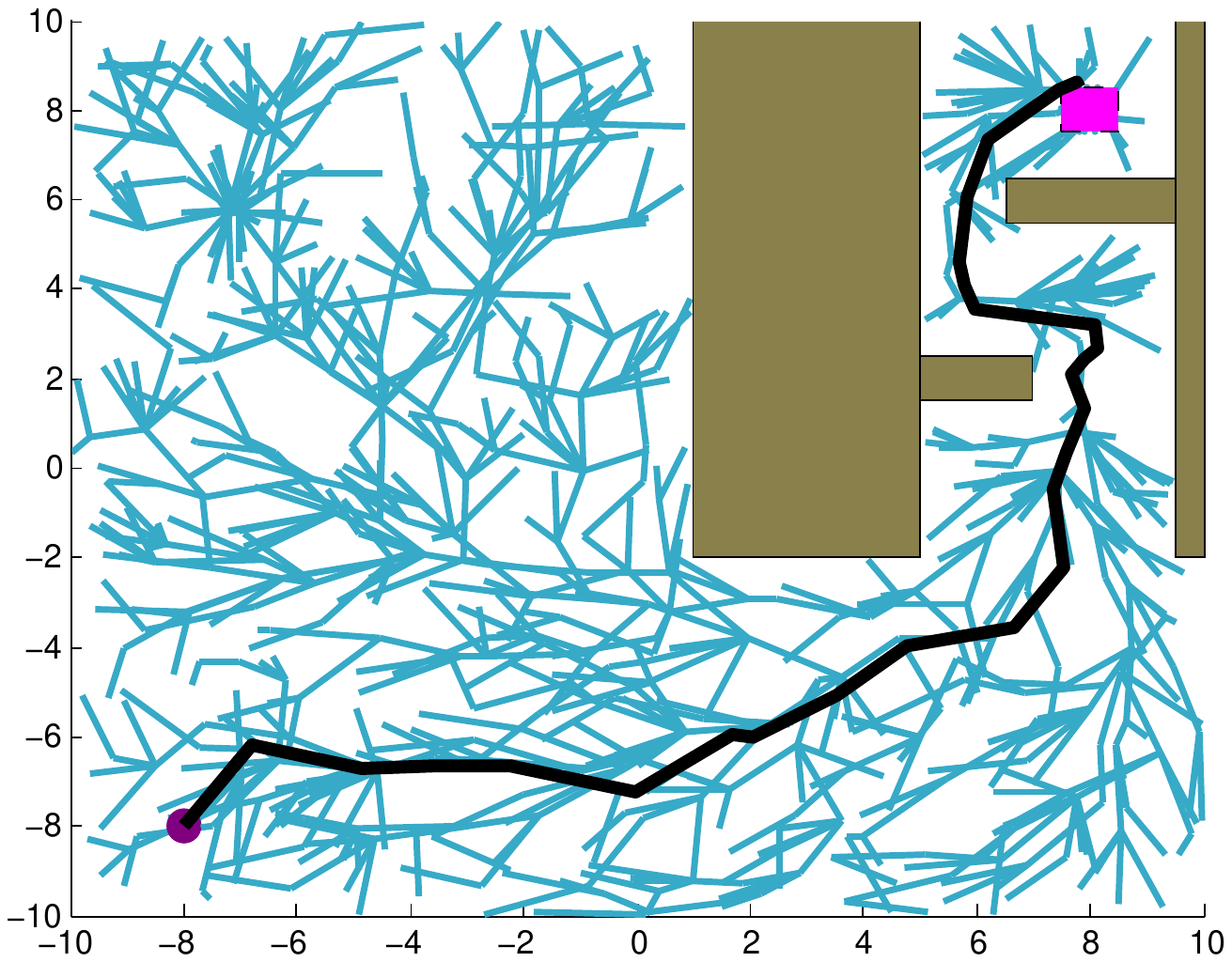}
  }
  \hfill
  \subfigure[Policy after 2,000 iterations (2.1s).]{
  \label{figsubControl3}
  \includegraphics[width=51mm,bb= 125 240 485 530]{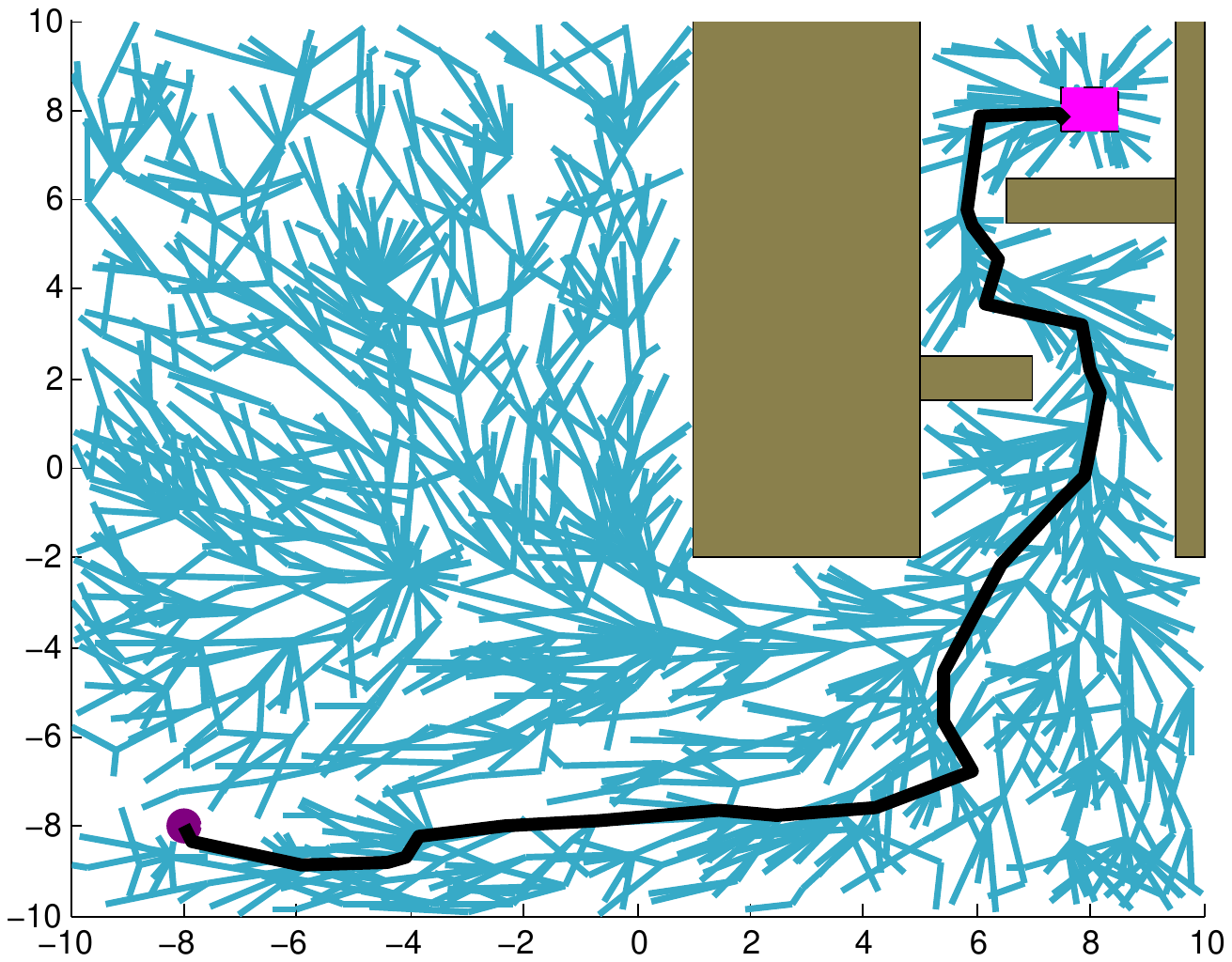}
  }
  \subfigure[Contour of $J_{500}$]{
  \label{figsubContour1}
  \includegraphics[width=51mm,bb= 125 240 485 530]{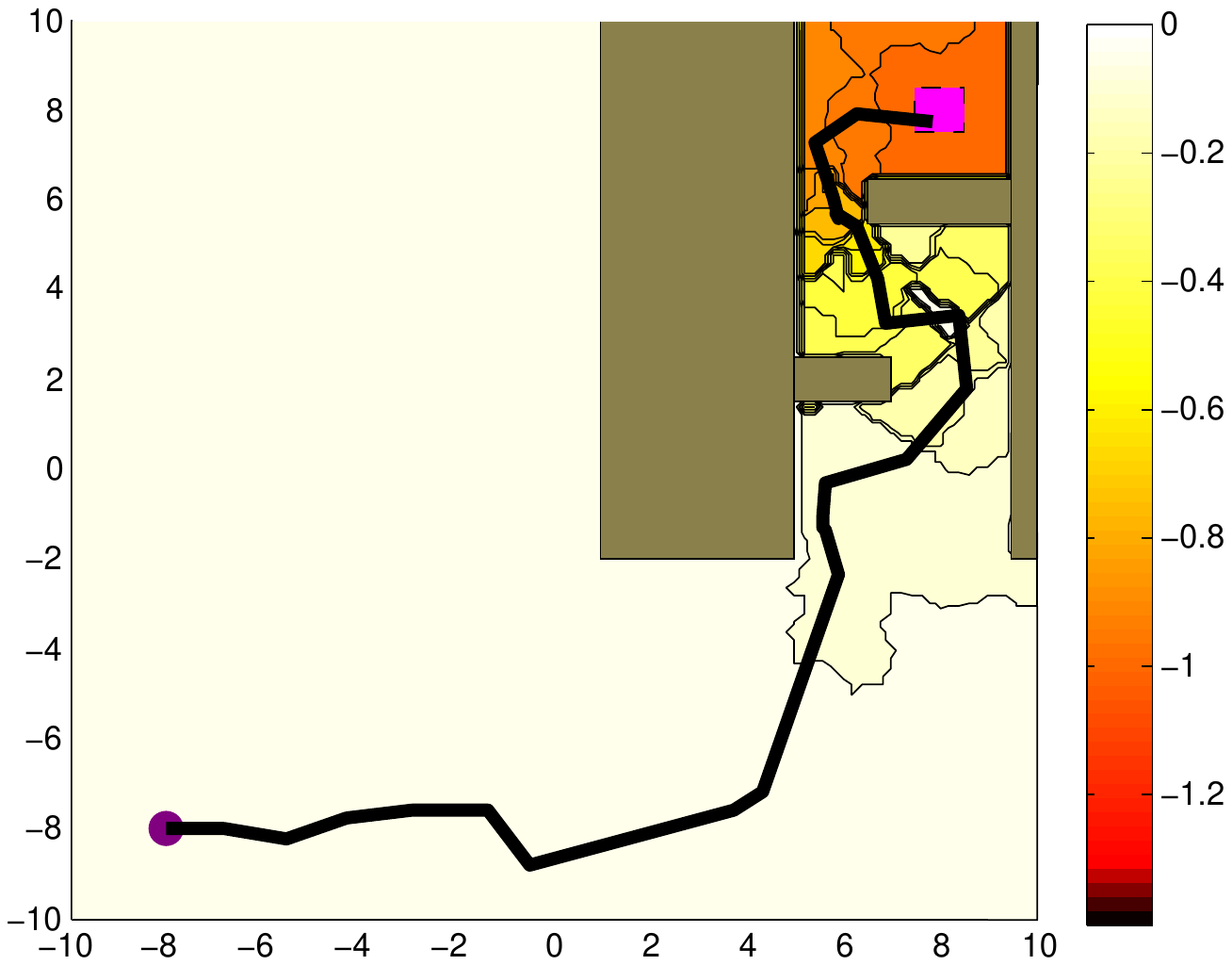}
  }
  \hfill
  \subfigure[Contour of $J_{1,000}$]{
  \label{figsubContour2}
  \includegraphics[width=51mm, bb= 125 240 485 530]{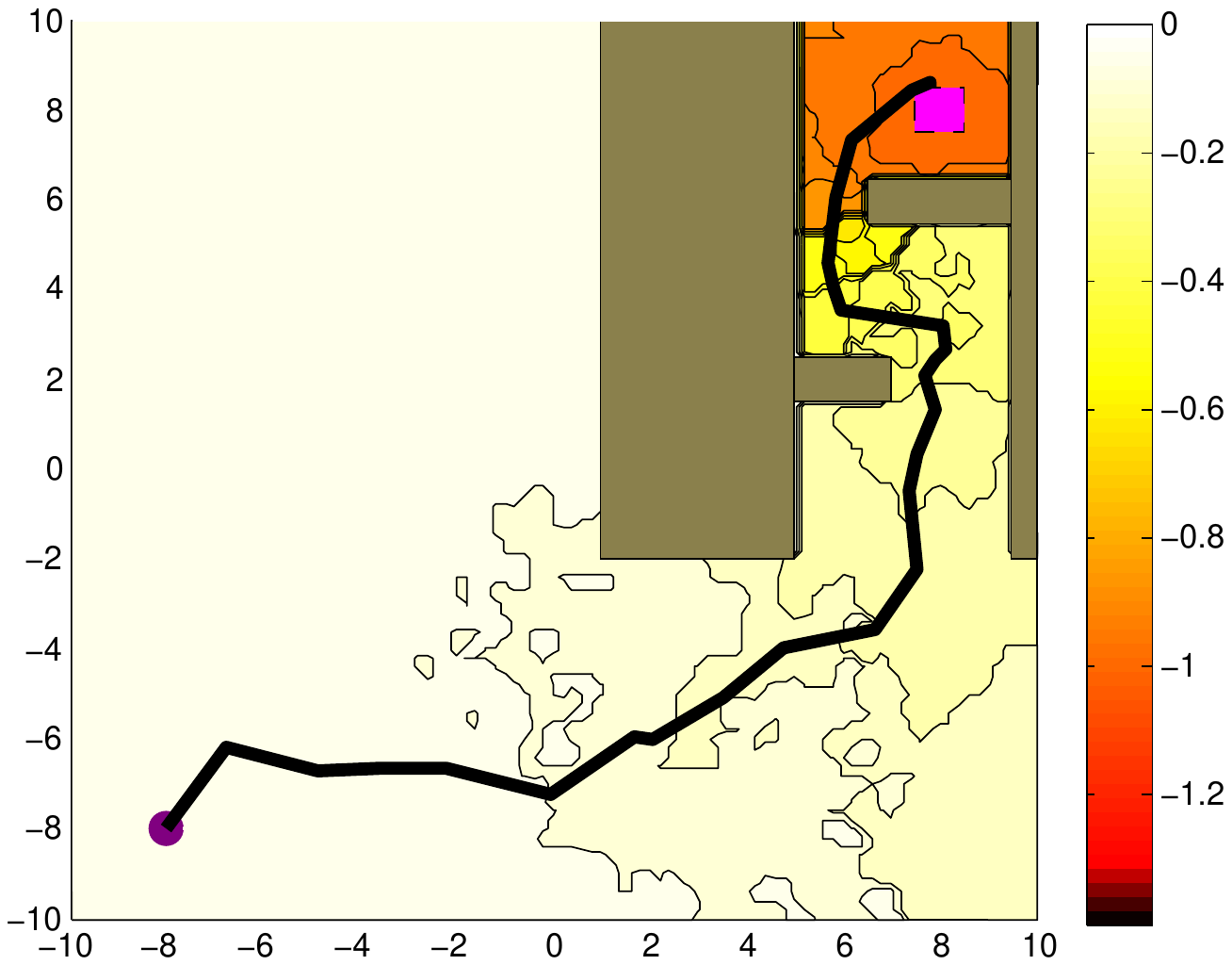}
  }
  \hfill
  \subfigure[Contour of $J_{2,000}$]{
  \label{figsubContour3}
  \includegraphics[width=51mm, bb= 125 240 485 530]{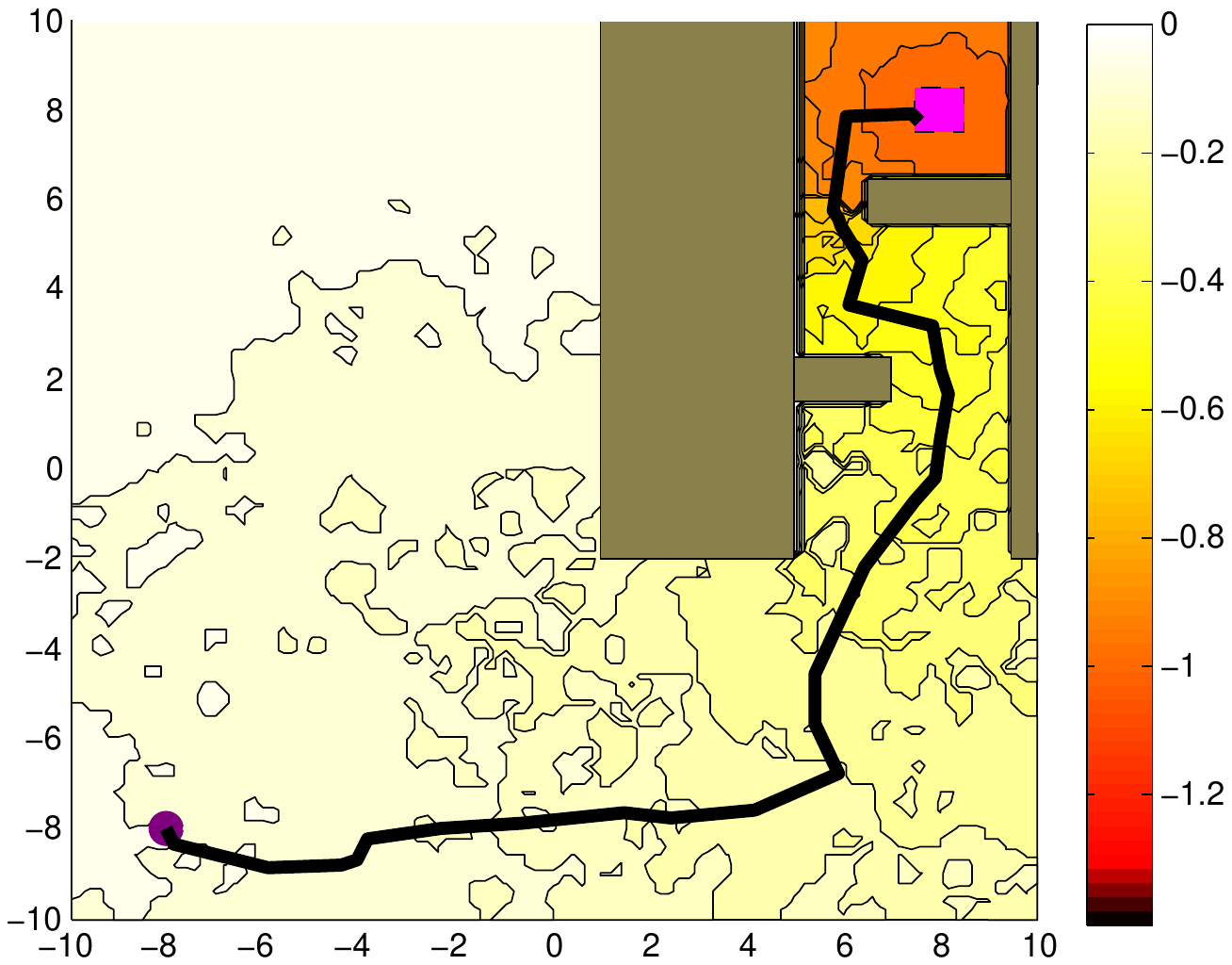}
  }
  \subfigure[Policy after 4,000 iterations (7.6s).]{
  \label{figsubControl4}
  \includegraphics[width=51mm,bb= 125 240 485 530]{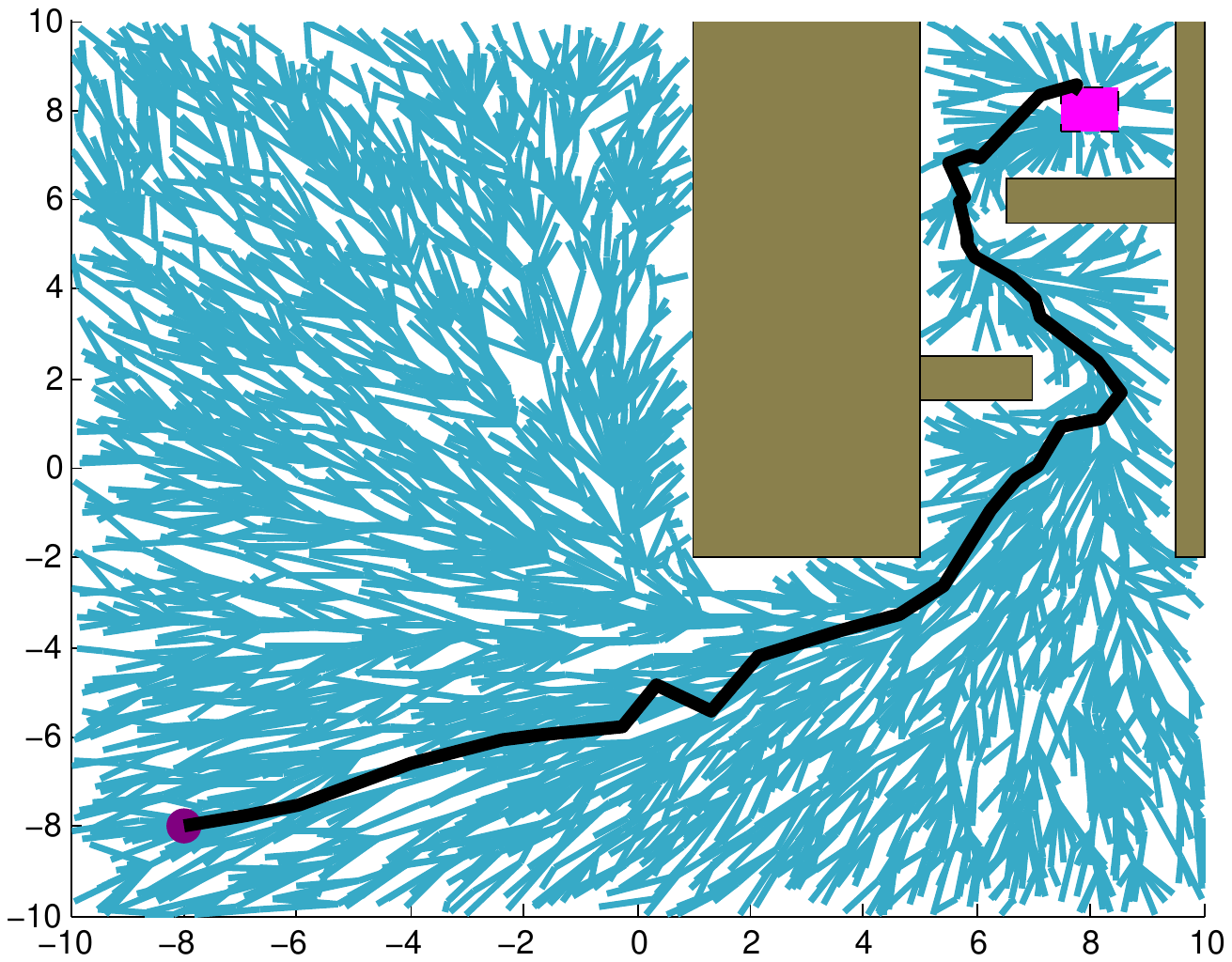}
  }
  \hfill
  \subfigure[Policy with 10,000 nodes (28s).]{
  \label{figsubControl5}
  \includegraphics[width=51mm,bb= 125 240 485 530]{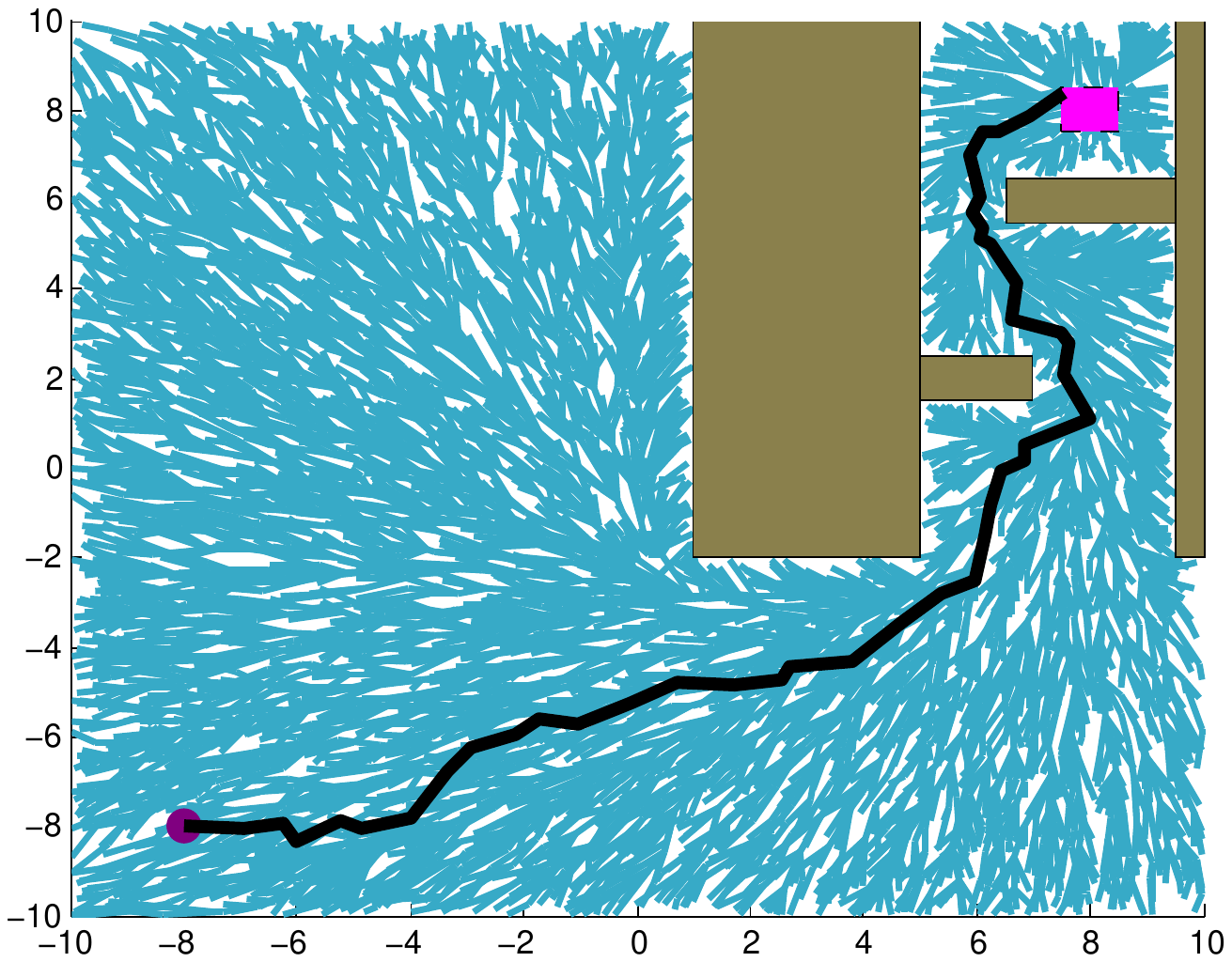}
  }
  \hfill
  \subfigure[Policy after 20,000 iterations (80s).]{
  \label{figsubControl6}
  \includegraphics[width=51mm,bb= 125 240 485 530]{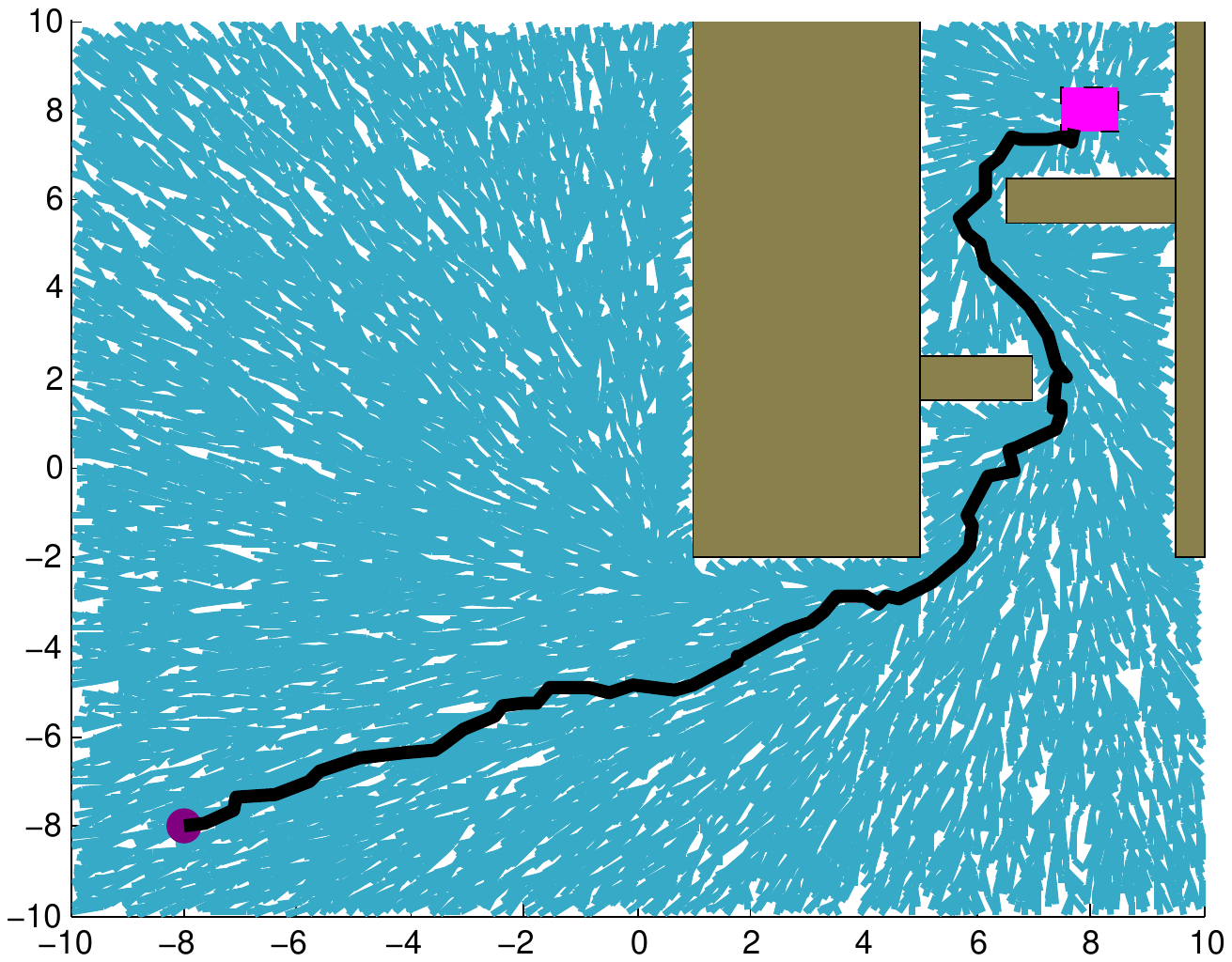}
  }
  \subfigure[Contour of $J_{4,000}$]{
  \label{figsubContour4}
  \includegraphics[width=51mm,bb= 125 240 485 530]{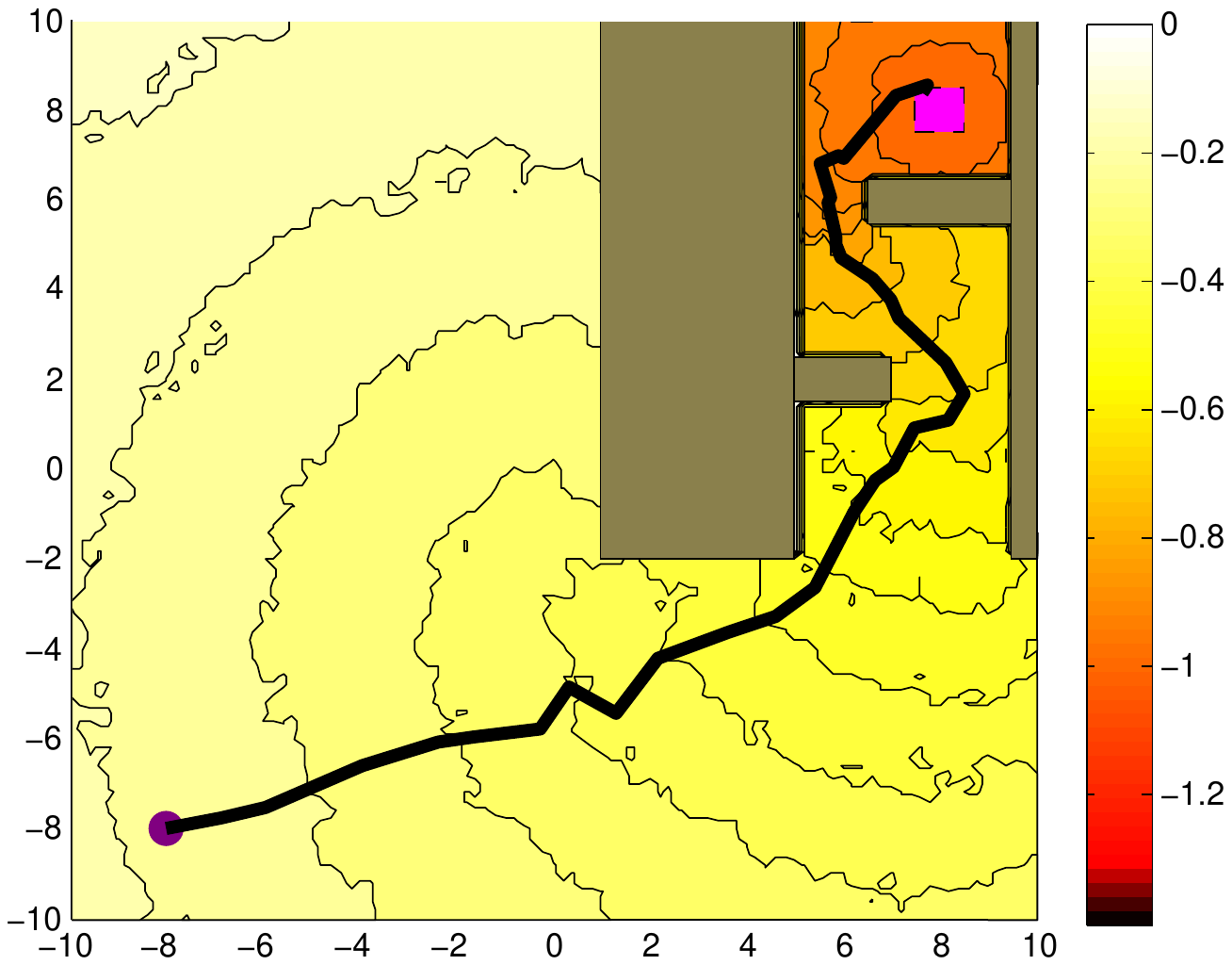}
  }
  \hfill
  \subfigure[Contour of $J_{10,000}$]{
  \label{figsubContour5}
  \includegraphics[width=51mm, bb= 125 240 485 530]{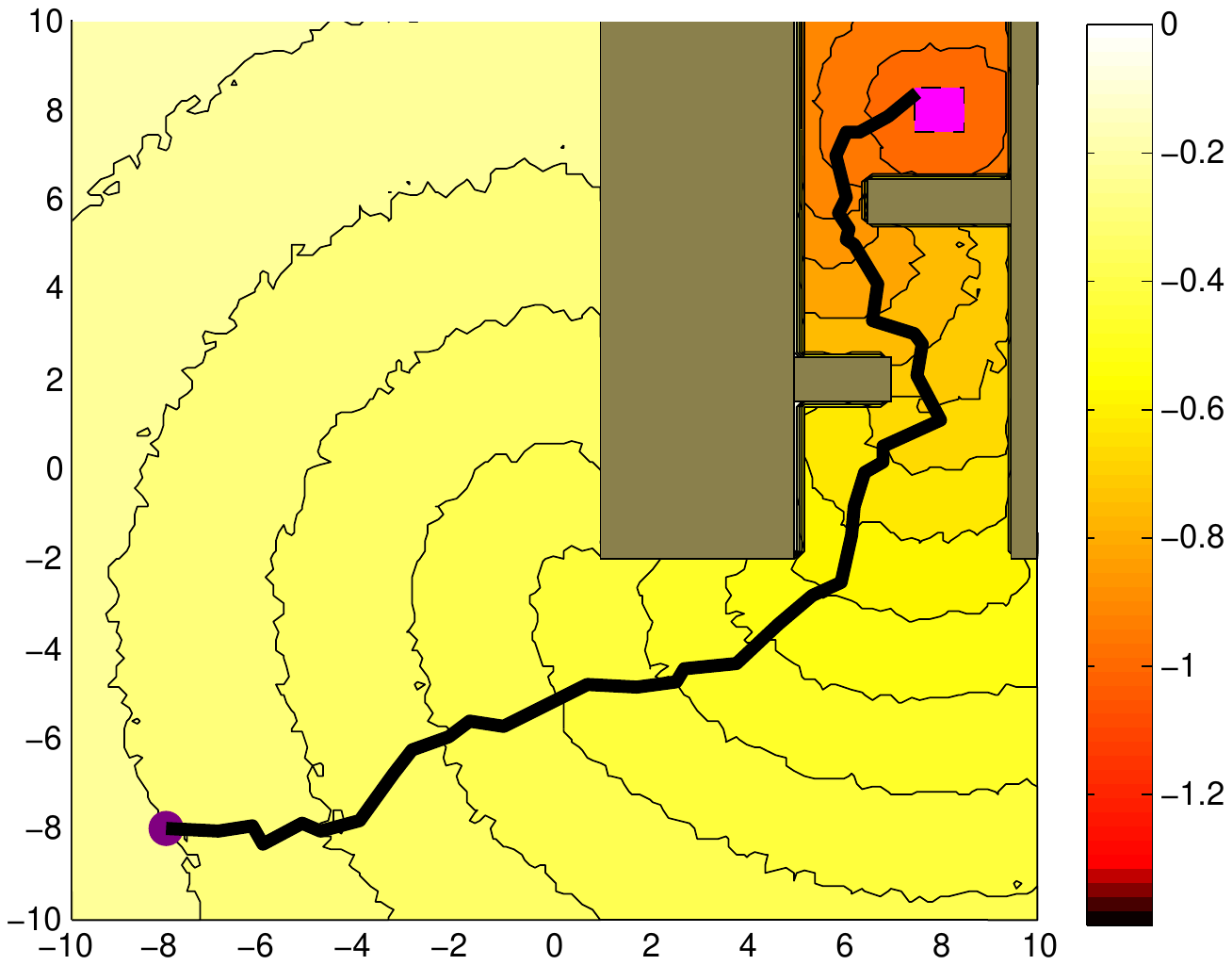}
  }
  \hfill
  \subfigure[Contour of $J_{20,000}$]{
  \label{figsubContour6}
  \includegraphics[width=51mm, bb= 125 240 485 530]{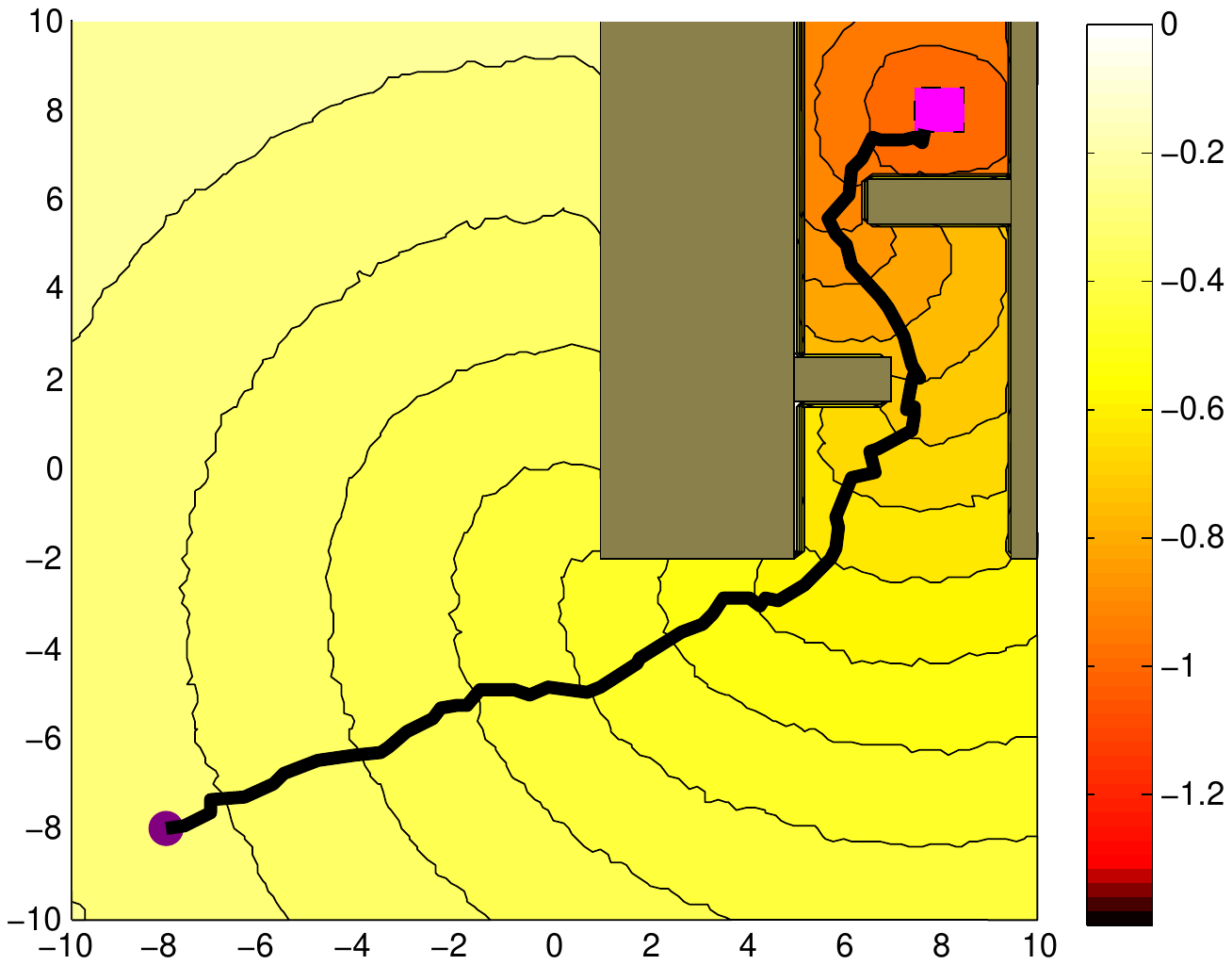}
  }
  \label{figSimple2D}
  \caption{A system with stochastic single integrator dynamics in a cluttered environment. With appropriate cost structure assigned to the goal and obstacle regions, the system reaches the goal in the upper right corner and avoids obstacles. The standard deviation of noise in x and y directions is 0.26. The maximum velocity is one. Anytime control policies and corresponding contours of approximated cost-to-go as shown in Figs. \ref{figsubControl1}-\ref{figsubContour6} indicate that iMDP quickly explores the state space and refines control policies over time.}
\end{center}
\end{figure*}
We used a computer with a 2.0-GHz Intel Core 2 Duo T6400 processor and $4$ GB of RAM to run experiments. In the first experiment, we investigated the convergence of the iMDP algorithm on a stochastic LQR problem: $\inf \EE \big[\int_{0}^{\tau}0.95^t \lbrace 3.5x(t)^2 + 200u(t)^2 \rbrace dt + h(x(\tau)) \big]$ such that $dx= (3x + 11u)dt + \sqrt{0.2}dw$ on the state space $S=[-6,6]$ where $\tau$ is the first hitting time to the boundary $\partial S=\{-6,6\}$, and $h(z)=414.55 $ for $z \in \partial S$ and $0$ otherwise. The optimal cost-to-go from $x(0)=z$ is $10.39z^2 + 40.51$, and  the optimal control policy is $u(t)=-0.5714x(t)$. Since the cost-rate function is bounded on $S$ and H\"{o}lder continuous with exponent $1.0$, we use $\rho=0.5$. In addition, we choose $\dummyasyn=0.5$, and $\dummyrate=0.99$ in the procedure ${\tt ComputeHoldingTime}$. We used the procedure ${\tt Update}$ as presented in Alogrithm \ref{algorithm:update} with $\log(n)$ sampled controls and transition probabilities having constant support size. Figures \ref{figsubLQR1}-\ref{figsubLQR3} show the convergence of approximated cost-to-go, anytime controls and trajectory to the optimal analytical counterparts over iterations. We observe that in Fig. \ref{figMeasureVariance}, both the mean and variance of cost-to-go error decreases quickly to zero. The log-log plot in Fig.~\ref{figLoglogplot} clearly indicates that both mean and standard deviation of the error $||J_n-J^*||_{S_n}$ continue to decrease. This observation is consistent with Theorems~\ref{theoremAlmostSureConvergence}-\ref{theoremAlmostSureConvergenceSampleControl}. Moreover, Fig.~\ref{figRatioConvergence} shows the ratio of $||J_n-J^*||_{S_n}$ to $(\log(|S_n|)/|S_n|)^{0.5}$ indicating the convergence rate of $J_n$ to $J^*$, which agrees with Theorem~\ref{theorem:JoptnJoptas}. Finally, Fig. \ref{figMeasureTime} plots the ratio of running time per iteration $T_n$ to $|S_n|^{0.5}\log(|S_n|)$ asserting that the time complexity per iteration is $O\big( |S_n|^{0.5}\log(|S_n|) \big )$.

\begin{figure*}
\begin{center}
  \subfigure[Noise-free: 1,000 iterations(1.2s).]{
  \label{figNoisefreeCor}
  \includegraphics[width=51mm ,bb= 125 240 485 530]{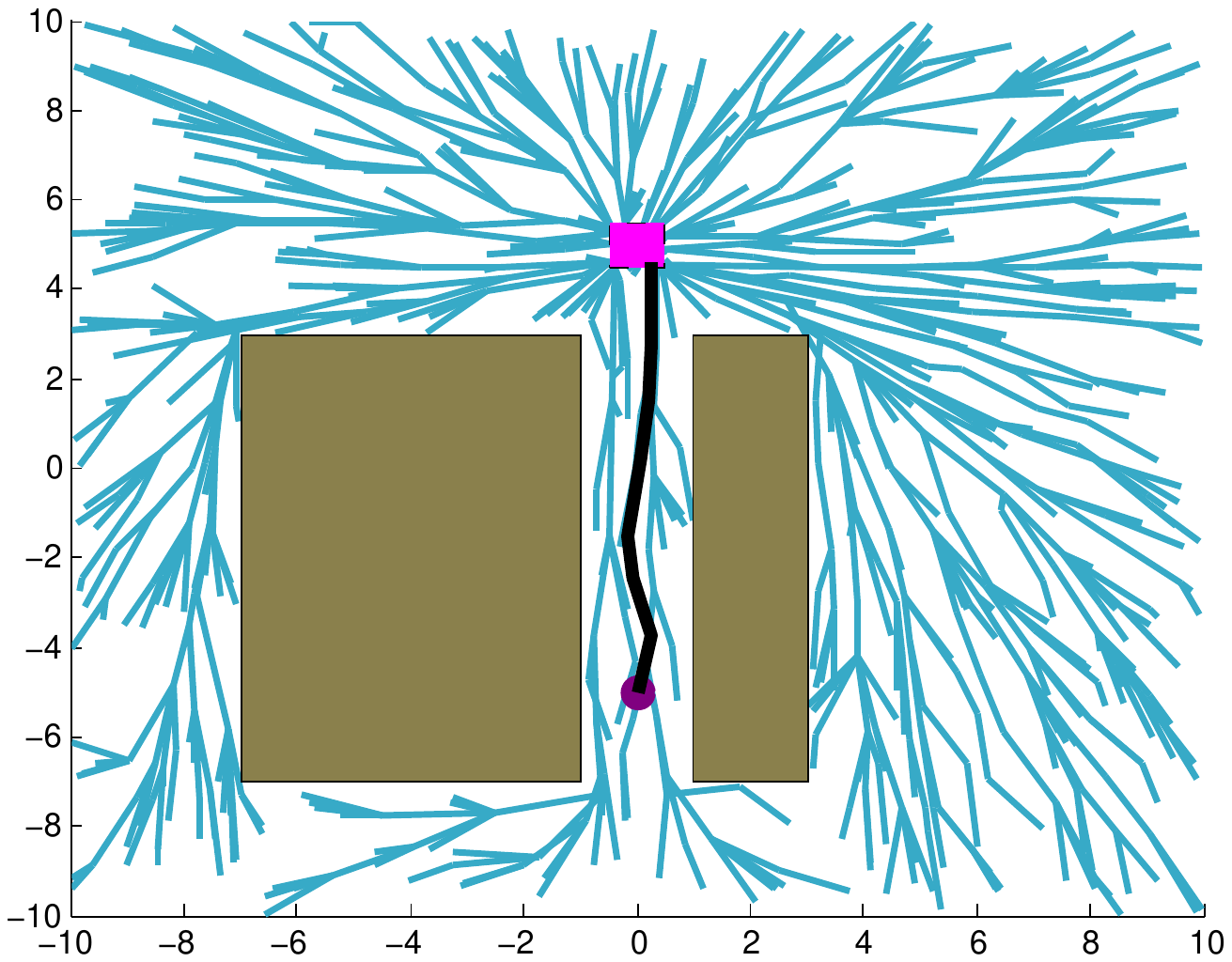}
  }
  \hfill
  \subfigure[Stochastic: 300 iterations (0.4s).]{
  \label{figNoisyCor}
 \includegraphics[width=51mm, bb= 125 240 485 530]{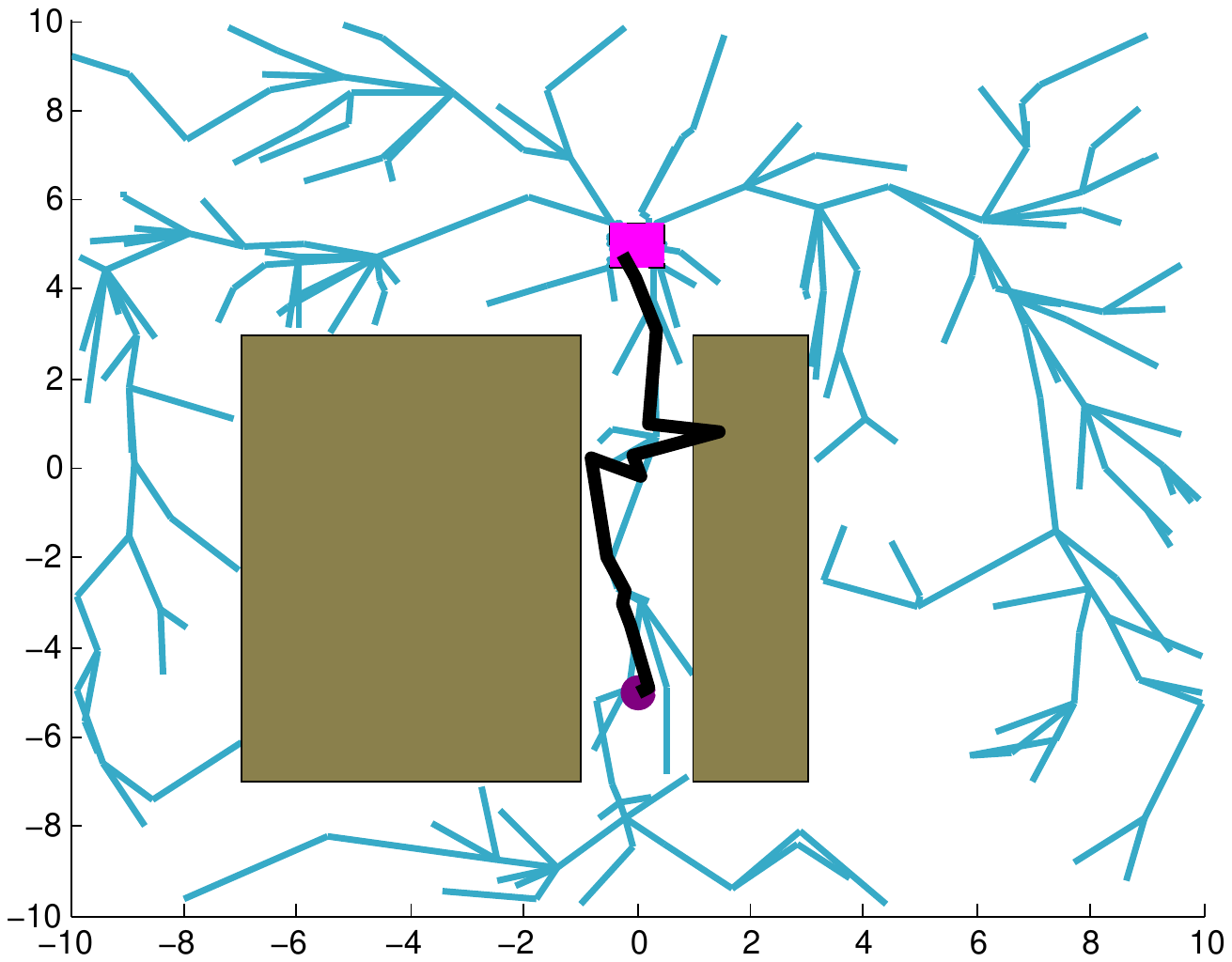}
  }
  \hfill
  \subfigure[Stochastic: 1,000 iterations (1.1s).]{
  \label{figNoisyCorMany}
  \includegraphics[width=51mm ,bb= 125 240 485 530]{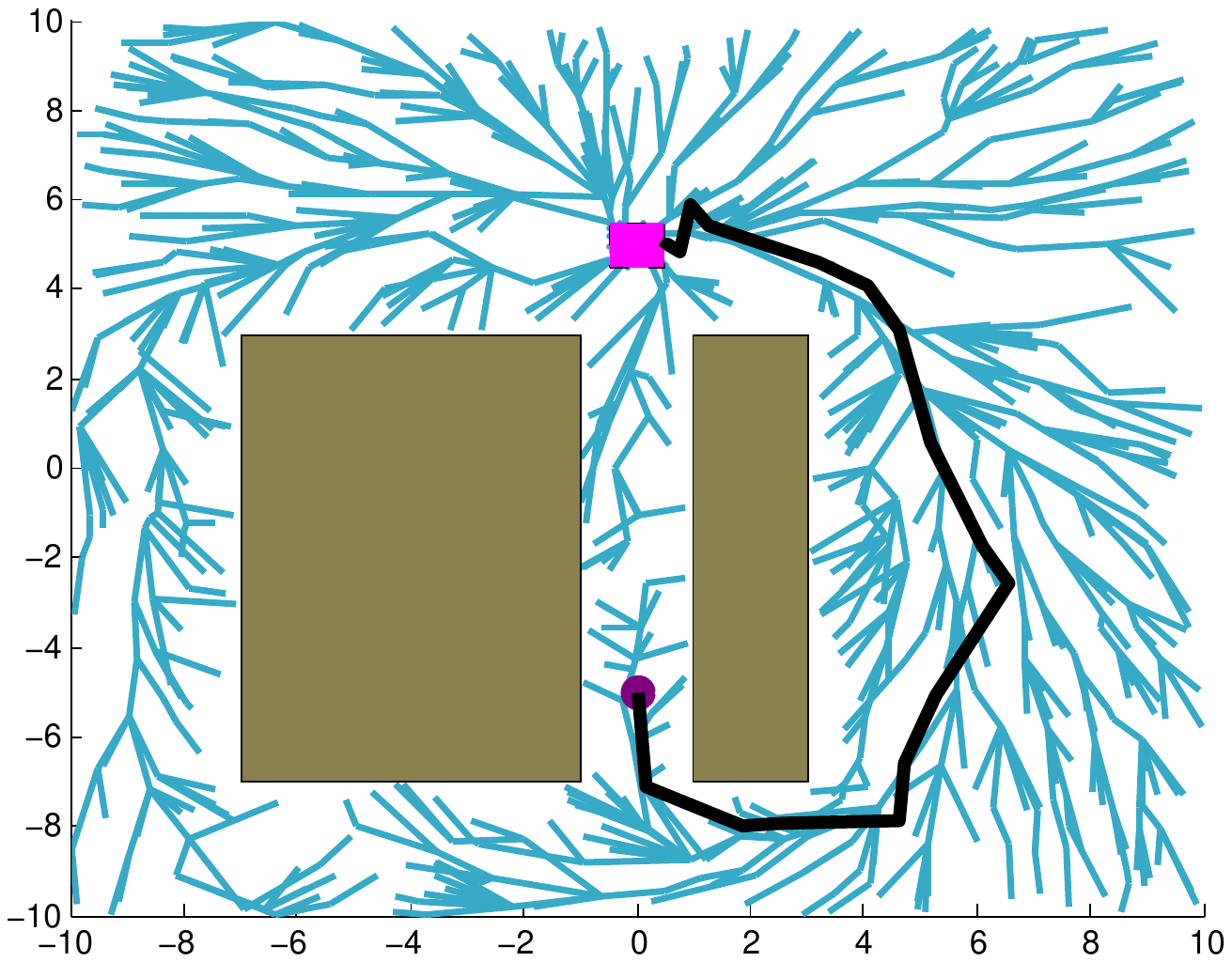}
  }
  \caption{Performance against different process noise magnitude. The system starts from (0,-5) to reach the goal. In Fig. \ref{figNoisefreeCor}, the environment is noise-free. In Figs. \ref{figNoisyCor}-\ref{figNoisyCorMany}, standard deviation of noise in x and y directions is 0.37. In the latter, the system first discovers an unsafe route that is prone to collisions and discovers a safer route after a few seconds. (In Fig. \ref{figNoisyCor}, we temporarily let the system continue even after collision to observe the entire trajectory.)}
\end{center}
\vspace{-0.15in}
\end{figure*}

\begin{figure*}
\begin{center}
  \subfigure[Trajectory snapshots after 3000 iterations (15.8s).]{
  \label{figManipulatorTraj}
  \includegraphics[width=75mm ,bb= 125 240 485 530]{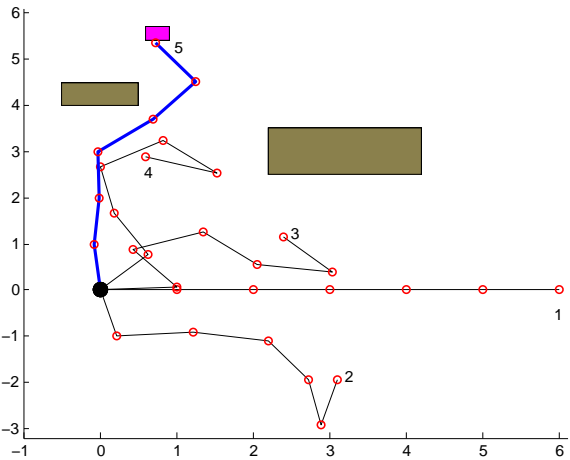}
  }
  \hfill
  \subfigure[Mean and standard deviation of cost values $J_n(x_0)$. ]{
  \label{figManipulatorCost}
 \includegraphics[width=75mm, bb= 125 240 485 530]{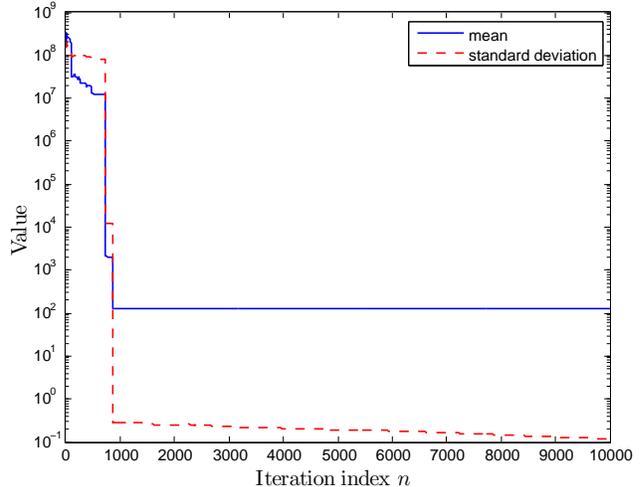}
  }
  \caption{Results of a 6D manipulator example. The system is modeled as a single integrator with states representing angles between segments and the horizontal line. Control magnitude is bounded by 0.3. The standard deviation of noise at each joint is 0.032 rad. In Fig. \ref{figManipulatorTraj}, the manipulator is controlled to reach a goal with the final upright position. In Fig.~\ref{figManipulatorCost}, the mean and standard deviation of the computed cost values for the initial position are plotted using 50 trials.}
\end{center}
\vspace{-0.15in}
\end{figure*}

In the second experiment, we controlled a system with stochastic single integrator dynamics to a goal region with free ending time in a cluttered environment. The cost objective function is discounted with $\alpha=0.95$. The system pays zero cost for each action it takes and pays a cost of -1 when reaching the goal region ${\cal{X}}_{goal}$. The maximum velocity of the system is one. The system stops when it collides with obstacles. 
We show how the system reaches the goal in the upper right corner and avoids obstacles with different anytime controls. Anytime control policies after up-to 2,000 iterations in Figs. \ref{figsubControl1}-\ref{figsubControl3}, which were obtained within $2.1$ seconds, indicate that iMDP quickly explores the state space and refines control policies over time. Corresponding contours of cost value functions are shown in Figs. \ref{figsubContour1}-\ref{figsubContour3} further illustrate the refinement and convergence of cost value functions to the original optimal cost-to-go over time. We observe that the performance is suitable for real-time control. Furthermore, anytime control policies and cost value functions after up-to 20,000 iterations are shown in Figs.~\ref{figsubControl4}-\ref{figsubControl6} and Figs.~\ref{figsubContour4}-\ref{figsubContour6} respectively. We note that the control policies seem to converge faster than cost value functions over iterations. The phenomenon is due to the fact that cost value functions $J_n$ are the estimates of the optimal cost-to-go $J^*$. Thus, when $J_n(z) - J^*(z)$ is constant for all $z \in S_n$, updated controls after a Bellman update are close to their optimal values. Thus, the phenomenon favors the use of the iMDP algorithm in real-time applications where only a small number of iterations are executed.

In the third experiment, we tested the effect of process noise magnitude on the solution trajectories. In Figs. \ref{figNoisefreeCor}-\ref{figNoisyCorMany}, the system wants to arrive at a goal area either by passing through a narrow corridor or detouring around the two blocks. In Fig. \ref{figNoisefreeCor}, when the dynamics is noise-free (by setting a small diffusion matrix), the iMDP algorithm quickly determines to follow a narrow corridor. In contrast, when the environment affects the dynamics of the system (Figs. \ref{figNoisyCor}-\ref{figNoisyCorMany}), the iMDP algorithm decides to detour to have a safer route. This experiment demonstrates the benefit of iMDP in handling process noise compared to RRT-like algorithms~\cite{Lavalle.98, Karaman.Frazzoli:IJRR10}. We emphasize that although iMDP spends slightly more time on computation per iteration, iMDP provides feedback policies rather than open-loop policies; thus, re-planning is not crucial in iMDP.

In the forth experiment, we examined the performance of the iMDP algorithm for high dimensional systems such as a manipulator with six degrees of freedom. The manipulator is modeled as a single integrator where states represents angles between segments and the horizontal line. The maximum control magnitude for all joints is 0.3. The standard deviation of noise at each joint is 0.032 rad. The manipulator is controlled to reach a goal with the final upright position in minimum time. In Fig. \ref{figManipulatorTraj}, we show a resulting trajectory after 3000 iterations computed in 15.8 seconds. In addition, we show the mean and standard deviation of the computed cost values for the initial position using 50 trials in Fig.~\ref{figManipulatorCost}. As shown in the plots, the solution converges quickly after about 1000 iterations. These results highlight the suitability of the iMDP algorithm to compute feedback policies for complex high dimensional systems in stochastic environments.

\section{Conclusions} \label{secConclusions}
We have introduced and analyzed the incremental sampling-based iMDP algorithm for stochastic optimal control. The algorithm natively handles continuous time, continuous state space as well as continuous control space. The main idea is to consistently approximate underlying continuous problems by discrete structures in an incremental manner. In particular, we incrementally build discrete MDPs by sampling and extending states in the state space. The iMDP algorithm refines the quality of anytime control policies from discrete MDPs in terms of expected costs over iterations and ensures almost sure convergence to an optimal continuous control policy. The iMDP algorithm can be implemented such that its time complexity per iteration grows as $O\big( k^{\dummyasyn} \log k \big )$ with $0 < \dummyasyn\leq 1$ leading to the total processing time $O\big( k^{1+\dummyasyn} \log k \big )$, where $k$ is the number of states in MDPs which increases linearly over iterations. Together with linear space complexity, iMDP is a practical incremental algorithm. The enabling technical ideas lie in novel methods to compute Bellman updates.

Further extension of the work is broad. In the future, we would like to study the effect of biased-sampling techniques on the performance of iMDP. The algorithm is also highly parallelizable, and efficient parallel versions of the iMDP algorithm are left for future study. Remarkably, Markov chain approximation methods are also tools to handle deterministic control and non-linear filtering problems. Thus, applications of the iMDP algorithm can be extended to classical path planning with deterministic dynamics. We emphasize that the iMDP algorithm would remove the necessity for exact point-to-point steering of RRT-like algorithms in path planning applications. 
In addition, we plan to investigate incremental sampling-based algorithms for online smoothing and estimation in the presence of sensor noise. The combination of incremental sampling-based algorithms for control and estimation will provide insights into addressing stochastic optimal control problems with imperfect state information, known as Partially Observable Markov Decision Processes (POMDPs). Although POMDPs are fundamentally more challenging than the problem that is studied in this paper, our approach differentiates itself from existing sampling-based POMDP solvers (see, e.g.,~\cite{KurHsu08,Prentice07ISRR}) with its incremental nature and computationally-efficient search. Hence, the research presented in this paper opens a new alley to handle POMDPs in our future work.

\section*{ACKNOWLEDGMENTS}
This research was supported in part by the National Science Foundation, grant CNS-1016213.
V. A. Huynh gratefully thanks the Arthur Gelb Foundation for supporting him during this work.

\bibliographystyle{IEEEtran}
\bibliography{IEEEabrv,mybib}

\appendix

\section*{Appendix} \label{appendix}

\section{Notations and Preliminaries} \label{appendix:preliminaries}
We denote $\naturals$ as a set of natural numbers and $\reals$ as a set of real numbers. A sequence on a set $X$ is a mapping from $\naturals$ to $X$, denoted as $\{x_n\}_{n=0}^{\infty}$, where $x_n \in X$ for each $n \in \naturals$. Given a metric space $X$ endowed with a metric $d$, a sequence $\{x_n\}_{n=0}^{\infty} \subset X$ is said to converge if there is a point $x \in X$, denoted as $\lim_{n \rightarrow \infty}x_n$, with the following property: For every $\epsilon > 0$, there is an integer $N$ such that $n \geq N$ implies that $d(x_n,x) < \epsilon$. On the one hand, a sequence of functions $\{ f_n \}_{n=1}^{\infty}$ in which each function $f_n$ is a mapping from $X$ to $\reals$ \textit{converges pointwise} to a function $f$ on $X$ if for every $x\in X$, the sequence of numbers $\{ f_n(x) \}_{n=0}^{\infty}$ converges to $f(x)$. On the other hand, a sequence of functions $\{ f_n \}_{n=1}^{\infty}$ \textit{converges uniformly} to a function $f$ on $X$ if the following sequence $\{ M_n \ | \ M_n= \sup_{x \in X} |f_n(x) - f(x)|\}_{n=0}^{\infty}$ converges to $0$.

Let us consider a probability space $(\Omega, \mathcal{F},\probmeasure)$ where $\Omega$ is a sample space, $\mathcal{F}$ is a $\sigma$-algebra, and $\probmeasure$ is a probability measure. A subset $A$ of $\mathcal{F}$ is called an event. The complement of an event $A$ is denoted as $A^c$. Given a sequence of events $\{ A_n \}_{n=0}^{\infty}$, we define $\limsup_{n \rightarrow \infty} A_n$ as $\cap_{n=0}^{\infty} \cup_{k=n}^{\infty}A_k$, i.e. the event that $A_n$ occurs infinitely often. In addition, the event $\liminf_{n \rightarrow \infty} A_n$ is defined as $\cup_{n=0}^{\infty} \cap_{k=n}^{\infty}A_k$. A random variable is a measurable function mapping from $\Omega$ to $\reals$. The expected value of a random variable $Y$ is defined as $\EE[Y]=\int_{\Omega}Yd\probmeasure$. A sequence of random variables $\{ Y_n \}_{n=0}^{\infty}$ \textit{converges surely} to a random variable $Y$ if $\lim_{n \rightarrow \infty}Y_n(\omega) = Y(\omega)$ for all $\omega \in \Omega$. A sequence of random variables $\{ Y_n \}_{n=0}^{\infty}$ \textit{converges almost surely} or \textit{with probability one} (w.p.1) to a random variable $Y$ if $\probmeasure( \omega \in \Omega \ | \ \lim_{n \rightarrow \infty}Y_n(\omega) = Y(\omega))=1$. Almost sure convergence of $\{Y_n \}_{n=0}^{\infty}$ to $Y$ is denoted as $Y_n \overset {a.s.}{\rightarrow} Y$. We say that a sequence of random variables $\{ Y_n \}_{n=0}^{\infty}$ \textit{converges in distribution} to a random variable $Y$ if $\lim_{n \rightarrow \infty}F_n(x)=F(x)$ for every $x \in \reals$ at which $F$ is continuous where $\{ F_n \}_{n=0}^{\infty}$ and $F$ are the associated CDFs of $\{Y_n \}_{n=0}^{\infty}$ and $Y$ srespectively. We denote this convergence as $Y_n \overset {d}{\rightarrow} Y$. Convergence in distribution is also called weak convergence. If $Y_n \overset {d}{\rightarrow} Y$, then $\lim_{n \rightarrow \infty}\EE[f(Y_n)]=\EE[f(Y)]$ for all bounded continuous functions $f$. As a corollary, when $\{Y_n\}_{n=0}^{\infty}$ converges in distribution to $0$, and $Y_n$ is bounded for all $n$, we have $\lim_{n \rightarrow \infty}\EE[Y_n]=0$ and $\lim_{n \rightarrow \infty}\EE[Y^2_n]=0$, which together imply $\lim_{n \rightarrow \infty}Var(Y_n)=0$. We say that a sequence of random variables $\{ Y_n \}_{n=0}^{\infty}$ \textit{converges in probability} to a random variable $Y$, denoted as $Y_n \overset {p}{\rightarrow} Y$, if for every $\epsilon >0$, we have $\lim_{n \rightarrow \infty}\probmeasure(|X_n-X| \geq \epsilon) =0$. For every continuous function $f(\cdot)$, if $Y_n \overset {p}{\rightarrow} Y$, then we also have $f(Y_n) \overset {p}{\rightarrow} f(Y)$. If $Y_n \overset {p}{\rightarrow} Y$ and $Z_n \overset {p}{\rightarrow} Z$, then $(Y_n,Z_n) \overset {p}{\rightarrow} (Y,Z)$ . If $|Z_n -Y_n|  \overset {p}{\rightarrow} 0$ and $Y_n \overset {d}{\rightarrow} Y$, we have $Z_n \overset {d}{\rightarrow} Y$. Finally, we say that a sequence of random variables $\{ Y_n \}_{n=0}^{\infty}$ \textit{converges in $r^{th}$ mean} to a random variable $Y$, denoted as $Y_n \overset {r}{\rightarrow} Y$, if $\EE[|X_n|^{r}] <\infty$ for all $n$, and $\lim_{n \rightarrow \infty}\EE[|X_n-X|^{r}]=0$. We have the following implications: (i) almost sure convergence or $r^{th}$ mean convergence ($r \geq 1$) implies convergence in probability, and (ii) convergence in probability implies convergence in distribution. The above results still hold for random vectors in higher dimensional spaces.

Let $f(n)$ and $g(n)$ be two functions with domain and range $\naturals$ or $\reals$. The function $f(n)$ is called $O(g(n))$ if there exists two constants $M$ and $n_0$ such that $f(n) \leq Mg(n)$ for all $n \geq n_0$. The function $f(n)$ is called $\Omega(g(n))$ if $g(n)$ is $O(f(n))$. Finally, the function $f(n)$ is called $\Theta(g(n))$ if $f(n)$ is both $O(g(n))$ and $\Omega(g(n))$.

\section{Proof of Lemma \ref{lemma:dispersion}} \label{appendix:proof:theorem:lemma:dispersion}
For each $n \in \naturals$, divide the state space ${S}$ into grid cells with side length $1/2 \gamma_r (\log |S_n| / |S_n|)^{1/d_x}$ as follows. Let $\integers$ denote the set of integers. Define the grid cell $i \in \integers^{d_x}$ as 
$$
W_{n}(i) := i \, \left(\frac{\gamma_r}{2} \frac{\log |S_n|}{|S_n|}\right)^{1/d_x} + \left[-\frac{1}{4} \, \gamma_r \left(\frac{\log |S_n|}{|S_n|}\right)^{1/d_x}, \frac{1}{4}  \, \gamma_r \left(\frac{\log |S_n|}{|S_n|}\right)^{1/d_x} \right]^{d_x},
$$
where $[-a,a]^{d_x}$ denotes the $d_x$-dimensional cube with side length $2\,a$ centered at the origin. Hence, the expression above translates the $d_x$-dimensional cube with side length $(1/2) \, \gamma_r (\log |S_n| / |S_n|)^{1/d_x}$ to the point with coordinates $i \, \frac{\gamma_r}{2} (\log n / n)^{1/d_x}$.

Let $Q_n$ denote the indices of set of all cells that lie completely inside the state space $S$, i.e., $Q_n = \{ i \in \integers^d : W_n(i) \subseteq {S}  \}$. Clearly, $Q_n$ is finite since ${S}$ is bounded. Let $\partial Q_n$ denote the set of all grid cells that intersect the boundary of ${S}$, i.e., $\partial Q_n = \{i \in \integers^d  : W_n(i) \cap \partial{S} \neq \emptyset\}$. 
We claim for all large $n$, all grid cells in $Q_n$ contain one vertex of $S_n$, and all grid cells in $\partial Q_n$ contain one vertex from $\partial S_n$. First, let us show that each cell in $Q_n$ contains at least one vertex. Given an event $A$, let $A^c$ denote its complement. Let $A_{n,k}$ denote the event that the cell $W_n(k)$, where $k \in Q_n$ contains a vertex from $S_n$, and let $A_n$ denote the event that all grid cells in $Q_n$ contain a vertex in $S_n$. Then, for all $k \in Q_n$,
$$
\PP \left( A_{n,k}^c \right) = \left( 1 - \frac{(\gamma_r/ 2)^{d_x}}{m(S)} \, \frac{\log |S_n|}{|S_n|} \right)^{|S_n|} \le \exp\left( - \big( (\gamma_r/2)^{d_x}/m(S) \big) \, \log |S_n| \right) = {|S_n|}^{-(\gamma_r/2)^{d_x}/m(S)},
$$
where $m(S)$ denotes Lebesgue measure assigned to ${S}$.
Then, 
$$
\PP(A_n^c) = \PP\left( \left( \bigcap\nolimits_{k \in Q_n} A_{n,k} \right)^c \right) = \PP\left( \bigcup\nolimits_{k \in Q_n} A_{n,k}^c \right) \le \sum\nolimits_{k \in Q_n}\PP\left( A_{n,k}^c \right) = \vert Q_n \vert \,  |S_n|^{-(\gamma_r/2)^{d_x}/m(S)}, 
$$
where the first inequality follows from the union bound and $\vert Q_n \vert$ denotes the cardinality of the set $Q_n$. By calculating the maximum number of cubes that can fit into ${S}$, we can bound $\vert Q_n \vert$:
$$
\vert Q_n\vert \le \frac{m(S)}{(\gamma_r/2)^{d_x} \, \frac{\log |S_n|}{|S_n|}} = \frac{m(S)}{(\gamma_r/2)^{d_x}} \, \frac{|S_n|}{\log |S_n|}.
$$
Note that by construction, we have $|S_n|=\Theta(n)$. Thus,
\begin{align*}
\PP \left( A_{n}^c \right) &\le \frac{m(S)}{(\gamma_r/2)^{d_x}} \, \frac{|S_n|}{\log |S_n|} \, |S_n|^{-(\gamma_r/2)^{d_x}/m(S)} = \frac{m(S)}{(\gamma_r/2)^{d_x}} \, \frac{1}{\log |S_n|} \, |S_n|^{1 - (\gamma_r/2)^{d_x} / m(S)} \\
&\le \frac{m(S)}{(\gamma_r/2)^{d_x}} \,|S_n|^{1 - (\gamma_r/2)^{d_x} / m(S)},
\end{align*}
which is summable for all $\gamma_r > 2 \, (2 \, m(S))^{1/{d_x}}$. Hence, by the Borel-Cantelli lemma, the probability that $A_n^c$ occurs infinitely often is zero, which implies that the probability that $A_n$ occurs for all large $n$ is one, i.e.,
$
\PP (\liminf_{n \rightarrow \infty} A_n) = 1.
$

Similarly, each grid cell in $\partial Q_n$ can be shown to contain at least one vertex from $\partial S_n$ for all large $n$, with probability one. This implies each grid cell in both sets $Q_n$ and $\partial Q_n$ contain one vertex of $S_n$ and $\partial S_n$, respectively, for all large $n$, with probability one. Hence the following event happens with probability one:
$$
\dispersion_n=\max_{z \in S_n}\min_{z' \in S_n}||z'-z||_2 = O((\log{|S_n|}/|S_n|)^{1/d_x}).
$$
\qed

\section{Proof of Lemma \ref{lemma:local_consistency}} \label{appendix:proof:theorem:lemma:local_consistency}
We show that each state that is added to the approximating MDPs is updated infinitely often. That is, for any $z \in S_n$, the set of all iterations in which the procedure ${\tt Update}$ is applied on $z$ is unbounded. Indeed, let us denote $ \dispersion_n(z)=\min_{z' \in S_n}||z'-z||_2.$ From Lemma~\ref{lemma:dispersion}, $\lim_{n \rightarrow \infty}\dispersion_n(z)=0$ happens almost surely. Therefore, with  probability one, there are infinitely many $n$ such that $\dispersion_n(z) < \dispersion_{n-1}(z)$ . In other words, with probability one, we can find infinitely many $z_{new}$ at Line~\ref{line:main:compute_update} of Algorithm~\ref{algorithm:main} such that $z$ is updated.
For those $n$, the holding time at $z$ is recomputed as $\Delta t_n(z)=\gamma_t \left(\frac{\log |S_n|}{|S_n|}\right)^{\dummyasyn \dummyrate\rho/d_x}$ at Line~\ref{line:holdingtime} of Algorithm~\ref{algorithm:update}. Thus, the following event happens with probability one:
$$
\lim_{n \to \infty} \Delta t_n(z) = 0,
$$
which satisfies the first condition of local consistency in Eq.~\ref{eqn:lc1}.

The other conditions of local consistency in Eqs.~\ref{eqn:lc2}-\ref{eqn:lc4} are satisfied immediately by the way that the transition probabilities are computed (see the description of the ${\tt ComputeTranProb}$ procedure given in Section~\ref{section:algorithm:primitiveprocedures}). 
Hence, the MDP sequence $\{ {\cal M}_n \}_{n=0}^{\infty}$ and holding times $\{ \Delta t_n \}_{n=0}^{\infty}$ are locally consistent for large n with probability one.
\qed

\section{Proof of Theorem \ref{theoremAlmostSureConvergence}} \label{appendix:proof:theorem:almostsure_converge}
To highlight the idea of the entire proof, we first prove the convergence under synchronous value iterations before presenting the convergence under asynchronous value iterations. As we will see, the shrinking rate of holding times plays a crucial role in the convergence proof. The outline of the proof is as follows.
\begin{itemize}
\item[S1:] Convergence under synchronous value iterations: In Algorithm \ref{algorithm:main}, we take $L_n \geq 1$ and $K_n=|S_n|-1$. In other words, in each iteration, we perform synchronous value iterations. Moreover, we assume that we are able to solve the Bellman equation (Eq. \ref{eqn:Bellman}) exactly. We show that $J_n$ converges uniformly to $J^*$ almost surely in this setting.
\item[S2:] Convergence under asynchronous value iterations: When $ K_n=\Theta(|S_n|^{\dummyasyn}) < |S_n|$, we only update a subset of $S_n$ in each of $L_n$ passes. We show that $J_n$ still converges uniformly to $J^*$ almost surely in this new setting.
\end{itemize}

In the following discussion and next sections, we need to compare functions on different domains $S_n$. To ease the discussion and simplify the notation, we adopt the following interpolation convention. Given $X \subset Y$ and $J: X \rightarrow \reals$, we interpolate $J$ to $\overline{J}$ on the entire domain $Y$ via nearest neighbor value: 
$$
\forall y \in Y : \ \ \ \overline{J}(y) = J(z) \text{ where } z=\argmin_{z' \in X}||z'-y||.
$$
To compare $J: X \rightarrow \reals$ and $J': Y \rightarrow \reals$ where $X,Y \subset S$, we define the sup-norm:
$$
||J-J'||_{\infty} =  ||\overline{J}-\overline{J'}||_{\infty},
$$
where $\overline{J}$ and $\overline{J'}$ are interpolations of $J$ and $J'$ from the domains $X$ and $Y$ to the entire domain $S$ respectively.
In particular, given $J_n: S_n \rightarrow \reals$, and $J:S \rightarrow \reals$, then $||J_n-J||_{S_n} \leq ||J_n-J||_{\infty}$. Thus, if $||J_n-J||_{\infty}$ approaches $0$ when $n$ approaches $\infty$, so does $||J_n-J||_{S_n}$. Hence, we will work with the (new) sup-norm $||\cdot||_{\infty}$ instead of $||\cdot||_{S_n}$ in the proofs of Theorems~\ref{theoremAlmostSureConvergence}-\ref{theoremAlmostSureConvergenceSampleControl}. The triangle inequality also holds for any functions $J, J', J''$ defined on subsets of $S$ with respect to the above sup-norm: 
$$
||J-J'||_{\infty} \leq ||J-J''||_{\infty} + ||J''-J'||_{\infty}.
$$

Let $B(X)$ denote a set of all real-valued bounded functions over a domain $X$. For $S_n \subset S_{n'}$ when $n < n'$, a function $J$ in $B(S_n)$ also belongs to $B(S_{n'})$, meaning that we can interpolate $J$ on $S_n$ to a function $J'$ on $S_{n'}$. In particular, we say that $J$ in $B(S_n)$ also belongs to $B(S)$.

Lastly, due to random sampling, $S_n$ is a random set, and therefore functions $J_n$ and $J^*_n$ defined on $S_n$ are random variables. In the following discussion, inequalities hold surely without further explanation when it is clear from the context, and inequalities hold almost surely if they are followed by ``w.p.1''.

\subsection*{S1: Convergence under synchronous value iterations}
In this step, we first set $L_n \geq 1$ and $K_n=|S_n|-1$ in Algorithm \ref{algorithm:main}. Thus, for all $z \in S_n$, the holding time $\Delta t_n (z)$ equals $\gamma_t \left(\frac{\log |S_n|}{|S_n|}\right)^{\dummyasyn\dummyrate \rho/d_x}$ and is denoted as $\Delta t_n$. We consider the MDP ${\cal M}_n= (S_n, U,P_n,G_n,H_n)$ at $n^{th}$ iteration and define the following operator $T_n:B(S_n) \rightarrow B(S_n)$ that transforms every $J \in B(S_n)$ after a Bellman update as:
\begin{align}
T_nJ(z)= \min_{v \in U}\{ G_n(z,v) + \alpha^{\Delta t_n} \EE_{P_n}\left[ J(y) |z,v\right]\}, \ \ \forall z \in S_n, \label{eqn:operator}
\end{align}
assuming that we can solve the minimization on the RHS of Eq.~\ref{eqn:operator} exactly. For each $k \geq 2$, operators $T^k_n$ are defined recursively as $T^k_n=T_nT^{k-1}_n$ and $T^1_n=T_n$. When we apply $T_n$ on $J \in B(S_k)$ where $k<n$, $J$ is interpolated to $S_n$ before applying $T_n$. Thus, in Algorithms \ref{algorithm:main}-\ref{algorithm:update}, we implement the next update
$$
J_{n}= T^{L_n}_nJ_{n-1}.
$$
From \cite{Bertsekas:2001:DPO:516163}, we have the following results: $J^*_n =T_nJ^*_n$, and $T_n$ is a contraction mapping. For any $J$ and $J'$ in $B(S_n)$, the following inequality happens surely:
\begin{align*}
||T_nJ - T_nJ'||_{\infty} \leq \alpha^{\Delta t_n} ||J-J'||_{\infty}.
\end{align*}
Combining the above results:
\begin{align*}
||J^*_n - J_n||_{\infty} &= ||T^{L_n}_nJ^*_n - T^{L_n}_nJ_{n-1}||_{\infty} \leq  \alpha^{L_n \Delta t_n} ||J^*_n-J_{n-1}||_{\infty} \\
                      &\leq \alpha^{\Delta t_n} ( ||J^*_n-J^*_{n-1}||_{\infty} +||J^*_{n-1}-J_{n-1}||_{\infty}),
\end{align*}
where the second inequality follows from the triangle inequality, and $L_n \geq 1, \alpha \in (0,1)$.

Thus, by iterating over $n$, for any $N \geq 1$ and $n > N$, we have:
\begin{align}
||J^*_n - J_n||_{\infty} \leq A_n + \alpha^{\Delta t_n+\Delta t_{n-1}...+\Delta t_{N+1}}||J^*_N - J_N||_{\infty}, \label{eqn:C1}
\end{align}
where $A_n$ are defined recursively:
\begin{align}
A_n &= \alpha^{\Delta t_n}(||J^*_n-J^*_{n-1}||_{\infty}  +  A_{n-1}),  \ \ \forall n > N+1, \label{eqn:An1} \\ 
A_{N+1} &= \alpha^{\Delta t_{N+1}}||J^*_{N+1}-J^*_{N}||_{\infty}. \label{eqn:An2}
\end{align}
Note that for any $N \geq 1$:
$$
\lim_{n \rightarrow \infty} \Delta t_n+\Delta t_{n-1}...+\Delta t_{N+1} = \infty,
$$
due to the choice of holding times $\Delta t_n$ in the procedure {\tt ComputeHoldingTime}. Therefore,
$$
\lim_{n \rightarrow \infty} \alpha^{\Delta t_n+...+\Delta t_{N+1}}||J^*_N - J_N||_{\infty} =0.
$$
By Theorem \ref{theorem:JoptnJoptas}, the following event happens with probability 1 (w.p.1):
$$
\lim_{n \rightarrow \infty} ||J^*_n-J^*||_{\infty} = 0,
$$
hence,
$$
\lim_{n \rightarrow \infty} ||J^*_n-J^*_{n-1}||_{\infty} = 0 \text{ w.p.1. }
$$
Thus, for any fixed $\epsilon >0$, we can choose $N$ large enough such that:
\begin{align}
&||J^*_n-J^*_{n-1}||^{1-\dummyrate}_{\infty} < \epsilon \text{ w.p.1 for all } n > N, \text{ and }\\
&\alpha^{\Delta t_n+...+\Delta t_{N+1}}||J^*_N - J_N||_{\infty}  < \epsilon \text{ surely}, \label{eqn:C2}
\end{align}
where $\dummyrate \in (0,1)$ is the constant defined in the procedure {\tt ComputeHoldingTime}.

Now, for all $n > N$, we rearrange Eqs.\ref{eqn:An1}-\ref{eqn:An2} to have 
$$
A_n \leq \epsilon B_n \text{ w.p.1, }
$$  
where
\begin{align*}
B_n &= \alpha^{\Delta t_n}(||J^*_n-J^*_{n-1}||^{\dummyrate}_{\infty}  +  B_{n-1}),  \ \ \forall n > N+1, \\
B_{N+1} &= \alpha^{\Delta t_{N+1}}||J^*_{N+1}-J^*_{N}||^{\dummyrate}_{\infty}.
\end{align*}
We can see that for $n > N+1$:
\begin{align}
&B_n = \alpha^{\Delta t_n}(||J^*_n-J^*_{n-1}||^{\dummyrate}_{\infty}  +  B_{n-1}) < \ \ \epsilon^{\dummyrate/(1-\dummyrate)}+ B_{n-1}  \label{eqn:BB1} \text{ w.p.1,} \\
&B_{N+1} = \alpha^{\Delta t_{N+1}}||J^*_{N+1}-J^*_{N}||^{\dummyrate}_{\infty} < \ \ \epsilon^{\dummyrate/(1-\dummyrate)}  \label{eqn:B3} \text{ w.p.1.}
\end{align}
We now prove that almost surely, $B_n$ is bounded for all $n \geq N$. Indeed, we derive the conditions so that $B_{n-1} < B_n$ or $B_{n-1} \geq B_n$ as follows:
\begin{align*}
&B_{n-1} < B_{n} \\
\Leftrightarrow \ \ &B_{n-1}  < \alpha^{\Delta t_n}(||J^*_n-J^*_{n-1}||^{\dummyrate}_{\infty}  +  B_{n-1})\\
\Leftrightarrow \ \ &B_{n-1} < \frac{\alpha^{\Delta t_n}||J^*_n-J^*_{n-1}||^{\dummyrate}_{\infty} }{1 - \alpha^{\Delta t_n}} \\
\Rightarrow \ \ &B_{n-1} < {\cal K} \frac{\alpha^{\gamma_t \left(\frac{\log |S_n|}{|S_n|}\right)^{\dummyasyn \dummyrate\rho/d_x}} \left( \frac{\log |S_n|}{|S_n|} \right)^{\dummyrate \rho/d_x} }{1 - \alpha^{\gamma_t \left(\frac{\log |S_n|}{|S_n|}\right)^{\dummyasyn \dummyrate\rho/d_x}}} \text{ w.p.1.}
\end{align*}
The last inequality is due to Theorem \ref{theorem:JoptnJoptas} and $|S_n|=\Theta(n)$, $|S_{n-1}|=\Theta(n-1)$:
$$
||J^*_n-J^*_{n-1}||_{\infty}= O((\log{|S_{n-1}|}/|S_{n-1}|)^{\rho/d_x}) < {\cal K} \left(\frac{\log |S_n|}{|S_n|} \right)^{\rho/d_x} \text{ w.p.1, }
$$
for large $n$ where ${\cal K}$ is some finite constant. Let $\beta=\alpha^{\gamma_t} \in (0,1)$. For large $n, \ \frac{\log |S_n|}{|S_n|}$ are in $(0,1)$ and $\dummyasyn \in (0,1]$. Let us define
$$
x_n=\left( \frac{\log |S_n|}{|S_n|} \right)^{\dummyasyn \dummyrate\rho/d_x},\text{ and} \ \ \
y_n=\left( \frac{\log |S_n|}{|S_n|} \right)^{\dummyrate\rho/d_x}.
$$ 
Then, $x_n \geq y_n >0$. The above condition is simplified to
$$
B_{n-1} < {\cal K} \frac{\beta^{x_n}y_n}{1-\beta^{x_n}} \leq {\cal K} \frac{\beta^{x_n}x_n}{1-\beta^{x_n}}, \text{ w.p.1.}
$$
Consider the function $r:[0,\infty) \rightarrow \reals$ such that $r(x)=\frac{\beta^{x}x}{1-\beta^{x}}$, we can verify that $r(x)$ is non-increasing and is bounded by $r(0)=-1/\log(\beta)$. Therefore: 
\begin{align}
&B_{n-1} < B_{n} \ \ \Rightarrow \ \ B_{n-1} < -\frac{{\cal K}}{\log(\beta)}=-\frac{{\cal K}}{\gamma_t \log(\alpha)} \label{eqn:B1} \ \ \text{ w.p.1.} 
\end{align}
Or conversely, 
\begin{align}
& B_{n-1} \geq -\frac{{\cal K}}{\gamma_t \log(\alpha)} \ \ \text{ w.p.1} \ \ \Rightarrow \ \ B_{n-1} \geq B_{n} \ \ \text{ w.p.1}. \label{eqn:BB3}
\end{align}
\begin{figure*}
\begin{center}
  \includegraphics[scale=0.5]{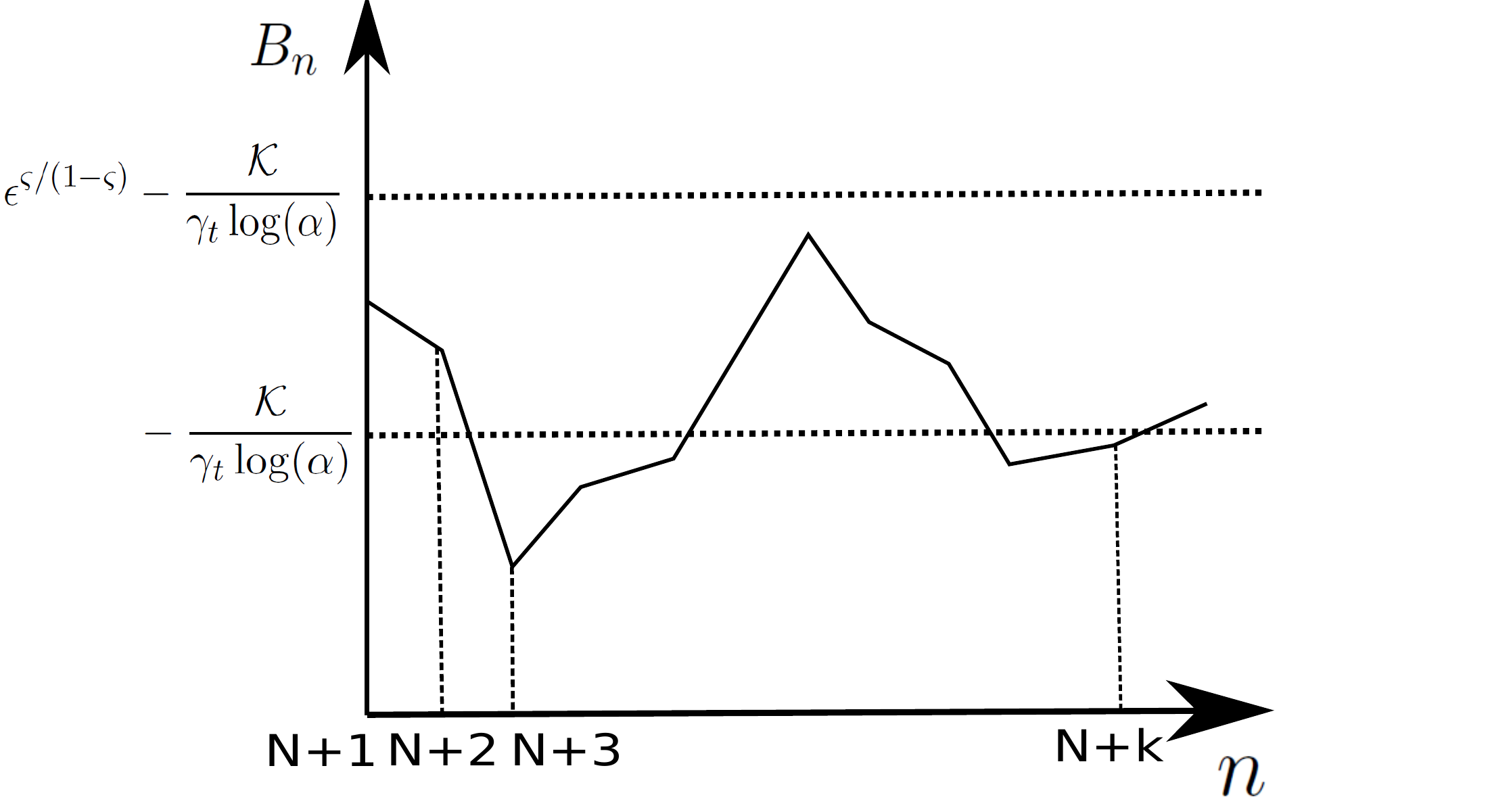}
\end{center}
  \vspace{-0.1in}
  \caption{A realization of the random sequence $B_n$. We have $B_{N+1}$ less than $\epsilon^{\dummyrate/(1-\dummyrate)}$ w.p.1. For $n$ larger than $N+1$, when $B_{n-1} \geq -\frac{{\cal K}}{\gamma_t \log(\alpha)}$ w.p.1, the sequence is non-increasing w.p.1, i.e. $B_{n-1} \geq B_n$ w.p.1. Conversely, when the sequence is increasing, i.e. $B_{n-1} < B_n$, we have $B_{n-1} < -\frac{{\cal K}}{\gamma_t \log(\alpha)}$ w.p.1, and the increment is less than $\epsilon^{\dummyrate/(1-\dummyrate)}$. Hence, the random sequence $B_n$ is bounded by $\epsilon^{\dummyrate/(1-\dummyrate)}- \frac{{\cal K}}{\gamma_t \log(\alpha)}$ w.p.1.}
  \label{BnDraw}
\end{figure*}
The above discussion characterizes the random sequence $B_n$. In particular, Fig.~\ref{BnDraw} shows a possible realization of the random sequence $B_n$ for $n > N$. As shown visually in this plot, $B_{N+1}$ is less than $\epsilon^{\dummyrate/(1-\dummyrate)}$ w.p.1 and thus is less than $\epsilon^{\dummyrate/(1-\dummyrate)} - \frac{{\cal K}}{\gamma_t \log(\alpha)}$ w.p.1. For $n > N+1$, assume that we have already shown that $B_{n-1}$ is bounded from above by $\epsilon^{\dummyrate/(1-\dummyrate)} - \frac{{\cal K}}{\gamma_t \log(\alpha)}$ w.p.1. When $B_{n-1} \geq -\frac{{\cal K}}{\gamma_t \log(\alpha)}$ w.p.1, the sequence is non-increasing w.p.1. Conversely, when the sequence is increasing, i.e. $B_{n-1} < B_n$, we assert that $B_{n-1} < -\frac{{\cal K}}{\gamma_t \log(\alpha)}$ w.p.1 due to Eq.~\ref{eqn:B1}, and the increment is less than $\epsilon^{\dummyrate/(1-\dummyrate)}$ due to Eq.~\ref{eqn:BB1}. In both cases, we conclude that $B_{n}$ is also bounded by $\epsilon^{\dummyrate/(1-\dummyrate)} - \frac{{\cal K}}{\gamma_t \log(\alpha)}$ w.p.1. Hence, from Eqs.~\ref{eqn:BB1}-\ref{eqn:BB3}, we infer that $B_n$ is bounded w.p.1 for all $n >N$:
$$
B_n < \epsilon^{\dummyrate/(1-\dummyrate)} -\frac{{\cal K}}{\gamma_t \log(\alpha)} \text{ w.p.1.} 
$$
Thus, for all $n>N$:
\begin{align}
A_n \leq \epsilon B_n < \epsilon \Big(\epsilon^{\dummyrate/(1-\dummyrate)} -\frac{{\cal K}}{\gamma_t \log(\alpha)} \Big) \label{eqn:C3} \text{ w.p.1.} 
\end{align}

Combining Eqs.~\ref{eqn:C1},\ref{eqn:C2}, and \ref{eqn:C3}, we conclude that for any $\epsilon >0$, there exists $N \geq 1$ such that for all $n > N$, we have
$$
||J_n^* - J_n||_{\infty} < \epsilon\Big( \epsilon^{\dummyrate/(1-\dummyrate)} -\frac{{\cal K}}{\gamma_t \log(\alpha)}+ 1 \Big) \text{ w.p.1}.
$$
Therefore,
$$
\lim_{n \rightarrow \infty} ||J^*_n - J_n||_{\infty} =0 \text{ w.p.1}.
$$
Combining with
$$
\lim_{n \rightarrow \infty} ||J^*_n - J^*||_{\infty} =0 \text{ w.p.1},
$$
we obtain
$$
\lim_{n \rightarrow \infty} ||J_n - J^*||_{\infty} =0 \text{ w.p.1}.
$$

In the above analysis, the shrinking rate $\left(\frac{\log |S_n|}{|S_n|}\right)^{\dummyasyn \dummyrate\rho/d_x}$ of holding times plays an important role to construct an upper bound of the sequence $B_n$. This rate must be slower than the convergence rate $\left(\frac{\log |S_n|}{|S_n|}\right)^{\rho/d_x}$ of $J^*_n$ to $J^*$ so that the function $r(x)$ is bounded, enabling the convergence of cost value functions $J_n$ to the optimal cost-to-go $J^*$. Remarkably, we have accomplished this convergence by carefully selecting the range $(0,1)$ of the parameter $\dummyrate$. The role of the parameter $\dummyasyn$ in this convergence will be clear in Step S2. Lastly, we note that if we are able to obtain a faster convergence rate of $J^*_n$ to $J^*$, we can have faster shrinking rate for holding times.

\subsection*{S2: Convergence under asynchronous value iterations}
When $1 \leq L_n$ and $ K_n=\Theta(|S_n|^{\dummyasyn}) < |S_n|$, we first claim the following result:
\begin{lemma} \label{lemma:sumtozero}
Consider any increasing sequence $\{ n_k \}_{k=0}^{\infty}$ as a subset of $\naturals$ such that $n_0=0$ and $k\leq |S_{n_k}| \leq k^{1/\dummyasyn}$. For $J \in B(S)$, we define:
\begin{align*}
A\big( \{n_j\}_{j=0}^k\big) =  \alpha^{\Delta t_{n_k} + \Delta t_{n_{k-1}} +...+\Delta t_{n_1}}||J^*_{n_1}-J||_{\infty} & + \alpha^{\Delta t_{n_k} + \Delta t_{n_{k-1}}+...+\Delta t_{n_2}}||J^*_{n_2}-J^*_{n_1}||_{\infty} \\
& + ... + \alpha^{\Delta t_{n_k}}||J^*_{n_k}-J^*_{n_{k-1}}||_{\infty}.
\end{align*}
The following event happens with probability one:
$$
\lim_{k \rightarrow \infty} A\big( \{n_j\}_{j=0}^k\big) =0. 
$$
\end{lemma}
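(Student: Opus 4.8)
The plan is to put $A(\{n_j\}_{j=0}^k)$, which I abbreviate $A_k$, into the same recursive shape used in Step S1. Factoring $\alpha^{\Delta t_{n_k}}$ out of every summand collapses the telescoping definition into the one-step recursion $A_k = \alpha^{\Delta t_{n_k}}\big(A_{k-1}+\delta_k\big)$, where $\delta_k := ||J^*_{n_k}-J^*_{n_{k-1}}||_\infty$ for $k\geq 2$, $\delta_1 := ||J^*_{n_1}-J||_\infty$, and $A_0:=0$. This is precisely the form of Eqs.~\eqref{eqn:An1}--\eqref{eqn:An2}, so the task is again to bound a discounted accumulation of the increments $\delta_k$. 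Since every summand is nonnegative we always have $A_k\geq 0$, so it suffices to show $\limsup_k A_k\leq 0$ with probability one.

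Second, I would isolate a purely recursive fact, free of the model: if $a_k\in(0,1)$, $\delta_k\geq 0$, $A_k=a_k(A_{k-1}+\delta_k)$, $A_0\geq 0$, and $\sum_k(1-a_k)=\infty$, then $\limsup_k A_k\leq\limsup_k U_k$ with $U_k := \frac{a_k\delta_k}{1-a_k}$. The proof is the monotonicity picture already drawn for the sequence $B_n$ in Fig.~\ref{BnDraw}: a one-line computation shows $A_{k-1}\geq U_k$ forces $A_k\leq A_{k-1}$, whereas $A_{k-1}<U_k$ forces $A_k<U_k$. Fix $\eta>0$ and pick $K$ so that $U_k<\limsup_j U_j+\eta/2$ for $k>K$. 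Then once $A_{k-1}$ drops below $\limsup_j U_j+\eta$ it remains below it; and if it never does, each step satisfies $A_k-A_{k-1}\leq-(1-a_k)(A_{k-1}-U_k)\leq -(1-a_k)\eta/2$, so $\sum_k(1-a_k)=\infty$ would send $A_k\to-\infty$, contradicting $A_k\geq 0$. Letting $\eta\downarrow 0$ gives the claimed inequality.

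Third, and this is the crux, I would show $U_k\to 0$ w.p.1, i.e. $\delta_k/\Delta t_{n_k}\to 0$; here I use $1-\alpha^{\Delta t_{n_k}}\geq \tfrac12(-\log\alpha)\,\Delta t_{n_k}$ for large $k$, which also certifies $a_k=\alpha^{\Delta t_{n_k}}\to 1$ and, via $\sum_k\Delta t_{n_k}=\infty$, the divergence hypothesis of Step~2. Writing $m_k=|S_{n_k}|$, Theorem~\ref{theorem:JoptnJoptas} and the triangle inequality give $\delta_k\leq ||J^*_{n_k}-J^*||_\infty+||J^*_{n_{k-1}}-J^*||_\infty = O\big((\log m_{k-1}/m_{k-1})^{\rho/d_x}\big)$ w.p.1, while {\tt ComputeHoldingTime} fixes $\Delta t_{n_k}=\gamma_t(\log m_k/m_k)^{\theta\varsigma\rho/d_x}$. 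Using only the sandwich $k\leq m_k\leq k^{1/\theta}$ and the monotonicity of $m\mapsto\log m/m$, the extremal configuration $m_{k-1}=k-1,\ m_k=k^{1/\theta}$ bounds the ratio $\delta_k/\Delta t_{n_k}$ by a constant times $(\log k)^{(1-\theta\varsigma)\rho/d_x}\,k^{-(1-\varsigma)\rho/d_x}$; since $\varsigma<1$ the polynomial factor $k^{-(1-\varsigma)\rho/d_x}$ beats the polylogarithmic factor, so the ratio tends to $0$. The same sandwich lower-bounds $\Delta t_{n_k}$ by a constant times $(\log k)^{\theta\varsigma\rho/d_x}k^{-\varsigma\rho/d_x}$, whose series diverges because $\varsigma\rho/d_x<1$. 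Combining the three steps yields $\limsup_k A_k\leq\limsup_k U_k=0$, hence $A_k\to 0$ w.p.1.

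The main obstacle is exactly this third step, and it is where the asynchronous analysis genuinely departs from Step~S1. If one tried to transplant the $B_n$-boundedness argument verbatim, splitting $\delta_k=\delta_k^{1-\varsigma}\delta_k^{\varsigma}$ and controlling $\delta_k^{\varsigma}/\Delta t_{n_k}$, the worst-case index gap allowed by $k\leq m_k\leq k^{1/\theta}$ makes $\delta_k^{\varsigma}/\Delta t_{n_k}$ grow like $(\log k)^{(1-\theta)\varsigma\rho/d_x}$, which is \emph{unbounded} when $\theta<1$; thus the analogue of $B_n$ need not stay bounded and the S1 route fails. The fix is to compare the full increment $\delta_k$ (not its $\varsigma$-power) against $\Delta t_{n_k}$: the extra polynomial decay $k^{-(1-\varsigma)\rho/d_x}$, which owes entirely to the strictly sub-unit rate $\varsigma$, is precisely what overcomes the polylogarithmic blow-up caused by the potentially large separation of $n_{k-1}$ and $n_k$. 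Some care is also needed to verify the two hypotheses of the recursion fact, namely $a_k\to 1$ and $\sum_k(1-a_k)=\infty$, both of which follow from the same two-sided control of $\Delta t_{n_k}$ supplied by the sandwich.
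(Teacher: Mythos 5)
Your proof is correct, and while it shares the paper's skeleton --- rewrite $A\big(\{n_j\}_{j=0}^k\big)$ as the one-step recursion $A_k=\alpha^{\Delta t_{n_k}}(A_{k-1}+\delta_k)$, exploit the divergence of $\sum_k\Delta t_{n_k}$ deduced from $|S_{n_k}|\le k^{1/\theta}$, and run the monotonicity dichotomy of Fig.~\ref{BnDraw} --- the way you close the argument is genuinely different. The paper fixes an auxiliary exponent $\varrho>1$ with $\varrho\varsigma<1$, peels off a factor $\|J^*_{n_k}-J^*_{n_{k-1}}\|_\infty^{1-\varrho\varsigma}<\epsilon$, and then only proves that the resulting sequence $B_{n_k}$ built from the $\varrho\varsigma$-power of the increments stays \emph{bounded}; the comparison $y_k\le{\cal K}_1 x_k$ is exactly where the extra power $\varrho$ absorbs the polylogarithmic gap created by the worst-case spread $|S_{n_{k-1}}|=k-1$, $|S_{n_k}|=k^{1/\theta}$, and the conclusion follows from $A_{n_k}\le\epsilon B_{n_k}+\epsilon$ with $\epsilon$ arbitrary. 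You instead isolate the clean recursive fact $\limsup_k A_k\le\limsup_k U_k$ (whose proof requires the extra hypothesis $\sum_k(1-a_k)=\infty$, which you correctly verify from the two-sided control of $\Delta t_{n_k}$ and $\varsigma\rho/d_x<1$) and then show that the \emph{full} ratio $U_k\asymp\delta_k/\Delta t_{n_k}$ already tends to zero, the polynomial factor $k^{-(1-\varsigma)\rho/d_x}$ coming from $\varsigma<1$ beating the polylogarithmic factor $(\log k)^{(1-\theta\varsigma)\rho/d_x}$ coming from the index gap. This buys a shorter proof with no auxiliary exponent $\varrho$ and no $\epsilon$-splitting, and it makes the role of the two design parameters transparent: $\varsigma<1$ supplies the polynomial margin, $\theta$ controls only the polylogarithmic loss. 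Your side remark also correctly diagnoses why a verbatim transplant of Step S1 (comparing $\delta_k^{\varsigma}$ against $\Delta t_{n_k}$) would fail for $\theta<1$; the paper's $\varrho$-device and your direct estimate are two ways of paying for the same gap, yours being the more economical. All the supporting steps (the lower bound $1-\alpha^{\Delta t}\ge\tfrac12(-\log\alpha)\Delta t$ for small holding times, the use of Theorem~\ref{theorem:JoptnJoptas} with a path-dependent but a.s.\ finite constant, and the monotonicity of $m\mapsto\log m/m$) are used exactly as the paper uses them, so no gap remains.
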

\begin{proof}
We rewrite
$
A\big( \{n_j\}_{j=0}^k\big) =  A_{n_k}
$
where
$A_{n_k}$ are defined recursively:
\begin{align}
A_{n_k} &= \alpha^{\Delta t_{n_k}}(||J^*_{n_k}-J^*_{n_{k-1}}||_{\infty}  +  A_{n_{k-1}}),  \ \ \forall k > K,\\ 
A_{n_{K}} &= A\big( \{n_j\}_{j=0}^{K}\big), \ \ \forall K \geq 1. \label{eqn:An2unused}
\end{align}
We note that 
\begin{align*}
&\Delta t_{n_k}+ \Delta t_{n_{k-1}}+...+\Delta t_{n_{K}} \\
&= \gamma_t \left(\frac{\log |S_{n_{k}}|}{|S_{n_{k}}|}\right)^{\dummyasyn \dummyrate\rho/d_x} + \gamma_t \left(\frac{\log |S_{n_{k-1}}|}{|S_{n_{k-1}}|}\right)^{\dummyasyn \dummyrate\rho/d_x} +...+\gamma_t \left(\frac{\log |S_{n_{K}}|}{|S_{n_{K}}|}\right)^{\dummyasyn \dummyrate\rho/d_x} \\
&\geq \gamma_t \left(\frac{1}{|S_{n_{k}}|}\right)^{\dummyasyn \dummyrate\rho/d_x} + \gamma_t \left(\frac{1}{|S_{n_{k-1}}|}\right)^{\dummyasyn \dummyrate\rho/d_x} +...+\gamma_t \left(\frac{1}{|S_{n_{K}}|}\right)^{\dummyasyn \dummyrate\rho/d_x} \\
&\geq \gamma_t \frac{1}{k^{\dummyrate\rho/d_x}} + \gamma_t \frac{1}{(k-1)^{\dummyrate\rho/d_x}} +...+\gamma_t \frac{1}{(K)^{\dummyrate\rho/d_x}} \geq \gamma_t (\frac{1}{k}+\frac{1}{k-1}+...+\frac{1}{K}),
\end{align*}
where the second inequality uses the given fact that $|S_{n_{k}}| \leq k^{1/\dummyasyn}$. Therefore, for any $K \geq 1$:
$$
\lim_{k \rightarrow \infty} \alpha^{\Delta t_{n_k}+\Delta t_{n_{k-1}}...+\Delta t_{n_{K}}}=0.
$$
We choose a constant $\varrho >1$ such that $\varrho \dummyrate < 1$. For any fixed $\epsilon >0$, we can choose $K$ large enough such that:
\begin{align}
&||J^*_{n_k}-J^*_{n_{k-1}}||^{1-\varrho\dummyrate}_{\infty} < \epsilon \text{ w.p.1 for all } k > K.
\end{align}
For all $k > K$, we can write
$$
A_{n_k} \leq \epsilon B_{n_k} + \alpha^{\Delta t_{n_k}+...+\Delta t_{n_{K+1}}}{A\big( \{n_j\}_{j=0}^{K}\big)}.
$$  
where
\begin{align*}
B_{n_k} &= \alpha^{\Delta t_{n_k}}(||J^*_{n_k}-J^*_{n_{k-1}}||^{\varrho \dummyrate}_{\infty}  +  B_{n_{k-1}}),  \ \ \forall k > K, \\
B_{n_{K}} &= 0.
\end{align*}
Furthermore, we can choose $K'$ sufficiently large such that $K'\geq K$ and for all $k > K'$:
$$
\alpha^{\Delta t_{n_k}+...+\Delta t_{n_{K+1}}}{A\big( \{n_j\}_{j=0}^{K}\big)} \leq \epsilon.
$$
We obtain:
$$
A_{n_k} \leq \epsilon B_{n_k} + \epsilon , \ \ \forall k > K' \geq K \geq 1.  
$$
We can also see that for $k > K$:
\begin{align}
B_{n_k} = \alpha^{\Delta t_{n_k}}(||J^*_{n_k}-J^*_{n_{k-1}}||^{\varrho \dummyrate}_{\infty}  +  B_{n_{k-1}}) < \ \ \epsilon^{\varrho \dummyrate/(1-\varrho \dummyrate)}+ B_{n_{k-1}}  \label{eqn:B2} \text{ w.p.1.}
\end{align}
Similar to Step S1, we characterize the random sequence $B_{n_{k}}$ as follows:
\begin{align*}
&B_{n_{k-1}} < B_{n_k} \\
\Leftrightarrow \ \ &B_{n_{k-1}} < \frac{\alpha^{\Delta t_{n_k}}||J^*_{n_k}-J^*_{n_{k-1}}||^{\varrho \dummyrate}_{\infty} }{1 - \alpha^{\Delta t_{n_k}}} \\
\Rightarrow \ \ &B_{n_{k-1}} < {\cal K} \frac{\alpha^{\gamma_t \left(\frac{\log |S_{n_k}|}{|S_{n_k}|}\right)^{\dummyasyn \dummyrate\rho/d_x}} \left( \frac{\log |S_{n_{k-1}}|}{|S_{n_{k-1}}|} \right)^{\varrho \dummyrate \rho/d_x} }{1 - \alpha^{\gamma_t \left(\frac{\log |S_{n_k}|}{|S_{n_k}|}\right)^{\dummyasyn \dummyrate\rho/d_x}}} \text{ w.p.1.}
\end{align*}
Let $\beta=\alpha^{\gamma_t} \in (0,1)$. We define:
$$
x_k=\left( \frac{\log |S_{n_k}|}{|S_{n_k}|} \right)^{\dummyasyn \dummyrate\rho/d_x}, \text{ and } \ \ \ y_k=\left( \frac{\log |S_{n_{k-1}}|}{|S_{n_{k-1}}|} \right)^{\varrho \dummyrate\rho/d_x}.
$$ 
We note that $\frac{\log{x}}{x}$ is a decreasing function for positive $x$. Since $|S_{n_{k-1}}| \geq k-1$ and $|S_{n_{k}}| \leq k^{1/\dummyasyn}$, we have the following inequalities:
$$
x_k \geq \left( \frac{(\frac{\log{k}}{\dummyasyn})^{\dummyasyn}}{k} \right)^{\dummyrate \rho /d_x}, \ \ \ y_k \leq \left( \frac{(\log (k-1))^{\varrho}}{(k-1)^{\varrho}} \right)^{\dummyrate \rho /d_x}.
$$
Since $\dummyasyn \in (0,1]$ and $\varrho >1$, we can find a finite constant ${\cal K}_1$ such that $y_k  < {\cal K}_1 x_k$ for large $k$. Thus, the above condition leads to
$$
B_{n_{k-1}} < {\cal K} \frac{\beta^{x_k}y_k}{1-\beta^{x_k}} < {\cal K}{\cal K}_1 \frac{\beta^{x_k}x_k}{1-\beta^{x_k}}, \text{ w.p.1.}
$$
Therefore: 
\begin{align*}
B_{n_{k-1}} < B_{n_{k}} \ \ \Rightarrow \ \ B_{n_{k-1}} < -\frac{{\cal K}{\cal K}_1}{\log(\beta)}=-\frac{{\cal K}{\cal K}_1}{\gamma_t \log(\alpha)}  \ \ \text{ w.p.1.}
\end{align*}
Or conversely,
\begin{align*}
B_{n_{k-1}} \geq -\frac{{\cal K}{\cal K}_1}{\gamma_t \log(\alpha)} \ \ \text{ w.p.1} \ \ \Rightarrow \ \ B_{n-1} \geq B_{n} \ \ \text{ w.p.1}.
\end{align*}
Arguing similarly to Step S1, we infer that for all $k >K' \geq K \geq 1$:
$$
B_{n_k} < \epsilon^{\varrho \dummyrate/(1-\varrho \dummyrate)} -\frac{{\cal K}{\cal K}_1}{\gamma_t \log(\alpha)} \text{ w.p.1.} 
$$
Thus, for any $\epsilon >0$, we can find $K' \geq 1$ such that for all $k>K'$:
\begin{align*}
A_{n_k} \leq \epsilon B_{n_k} + \epsilon < \epsilon \Big( \epsilon^{\varrho \dummyrate/(1-\varrho \dummyrate)} -\frac{{\cal K}{\cal K}_1}{\gamma_t \log(\alpha)} +1\Big) \text{ w.p.1.} 
\end{align*}
We conclude that
$$
\lim_{k \rightarrow \infty} A\big( \{n_j\}_{j=0}^k\big) =0. \text{ w.p.1}.
$$
\qed
\end{proof}

Returning to the main proof, we use the tilde notation to indicate asynchronous operations to differentiate with our synchronous operations in Step S1. We will also assume that $L_n=1$ for all $n$ to simplify the following notations. The proof for general $L_n \geq 1$ is exactly the same. 
We define the following (asynchronous) mappings $\widetilde{T}_n: B(S_n) \rightarrow B(S_n)$ as the restricted mappings of $T_n$ on $D_n$, a non-empty random subset of $S_n$, such that for all $J \in B(S_n)$:
\begin{align}
\widetilde{T}_nJ(z) &=  \min_{ v \in U} \Big \lbrace G_n(z,v) + \alpha^{\Delta t_n}{\EE}_{P_n} \big [J(y)|z,v\big] \Big \rbrace, \ \ \forall z \in D_n \subset S_n,\\
\widetilde{T}_nJ(z) &= J(z), \ \ \forall z \in S_n \backslash D_n.
\end{align}
We require that 
\begin{align}
 \cap_{n=1}^{\infty}\cup_{k=n}^{\infty}D_k=S.
\end{align}
In other words, every state in $S$ are sampled infinitely often. We can see that in Algorithm~\ref{algorithm:main}, if the set $Z_\mathrm{update}$ is assigned  to $D_n$ in every iteration (Line~\ref{line:main:compute_update}), the sequence $\{ D_n \}_{n=1}^{\infty}$ has the above property, and $|D_n|= \Theta(|S_n|^{\dummyasyn}) < |S_n|$.

Starting from any $\widetilde{J}_0 \in B(S_0)$, we perform the following asynchronous iteration 
\begin{align}
\widetilde{J}_{n+1}=\widetilde{T}_{n+1}\widetilde{J}_n,  \ \ \forall n \geq 0. 
\end{align}

Consider the following sequence $\{m_k\}_{k=0}^{\infty}$ such that $m_0=0$ and for all $k\geq 0$, from $m_{k}$ to $m_{k+1}-1$, all states in $S_{m_{k+1}-1}$ are chosen to be updated at least once, and a subset of states in $S_{m_{k+1}-1}$ is chosen to be updated exactly once. We observe that as the size of $S_n$ increases linearly with $n$, if we schedule states in $D_n \subset S_n$ to be updated in a round-robin manner, we have $k \leq S_{m_k} \leq k^{1/ \dummyasyn}$. When $D_n$ is chosen as shown in Algorithm~\ref{algorithm:main}, with high probability, $k \leq S_{m_k} \leq k^{1/ \dummyasyn}$. However, we will assume that the event $k \leq S_{m_k} \leq k^{1/ \dummyasyn}$ happens surely because we can always schedule a fraction of $D_n$ to be updated in a round-robin manner. 

We define $W_{n}$ as the set of increasing sub-sequences of the sequence $\{0,1,...,n\}$ such that each sub-sequence contains $\{m_j\}_{j=0}^{k}$ where $m_k \leq n < m_{k+1}$:
$$
W_n = \Big \{ \{i_j\}_{j=0}^T \ \big | \ \{ m_j \}_{j=0}^{k} \subset  \{i_j\}_{j=0}^T \subset \{0,1,...,n\} \wedge T \geq 2  \wedge m_k \leq n < m_{k+1} \Big \}.
$$
Clearly, if $\{i_j\}_{j=0}^T \in  W_n$, we have $i_0 =0$. For each $\{i_j\}_{j=0}^T \in W_n$, we define
\begin{align*}
A\big( \{i_j\}_{j=0}^T\big)=  \alpha^{\Delta t_{i_T} + \Delta t_{i_{T-1}} +...+\Delta t_{i_1}}||J^*_{i_1}-\widetilde{J}_0||_{\infty} & + \alpha^{\Delta t_{i_T} + \Delta t_{i_{T-1}}+...+\Delta t_{i_2}}||J^*_{i_2}-J^*_{i_1}||_{\infty} \\
& + ... + \alpha^{\Delta t_{i_T}}||J^*_{i_T}-J^*_{i_{T-1}}||_{\infty}.
\end{align*}
We will prove by induction that
\begin{align}
\forall z \in D_n \Rightarrow |\widetilde{J}_{n}(z) - J^*_{n}(z)| \leq \max_{\{i_j\}_{j=0}^T \in W_n}  A\big(\{i_j\}_{j=0}^T\big). \label{eqn:asyninduction}
\end{align}
When $n=1$, the only sub-sequence is $\{i_j\}_{j=0}^T=\{0,1\} \in W_1$. It is clear that for $z \in D_1$, due to the contraction property of $T_{1}$:
$$
|J^*_1(z)-\widetilde{J}_1(z)| \leq \max_{\{i_j\}_{j=0}^T \in W_1}  A\big(\{i_j\}_{j=0}^T\big) = \alpha^{\Delta t_1} ||J^*_{1}-\widetilde{J}_{0}||_{\infty}.
$$
Assuming that Eq.~\ref{eqn:asyninduction} holds upto $n=m_k$, we need to prove that the equation also holds for those $n \in (m_{k},m_{k+1})$ and $n=m_{k+1}$. 
Indeed, let us assume that Eq.~\ref{eqn:asyninduction} holds for some $n \in [m_k,m_{k+1}-1)$. Denote $n_z \leq n$ as the index of the most recent update of $z$. For $z \in D_n$, we compute new values for $z$ in $\widetilde{J}_{n+1}$, and by the contraction property of $T_{n+1}$, it follows that
\begin{align*}
|\widetilde{J}_{n+1}(z) - J^*_{n+1}(z)| & \leq \alpha^{\Delta t_{n+1}} ||J^*_{n+1}-\widetilde{J}_n||_{\infty}  \\
& = \alpha^{\Delta t_{n+1}} \max_{z \in S_{n+1}}|J^*_{n+1}(z)-\widetilde{J}_n(z)| \\
&= \alpha^{\Delta t_{n+1}} \max_{z \in S_{n+1}}|J^*_{n+1}(z)-\widetilde{J}_{n_z}(z)|\\
& \leq \alpha^{\Delta t_{n+1}} \max_{z \in S_{n+1}} \big( | J^*_{n_z}(z) - \widetilde{J}_{n_z}(z)| + ||J^*_{n+1}-J^*_{n_z} ||_{\infty} \big)\\
&\leq \max_{z \in S_{n+1}} \Big( \alpha^{\Delta t_{n+1}} 
\max_{\{i_j\}_{j=0}^T \in W_{n_z}}  A\big(\{i_j\}_{j=0}^T\big)
 + \alpha^{\Delta t_{n+1}} ||J^*_{n+1}-J^*_{n_z}||_{\infty} \Big)\\
&=\max_{\{i_j\}_{j=0}^T \in W_{n+1}}  A\big(\{i_j\}_{j=0}^T\big).
\end{align*} 
The last equality is due to $n+1 \leq m_{k+1} -1 ,$ and $\{ m_j \}_{j=0}^{k} \subset  \{ \{i_j\}_{j=0}^T,n+1 \} \subset \{0,1,...,n+1\}$ for any $\{i_j\}_{j=0}^T \in W_{n_z}$. Therefore, Eq.~\ref{eqn:asyninduction} holds for all $n \in (m_k,m_{k+1}-1]$. When $n=m_{k+1}-1$, we also have the above relation for all $z \in D_{n+1}$:
\begin{align*}
|\widetilde{J}_{n+1}(z) - J^*_{n+1}(z)| & \leq \max_{z \in S_{n+1}} \Big( \alpha^{\Delta t_{n+1}} 
\max_{\{i_j\}_{j=0}^T \in W_{n_z}}  A\big(\{i_j\}_{j=0}^T\big)
 + \alpha^{\Delta t_{n+1}} ||J^*_{n+1}-J^*_{n_z}||_{\infty} \Big) \\
&=\max_{\{i_j\}_{j=0}^T \in W_{n+1}}  A\big(\{i_j\}_{j=0}^T\big).
\end{align*}
The last equality is due to $n+1=m_{k+1}$ and thus $\{ m_j \}_{j=0}^{k+1} \subset  \{ \{i_j\}_{j=0}^T,n+1 \} \subset \{0,1,...,n+1\}$ for any $ \{i_j\}_{j=0}^T \in W_{n_z}$. Therefore, Eq.~\ref{eqn:asyninduction} also holds for $n=m_{k+1}$ and this completes the induction.

We see that all $\{i_j\}_{j=0}^T \in W_n$, we have $j \leq i_j \leq m_j$, and thus $ j \leq S_{i_j} \leq j^{1/\dummyasyn}$. By Lemma \ref{lemma:sumtozero}, 
$$
\lim_{n \rightarrow \infty} A\big(\{i_j\}_{j=0}^T \in W_n \big) =0 \text{ w.p.1.}
$$ 
Therefore,
$$
\lim_{n \rightarrow \infty}\sup_{z \in D_n}|\widetilde{J}_{n}(z) - J^*_{n}(z)| =0 \text{ w.p.1.}
$$
Since all states are updated infinitely often, and $J^*_n$ converges uniformly to $J^*$ with probability one, we conlude that:
$
\lim_{n \rightarrow \infty} ||\widetilde{J}_n -J_n^*||_{\infty}=0 \text{ w.p.1.}
$
and
$
\lim_{n \rightarrow \infty} ||\widetilde{J}_n -J^*||_{\infty}=0 \text{ w.p.1.}
$

In both Steps S1 and S2, we have $\lim_{n \rightarrow \infty} ||J_n -J_n^*||_{\infty}=0$ w.p.1 \footnote{The tilde notion is dropped at this point.}, therefore $\mu_n$ converges to $\mu^*_n$ pointwise w.p.1 as $\mu_n$ and $\mu^*_n$ are induced from Bellman updates based on $J_n$ and $J^*_n$ respectively. Hence, the sequence of policies $\{ \mu_n \}_{n=0}^{\infty}$ has each policy $\mu_n$ as an $\epsilon_n$-optimal policy for the MDP $\mathcal{M}_n$ such that $\lim_{n\rightarrow \infty} \epsilon_n=0$. By Theorem~\ref{theorem:convergence_cost}, we conclude that
$$
\lim_{n \rightarrow \infty} |J_{n,\mu_n}(z) -J^*(z)| =0, \ \forall z \in S_n \text{ w.p.1.}
$$
\qed

\section{Proof of Theorem \ref{theoremAlmostSureConvergenceSampleControl}} 
We fix an initial starting state $x(0)=z$. In Theorem~\ref{theoremAlmostSureConvergence}, starting from an initial state $x(0)=z$, we construct a sequence of Markov chains $\{ \xi^n_i; i \in \naturals \}_{n=1}^{\infty}$ under minimizing control sequences $\{u^n_i;i \in \naturals \}_{n=1}^{\infty}$. By convention, we denote the associated interpolated continuous time trajectories and control processes as $\{ \xi^n(t); t \in \reals \}_{n=1}^{\infty}$ and $\{u^n(t); t \in \reals \}_{n=1}^{\infty}$ repsectively. 
By Theorem \ref{theorem:convergence_traj}, $\{ \xi^n(t); t \in \reals \}_{n=1}^{\infty}$ converges in distribution to an optimal trajectory $\{ x^*(t); t \in \reals \}$ under an optimal control process $\{ u^*(t); t \in \reals \}$ with probability one. In other words, $(\xi^n(\cdot),u^n(\cdot)) \overset {d}{\rightarrow} (x^*(\cdot),u^*(\cdot))$ w.p.1. We will show that this result can hold even when the Bellman equation is not solved exactly at each iteration.

In this theorem, we solve the Bellman equation (Eq.~\ref{eqn:Bellman}) by sampling uniformly in $U$ to form a control set $U_n$ such that $\lim_{n \rightarrow \infty}|U_n|=\infty$. Let us denote the resulting Markov chains and control sequences due to this modification as $\{ {\overline{\xi}}^n_i; i \in \naturals \}_{n=1}^{\infty}$ and $\{ {\overline{u}}^n_i; i \in \naturals \}_{n=1}^{\infty}$ with associated continuous time interpolations $\{ {\overline{\xi}}^n(t); t \in \reals \}_{n=1}^{\infty}$ and $\{ {\overline{u}}^n(t); t \in \reals \}_{n=1}^{\infty}$. In this case, randomness is due to both state and control sampling. We will prove that there exists minimizing control sequences $\{u^n_i; i \in \naturals \}_{n=1}^{\infty}$ and the induced sequence of Markov chains $\{ \xi^n_i; i \in \naturals \}_{n=1}^{\infty}$ in Theorem~\ref{theoremAlmostSureConvergence} such that
\begin{align}
(\overline{\xi}^n(\cdot) - \xi^n(\cdot),\overline{u}^n(\cdot) - u^n(\cdot)) \overset {p}{\rightarrow} (0,0), \label{eqn:convergesip1}
\end{align}
where $(0,0)$ denotes a pair of zero processes. To prove Eq.~\ref{eqn:convergesip1}, we first prove the following lemmas. In the following analysis, we assume that the Bellman update (Eq.~\ref{eqn:Bellman}) has minima in a neighborhood of positive Lebesgue measure. We also assume additional continuity of cost functions for discrete MDPs.

\begin{lemma} \label{lemma:bestcontrolinprob}
Let us consider the sequence of approximating MDPs $\{ {\cal M}_n \}_{n=0}^{\infty}$. For each $n$ and a state $z \in S_n$, let $v_n^*$ be an optimal control minimizing the Bellman update, which is refered to as an optimal control from $z$:
\begin{align*}
v_n^* \in V^*_n&=\argmin_{v \in U}\{ G_n(z,v) + \alpha^{\Delta t_n(z)} \EE_{P_n}\left[ J_{n-1}(y) |z,v\right]\},\\
J_n(z,v_n^*)&= J_n^*(z)=G_n(z,v_n^*) + \alpha^{\Delta t_n(z)} \EE_{P_n}\left[ J_{n-1}(y) |z,v_n^*\right], \ \ \forall v^*_n \in V^*_n.
\end{align*}
Let $\overline{v}_n$ be the best control in a sampled control set $U_n$ from $z$:
\begin{align*}
\overline{v}_n &= \argmin_{v \in U_n}\{ G_n(z,v) + \alpha^{\Delta t_n(z)} \EE_{P_n}\left[ J_{n-1}(y) |z,v\right]\}, \\
J_n(z,\overline{v}_n)&=G_n(z,\overline{v}_n) + \alpha^{\Delta t_n(z)} \EE_{P_n}\left[ J_{n-1}(y) |z,\overline{v}_n \right].
\end{align*}
Then, when $\lim_{n \rightarrow \infty} |U_n|=\infty$, we have $|J_n(z,\overline{v}_n) - J_n^*(z)| \overset {p}{\rightarrow} 0$ as $n$ approaches $\infty$, and there exists a sequence $\{ v_n^* \ | \ v_n^*  \in V^*_n \}_{n=0}^{\infty}$ such that $||\overline{v}_n - v_n^*||_2 \overset {p}{\rightarrow} 0$.
\end{lemma}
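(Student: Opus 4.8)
The plan is to observe that in this lemma the objective being minimized is \emph{the same} for $\overline{v}_n$ and for $v^*_n$ --- both use the already-fixed value function $J_{n-1}$ --- so the only randomness separating $\overline{v}_n$ from a true optimizer lives in the feasible set, not in the objective. Concretely, I would write the Bellman objective at $z$ as
$$
Q_n(v) := G_n(z,v) + \alpha^{\Delta t_n(z)}\EE_{P_n}\!\left[J_{n-1}(y)\given z,v\right],
$$
so that $V^*_n=\argmin_{v\in U}Q_n(v)$ with $J^*_n(z)=Q_n(v^*_n)$, while $\overline{v}_n=\argmin_{v\in U_n}Q_n(v)$ with $J_n(z,\overline{v}_n)=Q_n(\overline{v}_n)$. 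Since $U_n\subseteq U$, the best sampled control can never beat the global optimum, so $J_n(z,\overline{v}_n)\ge J^*_n(z)$ holds surely, and it suffices to control the one-sided gap. The whole problem thus reduces to: how likely is a uniform finite sample of $U$ to land in (or arbitrarily near) the minimizing set $V^*_n$?

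First I would exploit the stated hypothesis, which I read as supplying an open set $O_n\subseteq V^*_n$ with $m(O_n)\ge\delta$ for some $\delta>0$ uniform in $n$ (the continuity assumption on the discrete cost functions is what makes $V^*_n$ a well-defined closed set containing such a neighborhood). Because $U$ is compact with $0<m(U)<\infty$ and the $|U_n|$ controls are drawn independently and uniformly, each draw lands in $O_n$ with probability at least $p_0:=\delta/m(U)>0$, whence
$$
\PP\big(U_n\cap V^*_n=\emptyset\big)\le\PP\big(U_n\cap O_n=\emptyset\big)\le (1-p_0)^{|U_n|}\xrightarrow{|U_n|\to\infty}0.
$$
On the complementary event $E_n=\{U_n\cap V^*_n\ne\emptyset\}$ I would pick any $v\in U_n\cap V^*_n$; then $Q_n(v)=J^*_n(z)$ forces $Q_n(\overline{v}_n)\le J^*_n(z)$, which together with the sure inequality $Q_n(\overline{v}_n)\ge J^*_n(z)$ gives $\overline{v}_n\in V^*_n$ and $J_n(z,\overline{v}_n)=J^*_n(z)$. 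Claim (i) is then immediate, since for every $\epsilon>0$, $\PP(|J_n(z,\overline{v}_n)-J^*_n(z)|>\epsilon)\le\PP(E_n^c)\to0$. For claim (ii) I would set $v^*_n:=\overline{v}_n\in V^*_n$ on $E_n$ (so $\|\overline{v}_n-v^*_n\|_2=0$) and take any $v^*_n\in V^*_n$ on $E_n^c$, where the distance is bounded by $\mathrm{diam}(U)<\infty$; then $\PP(\|\overline{v}_n-v^*_n\|_2>\epsilon)\le\PP(E_n^c)\to0$, yielding $\|\overline{v}_n-v^*_n\|_2\overset{p}{\rightarrow}0$.

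The delicate point --- and the reason the positive-measure hypothesis is stated rather than derived --- is that the uniform lower bound $m(V^*_n)\ge\delta$ must survive as $n\to\infty$. Because $\Delta t_n(z)\to0$, the $v$-dependence of $Q_n$ enters only through $g(z,v)\Delta t_n(z)$ and through the landing point $z+f(z,v)\Delta t_n(z)$ of the local transition, so $Q_n$ becomes asymptotically flat in $v$; this flatness enlarges the near-optimal region (which helps (i)) but makes the exact minimizer hard to localize, which is precisely the tension in (ii). Having a positive-measure set of \emph{exact} minimizers resolves this by letting a uniform sample coincide with a genuine optimizer with non-vanishing probability, so value-optimality and argument-optimality are attained together on $E_n$. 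Without it one would instead need a uniform sharpness estimate $\inf\{Q_n(v)-J^*_n(z):\mathrm{dist}(v,V^*_n)\ge\eta\}\ge c(\eta)>0$ to upgrade the value gap of (i) into the argument gap of (ii), and such an estimate is incompatible with the asymptotic flatness just described; this is the main obstacle the hypothesis is designed to sidestep.
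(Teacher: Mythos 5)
Your proof is correct and rests on the same probabilistic engine as the paper's --- the probability that $|U_n|$ i.i.d.\ uniform draws from $U$ all miss a target set of positive Lebesgue measure decays geometrically --- but you aim that engine at a different target, and this changes how the second claim is obtained. The paper takes as its target the set $A^n_{\epsilon}=\{v \in U :\ |J_n(z,v)-J^*_n(z)| \leq \epsilon \}$ of $\epsilon$-near-minimizers, concludes $|J_n(z,\overline{v}_n)-J^*_n(z)|\overset{p}{\rightarrow}0$ from $\PP(|J_n(z,\overline{v}_n)-J^*_n(z)|\geq\epsilon)=(1-m(A^n_\epsilon)/m(U))^{|U_n|}$, and then invokes continuity of $J_n(z,\cdot)$ to upgrade value-closeness to the existence of a nearby exact minimizer $v^*_n$. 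You instead target the exact minimizer set $V^*_n$ itself, assumed to contain a set of measure $\geq\delta$ uniformly in $n$; on the event that the sample hits $V^*_n$ you get $\overline{v}_n\in V^*_n$ outright, so both the value gap and the argument gap vanish simultaneously and the continuity step is eliminated (at the cost of a stronger hypothesis than the paper's proof actually uses, though arguably the one its preamble states: ``minima in a neighborhood of positive Lebesgue measure''). Two further remarks. First, you are right to insist on uniformity in $n$: the paper's inference ``$1-m(A^n_\epsilon)/m(U)\in[0,1)$ and $|U_n|\to\infty$ imply the limit is $0$'' is not valid without a lower bound on $m(A^n_\epsilon)$ (or $|U_n|\,m(A^n_\epsilon)\to\infty$), and your discussion of the asymptotic flatness of $Q_n$ as $\Delta t_n(z)\to 0$ correctly explains why this is harmless for claim (i) but is exactly the obstruction for claim (ii) absent the positive-measure hypothesis. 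Second, your selected sequence $v^*_n$ is random (it equals $\overline{v}_n$ on $E_n$); this matches how the lemma is consumed downstream in the paper (Lemma~\ref{lemma:controlconverge2} only needs some measurable selection from $V^*_n$ tracking $\overline{v}_n$ in probability), so it is not a defect, but it is worth stating explicitly that the ``sequence'' in the conclusion is a sequence of random variables taking values in $V^*_n$.
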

\begin{proof}
We assume that for any $\epsilon>0$, the set $A^n_{\epsilon}=\{v \in U |\ |J_n(z,v)-J^*_n(z)| \leq \epsilon \}$ has positive Lebesgue measure. That is, $m(A^n_{\epsilon})>0$ for all $\epsilon >0$ where $m$ is Lebesgue measure assigned to $U$. For any $\epsilon >0$, we have:
$$
\PP\big(\{|J_n(z,\overline{v}_n) - J^*_n(z)| \geq \epsilon \}\big)= \big( 1-m(A^n_{\epsilon})/m(U) \big)^{|U_n|}.
$$
Since $1-m(A^n_{\epsilon})/m(U)  \in [0,1)$ and $\lim_{n \rightarrow \infty}|U_n|=\infty$, we infer that:
$$
\lim_{n \rightarrow \infty} \PP\big(\{|J_n(z,\overline{v}_n) - J^*_n(z)| \geq \epsilon \}\big) = 0.
$$
Hence, we conclude that $|J_n(z,\overline{v}_n) - J^*_n(z)| \overset {p}{\rightarrow} 0$ as $n \rightarrow \infty$. Under the mild assumption that $J_n(z,v)$ is continuous on $U$ for all $z \in S_n$, thus there exists a sequence $\{ v_n^* \ | \ v_n^* \in V^*_n \}_{n=0}^{\infty}$ such that $||\overline{v}_n - v_n^*||_2 \overset {p}{\rightarrow} 0$ as $n$ approaches $\infty$. 
\qed
\end{proof}
By Lemma~\ref{lemma:bestcontrolinprob}, we conclude that $||J_n-J^*_n||_{\infty}$ converges to $0$ in probability. Thus, $J_n$ returned from the iMDP algorithm when the Bellman update is solved via sampling converges uniformly to $J^*$ in probability. We, however, claim that $J_{n,\mu_n}$ still converges pointwise to $J^*$ almost surely in the next discussion. 

\begin{lemma} \label{lemma:controlconverge2}
With the notations in Lemma \ref{lemma:bestcontrolinprob}, consider two states $\xi^n_{0}$ and $\overline{\xi}^n_0$ such that $||\overline{\xi}^n_0  - \xi^n_{0}||_2 \overset {p}{\rightarrow} 0$ as $n$ approaches $\infty$.  Let $\overline{\xi}^n_1$ be the next random state of $\overline{\xi}^n_0$ under the best sampled control $\overline{v}_n$ from $\overline{\xi}^n_0$. Then, there exists a sequence of optimal controls $v^*_n$ from $\xi^n_0$ such that $||\overline{v}_n - v_n^*||_2 \overset {p}{\rightarrow} 0$ and $||\overline{\xi}^n_1 - \xi^n_1||_2 \overset {p}{\rightarrow} 0$ as $n$ approaches $\infty$, where $\xi^n_1$ is the next random state of $\xi^n_{0}$ under the optimal control $v_n^*$ from $\xi^n_{0}$.
\end{lemma}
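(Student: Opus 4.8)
The plan is to prove the two assertions separately, treating the control-convergence claim as the substantive one and deducing the next-state convergence from local consistency. Throughout I write $V^*_n(z)$ for the optimal set of Lemma~\ref{lemma:bestcontrolinprob} with the base state $z$ made explicit.

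For the control claim I would decompose $||\overline{v}_n - v^*_n||_2$ through an intermediate optimal control anchored at the \emph{sampled} state $\overline{\xi}^n_0$. First, apply Lemma~\ref{lemma:bestcontrolinprob} to the (random) state $\overline{\xi}^n_0$: since the bound there on the probability that the best sampled control fails to be $\epsilon$-optimal, $\big(1 - m(A^n_\epsilon)/m(U)\big)^{|U_n|}$, holds uniformly over the base state while $|U_n|\to\infty$, it produces a sequence of optimal controls $\tilde v^*_n \in V^*_n(\overline{\xi}^n_0)$ with $||\overline{v}_n - \tilde v^*_n||_2 \overset{p}{\rightarrow} 0$. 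It then remains to transfer from an optimal control of $\overline{\xi}^n_0$ to one of the nearby state $\xi^n_0$, i.e.\ to exhibit $v^*_n \in V^*_n(\xi^n_0)$ with $||\tilde v^*_n - v^*_n||_2 \overset{p}{\rightarrow} 0$; the triangle inequality then closes the argument.

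For the transfer I would establish a modulus of continuity of the Bellman functional $J_n(z,v)=G_n(z,v)+\alpha^{\Delta t_n(z)}\EE_{P_n}[J_{n-1}(y)\mid z,v]$ in the state argument that does not degrade with $n$. The stage-cost part contributes $\Delta t_n\,|g(\overline{\xi}^n_0,v)-g(\xi^n_0,v)| \le \Delta t_n\,\mathcal{C}\,||\overline{\xi}^n_0-\xi^n_0||_2^{2\rho}\to 0$ by the H\"older assumption on $g$ and $\Delta t_n\to 0$; the continuation part is controlled because, by the {\tt ComputeTranProb} construction, the transition laws out of $\overline{\xi}^n_0$ and $\xi^n_0$ share mean shift $z+f(z,v)\tau$ (continuous in $z$) and support diameter $O(\tau)$, so their discrepancy integrated against the bounded function $J_{n-1}$, which is approximately continuous since $J_{n-1}\to J^*$ with $J^*$ continuous, is small when the two states are close. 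Hence $\tilde v^*_n$ is $o_p(1)$-optimal for $\xi^n_0$, and using the standing assumption that the minimizers fill a neighborhood of positive Lebesgue measure together with continuity of $J_n(\xi^n_0,\cdot)$, I would take $v^*_n$ to be a projection of $\tilde v^*_n$ onto $V^*_n(\xi^n_0)$ and conclude $||\tilde v^*_n - v^*_n||_2 \overset{p}{\rightarrow} 0$.

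For the next-state claim I would invoke local consistency directly. By Eq.~\eqref{eqn:lc4}, $\sup_{i,\omega}||\Delta\xi^n_i||_2 \to 0$ surely, so each one-step increment vanishes regardless of the control applied: $||\overline{\xi}^n_1-\overline{\xi}^n_0||_2\to 0$ and $||\xi^n_1-\xi^n_0||_2\to 0$ surely. The triangle inequality
$$
||\overline{\xi}^n_1-\xi^n_1||_2 \le ||\overline{\xi}^n_1-\overline{\xi}^n_0||_2 + ||\overline{\xi}^n_0-\xi^n_0||_2 + ||\xi^n_0-\xi^n_1||_2
$$
then yields $||\overline{\xi}^n_1-\xi^n_1||_2 \overset{p}{\rightarrow} 0$, the outer terms going to $0$ surely and the middle term to $0$ in probability by hypothesis. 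The main obstacle is the cross-state transfer of the third paragraph: because each $\mathcal{M}_n$ carries its own functional $J_n$ while both base states also move with $n$, a single application of Berge's maximum theorem is unavailable, and I must instead secure a modulus of continuity of $J_n(\cdot,v)$ uniform in $n$ and $v$, together with a well-posedness statement ensuring that near-minimizers cluster around the exact optimal set. Pinning down these uniform estimates from the H\"older continuity of $g$, the $O(\tau)$ support bound of {\tt ComputeTranProb}, and the uniform (in-probability) convergence $J_{n-1}\to J^*$ is where the real work lies.
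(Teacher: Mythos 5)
Your decomposition is the same as the paper's: pass from $\overline{v}_n$ to an optimal control $\tilde v^*_n$ at the perturbed state $\overline{\xi}^n_0$ via Lemma~\ref{lemma:bestcontrolinprob}, then transfer to an optimal control at $\xi^n_0$ and close with the triangle inequality. The genuine gap is exactly the transfer step you flag as ``where the real work lies,'' and you have not closed it. Your plan --- a modulus of continuity of $J_n(\cdot,v)$ uniform in $n$ and $v$, so that $\tilde v^*_n$ is $o_p(1)$-optimal at $\xi^n_0$ --- only controls \emph{values}; to get $||\tilde v^*_n - v^*_n||_2 \overset{p}{\rightarrow} 0$ you additionally need that near-minimizers of $J_n(\xi^n_0,\cdot)$ lie near the exact optimal set $V^*_n(\xi^n_0)$, uniformly in $n$, and this well-posedness property does not follow from continuity of $J_n(\xi^n_0,\cdot)$ alone (a sequence of continuous functions can have small oscillation in the state argument and still have $\epsilon$-argmins far from the argmin set). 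You name this requirement but do not supply it, so as written the control claim is unproved. For what it is worth, the paper does not derive it either: it closes the same step by directly \emph{assuming} that the map from states in $S_n$ to optimal controls in $U$ is continuous, which hands over precisely the selection $v^*_n \in V^*_n(\xi^n_0)$ close to the optimal control at $\overline{\xi}^n_0$ that you are trying to construct. If you want to keep your more primitive route, you must state the well-posedness condition as an explicit hypothesis (or prove it under, e.g., uniform strong convexity of $v \mapsto J_n(z,v)$ near its minimizers); otherwise adopt the paper's assumption.

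On the second claim your argument differs from the paper's and is sound for the stated conclusion: Eq.~\eqref{eqn:lc4} gives $\sup_{i,\omega}||\Delta\xi^n_i||_2 \to 0$ (with probability one over the state sampling, by Lemma~\ref{lemma:local_consistency}, not ``surely''), so both one-step increments vanish and the triangle inequality does the rest, with no reference to the controls at all. The paper instead computes the conditional means and covariances of $\overline{\xi}^n_1 - \xi^n_1$ from the local-consistency identities, passes to unconditional moments via iterated expectation and the law of total covariance, and concludes convergence in second mean. Your route is shorter; the paper's moment computation is doing extra work that the lemma statement does not strictly require but that mirrors the distribution-matching used downstream in the proof of Theorem~\ref{theoremAlmostSureConvergenceSampleControl}.
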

\begin{figure*}
\begin{center}
  \includegraphics[scale=0.25]{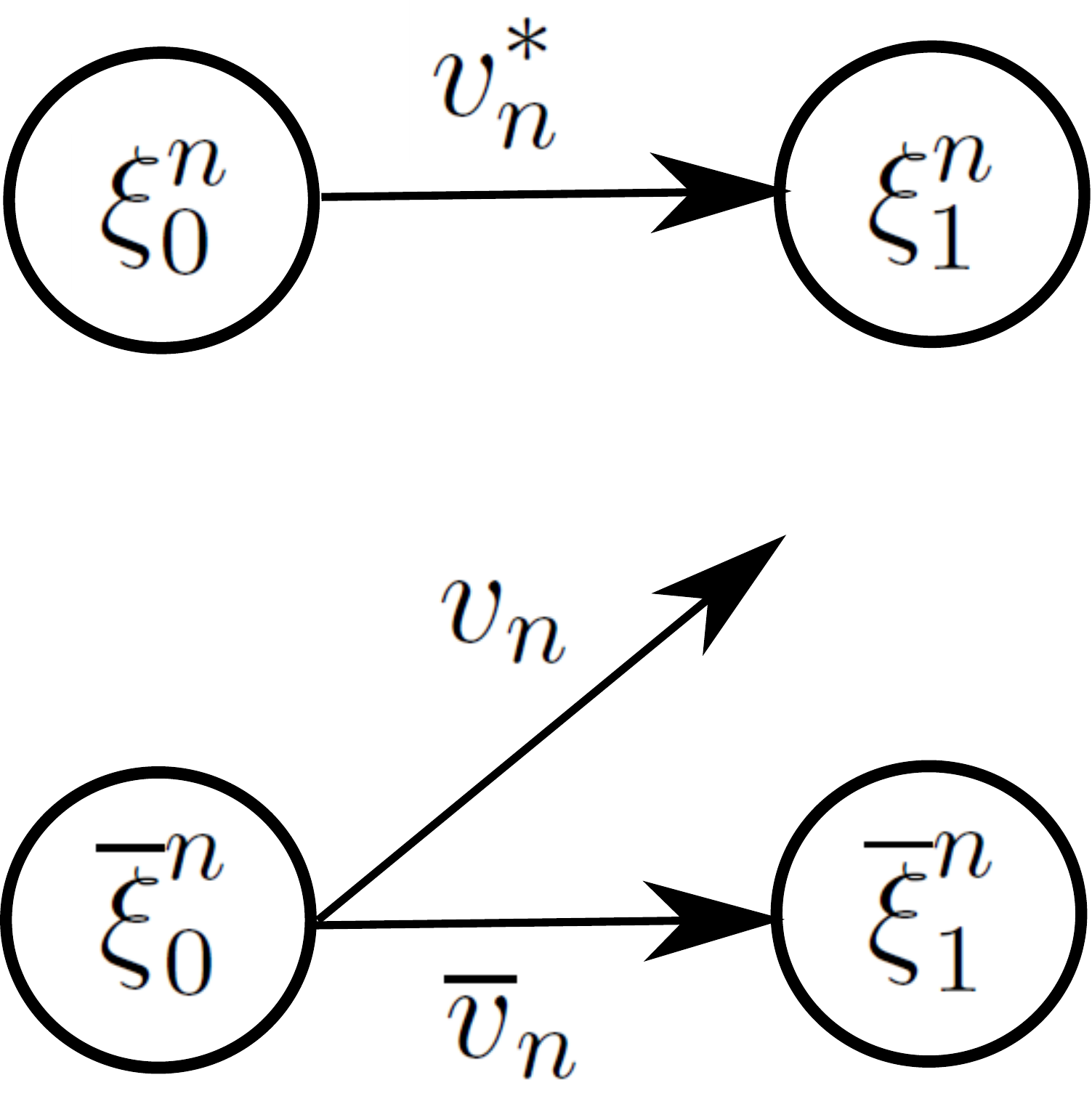}
\end{center}
  \vspace{-0.1in}
  \caption{An illustration for Lemma~\ref{lemma:controlconverge2}. We have $\overline{\xi}^n_0$ converges in probability to $\xi^n_0$. From $\xi^n_0$, the optimal control is $v^*_n$ that results in the next random state $\xi^n_1$. From $\overline{\xi}^n_0$, the optimal control and the best sampled control are $v_n$ and $\overline{v}_n$ respectively. The next random state from $\overline{\xi}^n_0$ due to the control $\overline{v}_n$ is $\overline{\xi}^n_1$.}
  \label{figSampleUDraw}
\end{figure*}
\begin{proof}
We have $\overline{v}_n$ as the best sampled control from $\overline{\xi}^n_0$. By Lemma~\ref{lemma:bestcontrolinprob}, there exists a sequence of optimal controls $v_n$ from $\overline{\xi}^n_0$ such that $||\overline{v}_n - v_n||_2 \overset {p}{\rightarrow} 0 $. 
We assume that the mapping from state space $S_n$, which is endowed with the usual Euclidean metric, to optimal controls in $U$ is continuous. As $||\overline{\xi}^n_0  - \xi^n_{0}||_2 \overset {p}{\rightarrow} 0$, there exists a sequence of optimal controls ${v^*_n}$ from $\xi^n_{0}$ such that $||v_n  - v^*_n||_2 \overset {p}{\rightarrow} 0$. Now, $||\overline{v}_n  - v_n||_2 \overset {p}{\rightarrow} 0$ and $||v_n  - v^*_n||_2 \overset {p}{\rightarrow} 0$ lead to $||\overline{v}_n  - v^*_n||_2 \overset {p}{\rightarrow} 0$ as $n \rightarrow \infty$. Figure~\ref{figSampleUDraw} illustrates how $\overline{v}_n,v_n$, and $v^*_n$ relate $\overline{\xi}^n_1$ and $\xi^n_1$.

Using the probability transition $P_n$ of the MDP ${\cal M}_n$ that is locally consistent with the original continuous system, we have:
\begin{align*}
&\EE[ \xi^n_1 \ | \ \xi^n_0,u^n_0=v^*_n] = \xi^n_0+f(\xi^n_0,v^*_n)\Delta t_n(\xi^n_0) + o(\Delta t_n(\xi^n_0)),\\
&\EE [ \overline{\xi}^n_1 \ | \ \overline{\xi}^n_0,\overline{u}^n_0=\overline{v}_n] = \overline{\xi}^n_0+f(\overline{\xi}^n_0,\overline{v}_n)\Delta t_n(\overline{\xi}^n_0) + o(\Delta t_n(\overline{\xi}^n_0)),\\
&Cov[ \xi^n_1 \ | \ \xi^n_0,u^n_0=v^*_n] = F(\xi^n_0,v^*_n)F(\xi^n_0,v^*_n)^T\Delta t_n(\xi^n_0) + o(\Delta t_n(\xi^n_0)),\\
&Cov [ \overline{\xi}^n_1 \ | \ \overline{\xi}^n_0,\overline{u}^n_0=\overline{v}_n] = F(\overline{\xi}^n_0),\overline{v}_n)F(\overline{\xi}^n_0),\overline{v}_n)^T\Delta t_n(\overline{\xi}^n_0)) + o(\Delta t_n(\overline{\xi}^n_0))),
\end{align*}
where $f(\cdot,\cdot)$ is the nominal dynamics, and $F(\cdot,\cdot)F(\cdot,\cdot)^T$ is the diffusion of the original system that are assumed to be continuous almost everywhere. We note that $\Delta t_n(\overline{\xi}^n_0) = \Delta t_n(\xi^n_0) = \gamma_t \big(\log(|S_n|)/|S_n| \big)^{\dummyasyn \dummyrate \rho / d_x}$ as $\overline{\xi}^n_0$ and $\xi^n_0$ are updated at the $n^{th}$ iteration in this context, and the holding times converge to $0$ as $n$ approaches infinity.
Therefore, when $||\overline{\xi}^n_0 - \xi^n_{0}||_2 \overset {p}{\rightarrow} 0$, $||\overline{v}_n  - v^*_n||_2 \overset {p}{\rightarrow} 0$, we have:
\begin{align}
&\EE [ \overline{\xi}^n_1 - {\xi}^n_1 \ | \ \xi^n_0,\overline{\xi}^n_0, {u}^n_0=v^*_n, \overline{u}^n_0=\overline{v}_n] \overset {p}{\rightarrow} 0, \label{eqn:mclim1}\\
&Cov(\overline{\xi}^n_1 - {\xi}^n_1 \ | \ \xi^n_0,\overline{\xi}^n_0, {u}^n_0=v^*_n, \overline{u}^n_0=\overline{v}_n) \overset {p}{\rightarrow} 0. \label{eqn:mclim2}
\end{align}
Since $\overline{\xi}^n_1$ and ${\xi}^n_1$ are bounded, the random vector $\EE [ \overline{\xi}^n_1 - {\xi}^n_1 \ | \ \xi^n_0,\overline{\xi}^n_0, {u}^n_0=v^*_n, \overline{u}^n_0=\overline{v}_n]$ and random matrix $Cov (\overline{\xi}^n_1 - {\xi}^n_1 \ | \ \xi^n_0,\overline{\xi}^n_0, {u}^n_0=v^*_n, \overline{u}^n_0=\overline{v}_n)$ are bounded. We recall that if $Y_n \overset {p}{\rightarrow} 0$, and hence $Y_n \overset {d}{\rightarrow} 0$, when $Y_n$ is bounded for all $n$, $\lim_{n \rightarrow \infty}\EE[Y_n]=0$ and $\lim_{n \rightarrow \infty}Cov(Y_n)=0$. Therefore, Eqs.~\ref{eqn:mclim1}-\ref{eqn:mclim2} imply:
\begin{align}
&\lim_{n \rightarrow \infty} \EE \Big [ \EE [ \overline{\xi}^n_1 - {\xi}^n_1 \ | \ \xi^n_0,\overline{\xi}^n_0, {u}^n_0=v^*_n, \overline{u}^n_0=\overline{v}_n] \Big ] = 0, \label{eqn:mclim3}\\
&\lim_{n \rightarrow \infty} \Cov \Big ( \EE [ \overline{\xi}^n_1 - {\xi}^n_1 \ | \ \xi^n_0,\overline{\xi}^n_0, {u}^n_0=v^*_n, \overline{u}^n_0=\overline{v}_n] \Big ) = 0, \label{eqn:mclim4}\\
&\lim_{n \rightarrow \infty} \EE \Big [ Cov(\overline{\xi}^n_1 - {\xi}^n_1 \ | \ \xi^n_0,\overline{\xi}^n_0, {u}^n_0=v^*_n, \overline{u}^n_0=\overline{v}_n) \Big ] = 0. \label{eqn:mclim5}
\end{align}
The above outer expectations and covariance are with resepect to the randomness of states $\xi^n_0$, $\overline{\xi}^n_0$ and sampled controls $U_n$.
Using the iterated expectation law for Eq.~\ref{eqn:mclim3}, we obtain:
$$
\lim_{n \rightarrow \infty} \EE [ \overline{\xi}^n_1 - \xi^n_1] = 0. 
$$
Using the law of total covariance for Eqs.~\ref{eqn:mclim4}-\ref{eqn:mclim5}, we have:
$$
\lim_{n \rightarrow \infty} Cov[ \overline{\xi}^n_1 - \xi^n_1] = 0. 
$$ 
Since
$$
\EE[||\overline{\xi}^n_1 - {\xi}^n_1||^2_2]=\EE[(\overline{\xi}^n_1 - {\xi}^n_1)^T(\overline{\xi}^n_1 - {\xi}^n_1)] = ||\EE[\overline{\xi}^n_1 - {\xi}^n_1)]||^2_2 + tr(Cov[ \overline{\xi}^n_1 - \xi^n_1]),
$$
the above limits together imply:
$$
\lim_{n \rightarrow \infty} \EE[||\overline{\xi}^n_1 - {\xi}^n_1||^2_2] =0.
$$
In other words, $\overline{\xi}^n_1$ converges in $2^{th}$-mean to $\xi^n_1$, which leads to $||\overline{\xi}^n_1 - \xi^n_1||_2 \overset {p}{\rightarrow} 0$ as $n$ approaches $\infty$.
\qed
\end{proof}
Returning to the proof of Eq.~\ref{eqn:convergesip1}, we know that $\xi^n_0=\overline{\xi}^n_0=z$ as the starting state. 
From any $y \in S_n$, an optimal control from $y$ is denoted as $v^{*}(y)$, and the best sampled control from the same state $y$ is denoted as $\overline{v}(y)$. 

By Lemma~\ref{lemma:controlconverge2}, as $\overline{u}^n_0=\overline{v}(\overline{\xi}^n_0)$, there exists $u^n_0=v^{*}(\xi^n_0)$ such that $||\overline{u}^n_0 - u^n_0||_2 \overset {p}{\rightarrow} 0$ and $||\overline{\xi}^n_1 - \xi^n_1 ||_2\overset {p}{\rightarrow} 0$. Let us assume that  $(||\overline{u}^n_{k-1} - u^n_{k-1}||_2, ||\overline{\xi}^n_k -  \xi^n_k||_2)$ converges in probability to $(0,0)$ upto index $k$. We have $\overline{u}^n_k=\overline{v}(\overline{\xi}^n_k)$. Using Lemma~\ref{lemma:controlconverge2}, there exists $u^n_{k}=v^{*}(\xi^n_k)$ such  that $(||\overline{u}^n_{k} - u^n_{k}||_2,||\overline{\xi}^n_{k+1} - \xi^n_{k+1}||_2) \overset {p}{\rightarrow} (0,0)$. Thus, for any $i \geq 1$, we can construct a minimizing control $u^n_i$ in Theorem~\ref{theoremAlmostSureConvergence} such that $(||\overline{\xi}^n_i - \xi^n_i||_2,||\overline{u}^n_i - u^n_i||_2) \overset {p}{\rightarrow} (0,0)$ as $n \rightarrow \infty$. Hence, Eq.~\ref{eqn:convergesip1} follows immediately:
$$
(\overline{\xi}^n(\cdot) - \xi^n(\cdot),\overline{u}^n(\cdot) - u^n(\cdot)) \overset {p}{\rightarrow} (0,0).
$$
We have $(\xi^n(\cdot),u^n(\cdot)) \overset {d}{\rightarrow} (x^*(\cdot),u^*(\cdot))$ w.p.1. Thus, by hierarchical convergence of random variables \cite{GrimmetProb}, we achieve
$$
(\overline{\xi}^n(\cdot),\overline{u}^n(\cdot)) \overset {d}{\rightarrow} (x^*(\cdot),u^*(\cdot)) \text{ w.p.1.}
$$ 
Therefore, for all $z \in S_n$:
$$
\lim_{n \rightarrow \infty} |J_{n,\mu_n}(z) - J^*(z)|=0 \text{ w.p.1.}
$$
\qed

\section{Proof of Theorem \ref{theoremControlQuality}}
Fix $n \in \naturals$, for all $z \in S$, and $y_n=\argmin_{z' \in S_n} ||z'-z||_2$, we have
$$
\overline{\mu}_n(z)= \mu_n(y_n).
$$
We assume that optimal policies of the original continuous problem are obtainable. By Theorems~\ref{theoremAlmostSureConvergence}-\ref{theoremAlmostSureConvergenceSampleControl}, we have:
$$
\lim_{n \rightarrow \infty} |J_{n,\mu_n}(y_n) - J^*(y_n))|=0 \text{ w.p.1.}
$$ 
Thus, $\mu_n(y_n)$ converges to $\mu^*(y_n)$ almost surely where $\mu^*$ is an optimal policy of the original continuous problem. Thus, for all $\epsilon >0$, there exists $N$ such that for all $n > N$:
$$
||\mu_n(y_n)  - \mu^*(y_n)||_2 \leq \frac{\epsilon}{2} \text{ w.p.1.}
$$
Under the assumption that $\mu^*$ is continuous at $z$, and due to $\lim_{n \rightarrow \infty} y_n =z$ almost surely, we can choose $N$ large enough such that for all $n > N$:
$$
||\mu^*(y_n)  - \mu^*(z)||_2 \leq \frac{\epsilon}{2} \text{ w.p.1.}
$$
From the above inequalities:
$$
||\mu_n(y_n)  - \mu^*(z)||_2 \leq ||\mu_n(y_n)  - \mu^*(y_n)||_2 + ||\mu^*(y_n)  - \mu^*(z)||_2 \leq \epsilon, \ \forall n>N \text{ w.p.1.}
$$
Therefore,
$$
\lim_{n \rightarrow \infty} ||\overline{\mu}_n(z)  - \mu^*(z)||_2= \lim_{n \rightarrow \infty} ||\mu_n(y_n)  - \mu^*(z)||_2 =0 \text{ w.p.1.}
$$
\qed
\end{document}